\DeclareMathOperator{\diag}{diag}
\newcommand{\nvecs}{\mathbf n^-, \mathbf n^0, \mathbf n^+}
\newcommand{\barnvecs}{\bar{\mathbf n}^-, \bar{\mathbf n}^0, \bar{\mathbf n}^+}
\newtheorem{theorem}{Theorem}
\theoremstyle{definition}
\newtheorem{definition}{Definition}
\newcommand{\widecenter}[1]{\noindent\hspace{-\textwidth}\makebox[3\textwidth][c]{#1}}
\newcommand{\includempl}[1]{\includegraphics[scale=0.78125]{#1}}
\title{On the Gaussian process limit of Bayesian Additive Regression Trees}
\author{Giacomo Petrillo\\Department of Statistics, Computer Science,
Applications\\(DISIA), University of Florence\\
\href{mailto:giacomo.petrillo@unifi.it}{giacomo.petrillo@unifi.it}}
\let\oldmarginpar\marginpar
\renewcommand{\marginpar}[1]{\oldmarginpar{\sffamily\scriptsize #1}}
\renewcommand{\marginpar}[1]{\relax} % hide marginpars
\begin{document}
    
    \maketitle
    
    \begin{abstract}
        
        Bayesian Additive Regression Trees (BART) is a nonparametric Bayesian regression technique of rising fame. It is a sum-of-decision-trees model, and is in some sense the Bayesian version of boosting. In the limit of infinite trees, it becomes equivalent to Gaussian process (GP) regression. This limit is known but has not yet led to any useful analysis or application. For the first time, I derive and compute the exact BART prior covariance function. With it I implement the infinite trees limit of BART as GP regression. Through empirical tests, I show that this limit is worse than standard BART in a fixed configuration, but also that tuning its hyperparameters in the natural GP way makes it competitive with BART. The advantage of using a GP surrogate of BART is the analytical likelihood, which simplifies model building and sidesteps the complex BART MCMC algorithm. More generally, this study opens new ways to understand and develop BART and GP regression. The implementation of BART as GP is available in the Python package \texttt{lsqfitgp}.
        
    \end{abstract}
    
    \section{Introduction}

    \paragraph{BART}
        
    Bayesian Additive Regression Trees (BART) is a nonparametric Bayesian regression method, introduced by \textcite{chipman2006,chipman2010}. It defines a prior distribution over the space of functions by representing them as a sum of binary regression trees, and then specifying a stochastic tree generation process. The posterior is then obtained with Metropolis-Gibbs sampling over the trees. See \textcite{hill2020} for a review and \textcite[ch.~5]{daniels2023} for a textbook treatment.

    \paragraph{BART's success}
    
    BART has proven empirically effective, and is gaining popularity in multiple fields \autocite[consider, e.g.,][]{tan2019}. As a single illustrative example, I consider its application in causal inference, where it was introduced by \textcite{hill2011}. In this context, BART is used for response surface methods, i.e., nonparametric modeling of the potential outcome conditional on the covariates in each treatment group. The Atlantic Causal Inference Conference (ACIC) Data Challenge has confirmed BART as one of the best methods \autocite{dorie2019,hahn2019,acic2019,thal2023}, in particular to infer heterogeneity in causal effects by subgroup. In causal inference, nonparametric models are useful because the quantity of interest is the (counterfactual) difference in some outcome variable between two (or more) treatment conditions, not the specific shape of the relationship linking covariates to outcome. In observational studies, i.e., without randomized assignment to treatment, strong unjustified functional assumptions, e.g., linear regression, can severely bias the effect estimate \autocite[\S14.7, p.~332]{imbens2015}.

    \paragraph{BART vs.\ Gaussian processes}
    
    Another general method for response surface modeling is Gaussian process (GP) regression (see \textcite{gramacy2020} for a textbook treatment and \textcite{linero2022b} for its application in causal inference), which uses as prior distribution a Gaussian process, i.e., a multivariate Normal on the unknown function values. This allows to write analytically the posterior as a linear algebra calculation, conditional on some free parameters of the covariance function, which may be more convenient compared to the Markov chain Monte Carlo (MCMC) algorithm of BART.

    The main disadvantage of GP regression is that in general the computational complexity is a hefty $O(n^2\max(n,p,d))$,\marginpar{Not sure about the $d$, maybe goes away with backprop.} with $n$ the number of datapoints, $p$ the number of predictors, and $d$ the number of free covariance function parameters; this can be alleviated with scaling techniques \autocite[ch.~9]{gramacy2020}, but not in full generality. With fixed number of MCMC samples and number of trees, BART is instead $O(n)$ \autocite[although these assumptions are too strong in practice, see][\S8, p.~25]{hill2020}.
    
    BART and GP regression are also considered different in their statistical features and performance. Typically GP regression is presented as less flexible than BART \autocite[see, e.g.,][]{jeong2023}. However, the BART prior distribution is defined by summing $m$ i.i.d.\ regression trees. Due to the central limit theorem, this implies that, as $m\to\infty$, the BART prior becomes Normal, and thus BART becomes equivalent to GP regression, with a specific covariance function \autocite[th.~1]{linero2017}. \textcite[\S2.2.5, p.~273 and fig.~6, p.~286]{chipman2010} observe empirically that, starting from $m=1$, the predictive performance drastically improves increasing $m$, until a value of $m$ in the hundredths, and then slowly worsens, but they do not investigate in depth the asymptotic $m\to\infty$ behavior.

    Thus, at the same time, there is an indicative argument that BART should be similar to a GP, but also reported experience that BART and GPs are in practice pretty different. This seeming contradiction is best exemplified in \textcite[\S5.2, p.~554]{linero2017}, who states that the BART prior covariance function can be approximated by the Laplace kernel
    \begin{equation}
        k(\mathbf x, \mathbf x') = \exp\left(
            -\sum_{i=1}^p \lambda_i |x_i - x_i'|
        \right),
    \end{equation}
    and that GP regression with this kernel is typically worse than BART. \textcite{linero2017} then hypothesizes that the reason behind this performance gap is the BART prior with a finite number of trees being more flexible than a GP prior, thus allowing BART to better adapt to the data.

    However, there could be another source for the discrepancy he observed: the Laplace kernel is only a loose approximation of the actual covariance function of BART. The present work starts from the following questions: how does BART compare with its infinite trees GP limit, taking care to compute precisely its prior covariance function? And would there be any advantages in using the GP limit in place of BART?

    \paragraph{Summary of the results}

    In \autoref{sec:limform}, I provide the first explicit expression of the BART prior covariance function, and in \autoref{sec:compcorr} a way to compute it with sufficient efficiency and accuracy to be used in practice for GP regression. In \autoref{sec:empirical} I run GP regressions with this covariance function, both on real (\autoref{sec:benchmarkmain}) and synthetic (\autoref{sec:acic}) data, comparing it against standard BART. This shows that the infinite trees limit of a given BART model is on average worse than the finite trees case. However, I also find that in practice, using the GP limit of BART for its own sake, without hewing to a strict limit of a fixed BART configuration, produces a method which is competitive with BART. This is interesting for applications because a GP formulation makes model extension and combination easier, and removes occasional MCMC headaches. Finally, in \autoref{sec:conclusions} I analyse the significance of my results and consider further research directions.

    \section{Theory}

    Sections \ref{sec:bart} and \ref{sec:gprecap} recap BART and GP regression. Section~\ref{sec:limform} presents the exact covariance function of BART derived here.
    
    \subsection{The BART prior}
    \label{sec:bart}
    
    BART, as a complete regression method, is the combination of a prior distribution over a space of regression functions, represented as an ensemble of decision trees, and a dedicated MCMC algorithm to sample from the posterior. My interest here lies mostly in the definition of the prior distribution, which I recap following the notation of \textcite{chipman2010}.
    
    \paragraph{Decision trees}

    \begin{figure}
        \widecenter{\includegraphics[width=\textwidth]{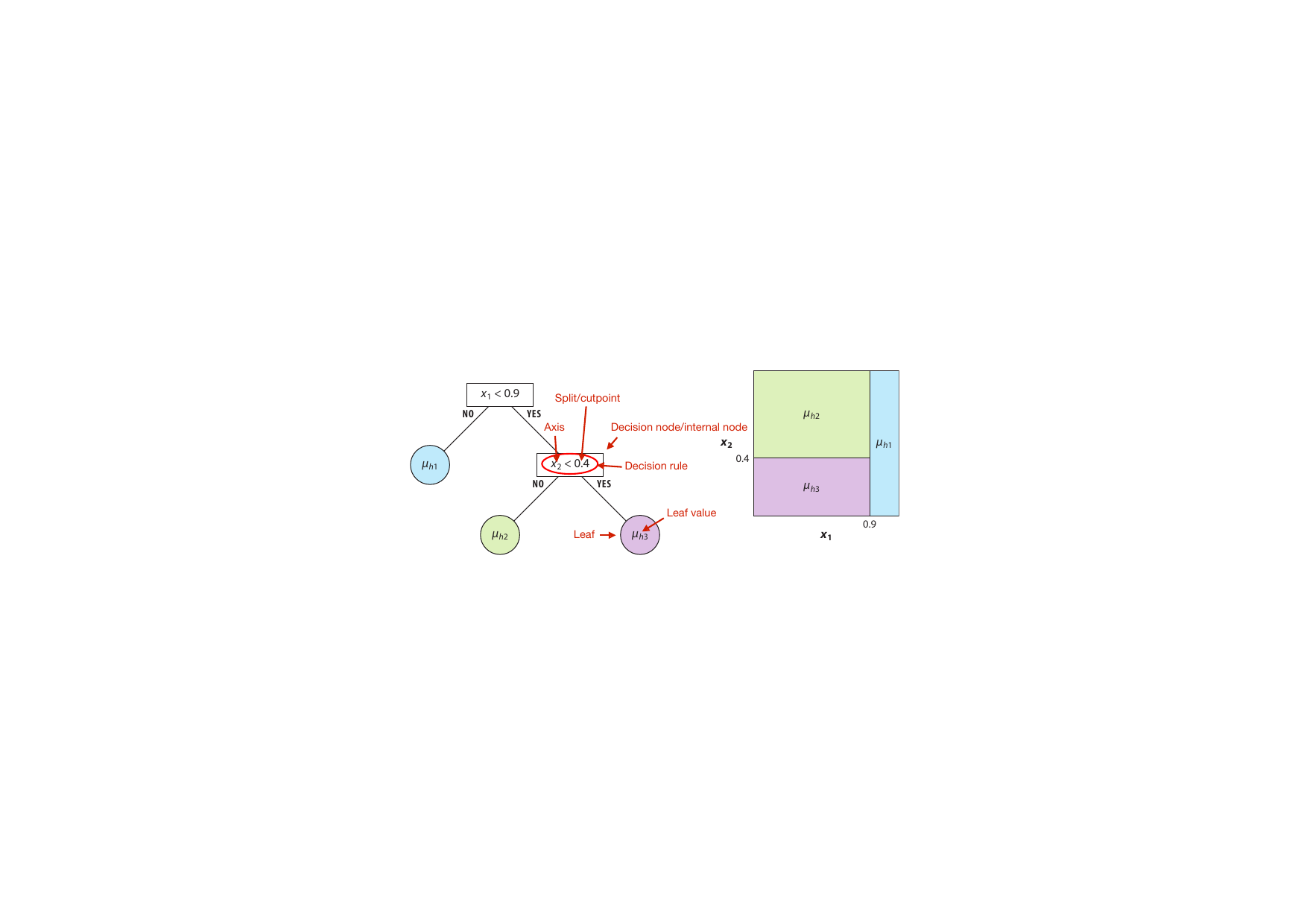}}
        \caption{\label{fig:tree} Depiction and terminology of a decision tree. Adapted from \textcite{hill2020}.}
    \end{figure}

    A decision tree is a way to represent a stepwise function $\mathbb R^p \to \mathbb R$ using a binary tree. Each non-leaf node contains a decision rule defined by a splitting axis $i$ and a splitting point $s$, partitioning the input space in two half-spaces along the plane $\{\mathbf x \in \mathbb R^p: x_i = s\}$, orthogonal to axis $i$. Each half is assigned to a children. Children nodes in turn further divide their space amongst their own children. Leaf nodes contain the constant function value to be associated to their whole assigned rectangular subregion. A minimal tree consisting of a single root leaf node represents a flat function. See \autoref{fig:tree} for an illustration.

    \paragraph{Decision trees for regression}

    A decision tree can be used to represent a parametrized family of functions for regression. The parameters are the tree structure $T$, including decision rules, and the leaf values $M$. I indicate the function represented by the tree with $g(\mathbf x;T,M)$. To use this model in Bayesian inference, it is necessary to specify a prior distribution over $(T,M)$. I factorize the prior as $p(T,M) = p(M\mid T) p(T)$ and specify the two parts separately.

    \paragraph{Prior over tree structure}

    The distribution $p(T)$ is defined by the following recursive generative model \autocite[see][]{chipman1998}. Fix an axes-aligned grid in the input space, not necessarily evenly spaced, by specifying a set of cutpoints $S_i = \{s_{i1}, \ldots, s_{in_i}\}$ for each axis $i \in \{1,\ldots,p\}$. Start generating the tree from the root, descending as nodes are added. At each new node, consider the subset of cutpoints which are available in its assigned subregion along each axis, and keep only the axes with at least one split available. If there are none, the node is a leaf. If there are any, decide with probability $P_d = \alpha/(1 + d)^\beta$ if the node is nonterminal, i.e., if it should have children, where $d$ is the depth of the node, $d=0$ at the root, and $\alpha \in [0, 1]$ and $\beta \in [0,\infty)$ are two fixed hyperparameters. If the node is nonterminal, its splitting axis is drawn from the allowed ones with a categorical distribution $\mathbf w$ (typically uniform), and the cutpoint along the chosen axis is drawn uniformly at random from the available ones.

    \paragraph{Prior over leaf values}

    If a node terminates, its leaf function value is sampled from a Normal distribution with mean $\mu_\mu$ and standard deviation $\sigma_\mu$, independently of other leaves. This defines $p(M\mid T)$.

    \paragraph{Complete regression model}

    The BART regression model uses a sum of $m$ independent trees, each with the prior above:
    \begin{align}
        y_i &= f(\mathbf x_i) + \varepsilon_i, &
        f(\mathbf x) &= \sum_{j=1}^m g(\mathbf x; T_j, M_j), \label{eq:bartdeffirst} \\
        &&
        p(\{T_j\}, \{M_j\}) &= \prod_{j=1}^m p(M_j \mid T_j) p(T_j), \\
        \varepsilon_i \mid\sigma^2 &\overset{i.i.d.}\sim \mathcal \mathcal N(0, \sigma^2), &
        \frac{\lambda\nu}{\sigma^2} &\sim \chi^2_\nu
        \text{ (for some fixed $\lambda$ and $\nu$)}. \label{eq:bartdeflast}
    \end{align}

    \paragraph{Hyperparameters}

    The free hyperparameters in the model, to be chosen by the user, are:
    \begin{itemize}

        \item The splitting grid. The two common defaults are 100 equally spaced splits along each $\mathbf x$ axis in the range of the data, or a split midway between each observed value along each $\mathbf x$ axis.
        
        \item The number of trees $m$, default 200. This should grow with the sample size, but there is no known precise scaling.
        
        \item $\alpha$ and $\beta$, regulating the depth distribution of the
        trees. Default 0.95 and 2.
        
        \item $\mu_\mu$ and $\sigma_\mu$, which set the prior mean and variance
        $m\mu_\mu$ and $m\sigma_\mu^2$ of $f(\mathbf x)$. By default set based on some measure of the location and scale of the data.
        
        \item $\nu$ and $\lambda$, which regulate the prior on the variance of the error term. By default $\nu=3$, while $\lambda$ is set to some measure of squared scale of the data.

    \end{itemize}
 
    \subsection{Gaussian process regression}
    \label{sec:gprecap}
    
    Here I briefly recap Gaussian process regression. For textbook references, see \textcite{rasmussen2006, gramacy2020, murphy2023}.

    \paragraph{Definition of Gaussian process}
    
    A stochastic process $f$ is a Gaussian process if all its finite marginals are multivariate Normals. In symbols:
    \begin{align}
        f &\sim \mathcal G \mathcal P(m(\cdot), k(\cdot, \cdot))
        \quad\text{iff} \notag \\
        \begin{pmatrix}
            f(x_1) \\ \vdots \\ f(x_n)
        \end{pmatrix} &\sim \mathcal N \left( \begin{pmatrix}
            m(x_1) \\ \vdots \\ m(x_n)
        \end{pmatrix}, \begin{pmatrix}
            k(x_1, x_1) & \cdots & k(x_1, x_n) \\
            \vdots & \ddots & \vdots \\
            k(x_n, x_1) & \cdots & k(x_n, x_n)
        \end{pmatrix}
        \right), \quad \forall \{x_1, \ldots, x_n\} \subseteq \mathcal X.
    \end{align}
    The functions $m$ and $k$ are respectively called mean function and covariance function or kernel. The domain is an arbitrary index space $\mathcal X$. If $\mathcal X = \mathbb R^p$, the kernel regulates the continuity and smoothness of the process in the mean-square sense, its correlation length, and its long-range memory.

    \paragraph{Bayesian inference with Gaussian processes}
    
    Bayesian Gaussian process regression takes advantage of the fact that the marginals are Normal, and easily computable from $m$ and $k$, to produce a posterior distribution given observed values of the process and a Gaussian process prior over it. Following the notation of \textcite{rasmussen2006}, let $f(\mathbf x) = (f(x_1), \ldots, f(x_n))$ be the observed function values, and $f(\mathbf x^*) = (f(x^*_1), \ldots, f(x^*_n))$ the unknown function values to infer. With the joint distribution written as
    \begin{align}
        \begin{pmatrix}
            f(\mathbf x) \\ f(\mathbf x^*)
        \end{pmatrix} &\sim \mathcal N \left( \begin{pmatrix}
            m(\mathbf x) \\ m(\mathbf x^*)
        \end{pmatrix}, \Sigma=\begin{pmatrix}
            \Sigma_{xx} & \Sigma_{xx^*} \\
            \Sigma_{x^*x} & \Sigma_{x^*x^*}
        \end{pmatrix} \right),
    \intertext{the posterior on $f(\mathbf x^*)$ is}
        (f(\mathbf x^*) \mid f(\mathbf x) = \mathbf y) &\sim
        \mathcal N\big(m(\mathbf x^*) + \Sigma_{x^*x} \Sigma_{xx}^-
        (\mathbf y - m(\mathbf x)), \notag \\
        &\phantom{{}\sim\mathcal N\big(}\Sigma_{x^*x^*} - \Sigma_{x^*x} \Sigma_{xx}^- \Sigma_{xx^*}\big),
        \label{eq:gpreg}
    \end{align}
    where $\Sigma_{xx}^-$ is any 1-inverse of $\Sigma_{xx}$, i.e., a matrix that satisfies $\Sigma_{xx} \Sigma_{xx}^- \Sigma_{xx} = \Sigma_{xx}$. In particular, the Moore-Penrose pseudoinverse $\Sigma_{xx}^+$ is a valid choice, and $\Sigma_{xx}^- = \Sigma_{xx}^{-1}$ if the matrix is invertible. The posterior covariance matrix $\Sigma_{x^*x^*} - \Sigma_{x^*x} \Sigma_{xx}^- \Sigma_{xx^*}$ is a generalized Schur complement, indicated with $\Sigma/\Sigma_{xx}$. For the proof of the conditioning formula see \textcite[ex.~7.4, p.~295]{schott2017}; for the general properties of the multivariate Normal, see \textcite{tong1990}.

    In the typical application there are independent Normal error terms between the latent function and the data. Since Normal distributions can be summed, the error term is absorbed into the definition of the Gaussian process.

    \paragraph{Hyperparameters}
    
    If mean and covariance function depend on additional hyperparameters $\theta$, the distribution can be set up as a hierarchical model with a marginal prior for $\theta$ and the Gaussian process being the conditional distribution given $\theta$. The posterior density on $\theta$ is then given by
    \begin{align}
        p(\theta \mid f(\mathbf x) = \mathbf y) &\propto
        \mathcal N(\mathbf y; m(\mathbf x;\theta), \Sigma_{xx}(\theta))
        p(\theta), \label{eq:thetapost} \\
    \intertext{while the posterior on $f(\mathbf x^*),\theta$ becomes}
        p(f(\mathbf x^*),\theta \mid f(\mathbf x) = \mathbf y) &=
        p(f(\mathbf x^*) \mid f(\mathbf x) = \mathbf y, \theta)
        p(\theta \mid f(\mathbf x) = \mathbf y),
    \end{align}
    with the conditional posterior $p(f(\mathbf x^*) \mid f(\mathbf x) = \mathbf y, \theta)$ given by \autoref{eq:gpreg}.

    It is common practice to find a single ``optimal'' value of $\theta$, e.g., its marginal maximum a posteriori (MAP). This is convenient, but often substantially deteriorates the quality of the predictive posterior distribution, shrinking its variance \autocite[e.g.,][fig.~18.18, p.~713]{murphy2023}. If the hyperparameters carry some meaning in the model, and inference about them is sought, a Laplace approximation of the $\theta$ posterior is usually inadequate, but may be sufficient for the predictive posterior.

    In principle it is possible to use Gaussian process regression with all hyperparameters fixed a priori. This however is a bad idea in most circumstances, because a single multivariate Normal distribution is too ``rigid'' for almost all prediction tasks.\marginpar{If someone asks for a citation for this, I may find something at the beginning of \textcite{balog2016}.}

    \paragraph{Computational speed aspects}
    
    The typical bottlenecks are, at each step of the iterative algorithm that maximizes or samples the posterior for $\theta$ in \autoref{eq:thetapost}, computing the decomposition of the prior covariance matrix $\Sigma_{xx}(\theta)$ and computing the gradient of the log-determinant term in the log-likelihood. Their computational complexities are respectively $O(n^2\max(n,p))$ and $O(n^2\max(n,d))$\marginpar{Not sure about $d$, maybe goes away with backprop.} where $p$ and $d$ are the dimensionalities of $x$ and $\theta$. As a reference, on my personal machine with $n=\num{10000}$ and 64~bit floating point, the matrix occupies \SI1{GB} of RAM, the Cholesky decomposition takes \SI2s, and diagonalization takes \SI{80}s. Due to the quadratic memory and cubic time scaling, it is challenging to significantly increase $n$ and $p$ even with larger computers. Thus one of the drawbacks of GP regression is that in general it does not scale to large datasets.
    
    \subsection{The BART prior correlation function}
    \label{sec:limform}
    
    Since BART defines a prior over functions, it has a prior covariance function, like Gaussian processes. Using the notation of Equations~\ref{eq:bartdeffirst} to~\ref{eq:bartdeflast} for $f$ and \autoref{sec:gprecap} for $k$:
    \begin{equation}
        k(\mathbf x, \mathbf x') =
            \operatorname{Cov}[f(\mathbf x), f(\mathbf x')].
    \end{equation}
    For convenience, from here onwards I fix the variance $m\sigma^2_\mu$ to 1, or, in other words, I work with the correlation function instead of the covariance function.

    The BART correlation function is equal to the probability that the two points end up in the same tree leaf under the generative definition of the prior \autocite[\S5.2]{linero2017}:
    \begin{equation}
        k(\mathbf x, \mathbf x') = P(\text{$\mathbf x$ in same leaf as $\mathbf x'$}).
    \end{equation}
    Previous literature provided expressions for this \autocites[th.~4.1]{oreilly2022}[prop.~1]{linero2017}[prop.~1]{balog2016}, but making simplyfing assumptions to ease analytical computability. The following theorem provides the first exact derivation of the BART correlation function:
    \begin{theorem}
        \label{th:corr}
        The BART prior correlation function of the latent regression function is given recursively by
        \begin{align}
            k(\mathbf x, \mathbf x') &= k_0(\nvecs), \\
            k_d(\mathbf 0, \mathbf 0, \mathbf 0) &=
            k_d((),(),()) = 1, \qquad\text{\rm(``$()$'' indicates a 0d vector)} \\
            k_d(\nvecs) &= 1 - P_d \Bigg[1 - \frac1{W(\mathbf n)}
                \sum_{\substack{i=1 \\ n_i\ne 0}}^p \frac{w_i}{n_i} \Bigg( \notag \\
                    &\qquad \sum_{k=0}^{n^-_i - 1}
                    k_{d+1}(\mathbf n^-_{n^-_i=k}, \mathbf n^0, \mathbf n^+)
                    + \sum_{k=0}^{n^+_i - 1}
                    k_{d+1}(\mathbf n^-, \mathbf n^0, \mathbf n^+_{n^+_i=k})
                \Bigg)
            \Bigg], \label{eq:corr} \\
            \mathbf n &= \mathbf n^- + \mathbf n^0 + \mathbf n^+, \\
            W(\mathbf n) &= \sum_{\substack{i=1 \\ n_i\ne 0}}^p w_i, \qquad
            \mathbf w > 0, \\
            P_d &= \frac \alpha {(1+d)^\beta},
        \end{align}
        where $\mathbf n^-$, $\mathbf n^0$, and $\mathbf n^+$ are, respectively, the number of splitting points below, between, and above the predictors vectors $\mathbf x$ and $\mathbf x'$, counted separately along each covariate dimension (see \autoref{fig:counts}), \marginpar{It could be a poor choice to have the number of splitting points with the same name as the number of datapoints, although it is not ambiguous since the first appears as $\mathbf n$ or $n_i$ while the latter as $n$.}
        \begin{figure}
            \centering\includegraphics[width=65ex]{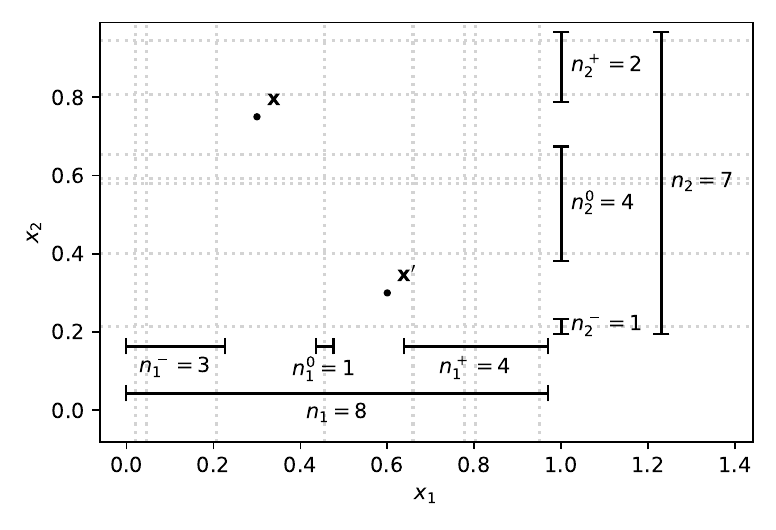}
            \caption{\label{fig:counts} Illustration of the definition of $\mathbf n^-$, $\mathbf n^0$ and $\mathbf n^+$. The dotted lines represent the predetermined splitting grid used to construct the trees.}
        \end{figure}
        while $\mathbf w$ are the (unnormalized) selection probabilities for the $p$ predictor axes. All predictors with $w_i = 0$ shall be excluded beforehand.
    \end{theorem}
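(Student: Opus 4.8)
The plan is to split the argument into two essentially independent pieces: first reduce the correlation function to a probability about a single tree, and then evaluate that probability by unrolling the recursive tree-generating process one node at a time.

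First I would handle the reduction. Since $f=\sum_{j=1}^m g(\cdot;T_j,M_j)$ is a sum of $m$ i.i.d.\ trees, bilinearity of the covariance gives $k(\mathbf x,\mathbf x')=m\operatorname{Cov}[g(\mathbf x;T_1,M_1),g(\mathbf x';T_1,M_1)]$. Conditioning on the tree structure $T_1$, the value $g(\mathbf x;T_1,M_1)$ is the leaf value $M_{1,\ell(\mathbf x)}$ at the leaf $\ell(\mathbf x)$ containing $\mathbf x$, and the leaf values are i.i.d.\ $\mathcal N(\mu_\mu,\sigma_\mu^2)$; hence $E[g(\mathbf x)\mid T_1]=\mu_\mu$, while $E[g(\mathbf x)g(\mathbf x')\mid T_1]$ equals $\mu_\mu^2+\sigma_\mu^2$ if $\ell(\mathbf x)=\ell(\mathbf x')$ and $\mu_\mu^2$ otherwise. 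Taking expectations over $T_1$ and subtracting the product of the (constant) means yields $\operatorname{Cov}[g(\mathbf x),g(\mathbf x')]=\sigma_\mu^2\,P(\ell(\mathbf x)=\ell(\mathbf x'))$, so $k(\mathbf x,\mathbf x')=m\sigma_\mu^2\,P(\text{same leaf})=P(\text{same leaf})$ under the normalization $m\sigma_\mu^2=1$. This is the identity already quoted from \textcite{linero2017}; I would include the short derivation for completeness and to show that $\mu_\mu$ plays no role.

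Next I would establish the recursion. The key is a Markov-type property of the generative model: at any node of a single tree whose rectangular cell contains both $\mathbf x$ and $\mathbf x'$, the conditional law of the subtree rooted there — hence of the event that $\mathbf x$ and $\mathbf x'$ share a leaf of that subtree — depends on the cell only through the node depth $d$ and, for each axis $i$, the numbers $n_i^-,n_i^0,n_i^+$ of still-available cutpoints lying below both points, strictly between them, and above both points. Write this probability $k_d(\nvecs)$. If $\mathbf n=\mathbf n^-+\mathbf n^0+\mathbf n^+=\mathbf 0$ there is no admissible split, the node is forced to be a leaf, and the two points share it with probability $1$ — the base case (the $0$-dimensional case being trivial). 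Otherwise the node is nonterminal with probability $P_d$, an axis $i$ with $n_i\ne0$ is chosen with probability $w_i/W(\mathbf n)$, and a cutpoint is chosen uniformly among the $n_i$ available on axis $i$. Splitting at any of the $n_i^-$ cutpoints below both points routes $\mathbf x$ and $\mathbf x'$ together into one child, whose cell has the same counts except that its number of below-cutpoints on axis $i$ is one of $0,1,\dots,n_i^--1$, each value occurring exactly once as the split cutpoint ranges over the $n_i^-$ candidates; this contributes the terms $k_{d+1}(\mathbf n^-_{n^-_i=k},\mathbf n^0,\mathbf n^+)$. Symmetrically the $n_i^+$ cutpoints above both points give the terms $k_{d+1}(\mathbf n^-,\mathbf n^0,\mathbf n^+_{n^+_i=k})$, and splitting at any of the $n_i^0$ cutpoints between the points separates them, contributing $0$ — which is why those $n_i^0$ terms are absent from the sum. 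Averaging over the cutpoint ($1/n_i$) and the axis ($w_i/W(\mathbf n)$) and combining with the terminal case (probability $1-P_d$, contribution $1$) gives exactly \autoref{eq:corr}; applying this at the root, where $d=0$ and all cutpoints are available, yields $k(\mathbf x,\mathbf x')=k_0(\nvecs)$. The recursion is well defined because every recursive call strictly decreases $\sum_i n_i$.

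The hard part will be the second step, specifically justifying that $(\nvecs)$ together with $d$ is a sufficient state: one has to observe that the cell enters the dynamics only through the set of available cutpoints, that along each axis these form a contiguous sub-block of the grid so that a below- or above-split merely truncates the block and relabels the three counts as claimed, and that conditioning on a cell containing both points makes the subtree below it independent of everything above. One must also fix the half-open conventions behind ``below'', ``between'', and ``above'' so that ``between'' cutpoints are exactly those routing $\mathbf x$ and $\mathbf x'$ to different children — this is precisely what makes the $n_i^0$ contributions vanish rather than appear with a nonzero recursive value. Once the state and these conventions are pinned down, the per-node case split, the averaging, and the leaf-value computation of the first step are routine. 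As a sanity check I would note that setting $\mathbf n^0=\mathbf 0$ (the case $\mathbf x=\mathbf x'$, or more generally $\mathbf x$ and $\mathbf x'$ in the same grid cell) collapses the recursion to the constant $1$, as it must.
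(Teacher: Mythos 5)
Your proposal is correct and follows essentially the same route as the paper's proof: reduce to a single tree by bilinearity and independence, identify the correlation with the probability that the two points share a leaf, and then unroll the generative tree process one node at a time, tracking the state $(d,\nvecs)$ and observing that only splits strictly between the points separate them. Your additional remarks (the explicit conditional-moment computation showing $\mu_\mu$ drops out, the well-foundedness of the recursion via the strict decrease of $\sum_i n_i$, and flagging the sufficiency of the state as the point needing care) are refinements of steps the paper treats more informally, not a different argument.
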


    \begin{proof}
        See \autoref{sec:corrproof}.
    \end{proof}

    For completeness I also considered the case $p = 0$, but in the following I will always assume $p \ge 1$. Note that the $\mathbf n^\pm$ vectors appearing in recursive evaluations of the $k_d$ function are different from the ones initially obtained from $\mathbf x$ and $\mathbf x'$, due to the modifications indicated by the subscripts ${}_{n^-_i=k}$ and ${}_{n^+_i=k}$. I consider arbitrary axis weights $\mathbf w$, but where not specified otherwise, in numerical experiments I will set $\mathbf w = (1, \ldots, 1)$.

    The number of trees $m$ never appears in \autoref{th:corr}: the prior distribution changes with $m$, but its covariance function stays the same.

    In the following, I always work with the recursive function $k_d$, representing the ``sub-correlation function'' of a subtree at depth $d$, instead of the final correlation function $k$. All the results apply to $k$ by considering that $k=k_0$. I also manipulate directly $P_d$ without involving $\alpha$ and $\beta$, so the results apply to any valid sequence of probabilities $P_d$. The next theorem formally characterizes the properties of $k_d$ that reflect corresponding properties of the prior:

    \begin{theorem}
        \label{th:corrprop}
        The BART sub-correlation function $k_d$ defined in \autoref{th:corr} has the following properties:
        \begin{enumerate}
            
            \item $k_d(\mathbf n^-, \mathbf 0, \mathbf n^+) = 1$. \label{it:corr1}
            
            \item Assuming $\mathbf n \ne \mathbf 0$, then $k_d(\mathbf 0, \mathbf n, \mathbf 0) = 1 - P_d$. \label{it:lower}
            
            \item $k_d \in [1 - P_d, 1]$, in particular $k_0 \ge 1 - \alpha$. \label{it:bounds}
            
            \item Assuming $P_d > 0$, then $k_d(\nvecs) = 1 \implies \mathbf n^0 = \mathbf 0$. \label{it:center}
            
            \item Assuming $P_d > 0$ and $P_{d+1} < 1$, then $k_d(\nvecs) = 1 - P_d \implies (\nvecs) = (\mathbf 0, \mathbf n, \mathbf 0)$; the conditions on $P_d$ and $P_{d+1}$ are satisfied for any $d$ if $\alpha > 0$ and $\beta < \infty$. \label{it:corner}
            
            \item Given $\mathbf n^{-\prime} \le \mathbf n^-$ and $\mathbf n^{+\prime} \le \mathbf n^+$, then $k_d(\mathbf n^{-\prime}, \mathbf n^0, \mathbf n^{+\prime}) \le k_d(\mathbf n^-, \mathbf n^0, \mathbf n^+)$. \label{it:monoext}
            
            \item Given $\mathbf n^{0\prime} \ge \mathbf n^0$, then $k_d(\mathbf n^-, \mathbf n^{0\prime}, \mathbf n^+) \le k_d(\mathbf n^-, \mathbf n^0, \mathbf n^+)$. \label{it:monoint} \marginpar{Additional properties: I guess that if $\mathbf n' = \mathbf n$, $\mathbf n^{0\prime} = \mathbf n^0$, $|\mathbf n^{-\prime} - \mathbf n^{+\prime}| \le |\mathbf n^- - \mathbf n^+|$, then $k_d(\mathbf n^{-\prime}, \mathbf n^{0\prime}, \mathbf n^{+\prime}) \ge k_d(\nvecs)$. And how does $k_d(\nvecs)$ compare to $k_d(N\mathbf n^-, N\mathbf n^0, N\mathbf n^+)$?}
            
            \item If $P_{d+1} = 0$ (satisfied by $\beta \to \infty$), and $\mathbf n \ne \mathbf 0$, then
            \begin{equation}
                k_d(\nvecs) = 1 - \frac{P_d}{W(\mathbf n)}
                \sum_{\substack{i=1 \\ n_i \ne 0}}^p w_i \frac{n^0_i}{n_i}. \notag
            \end{equation} \label{it:nointer}
            \item If $\forall d' > d : P_{d'} = 1$, which occurs if $\alpha = 1$ and $\beta = 0$, then \label{it:white}
            \begin{equation}
                k_d(\nvecs) = \begin{cases}
                    1 & \mathbf n^0 = \mathbf 0, \\
                    1 - P_d & \mathbf n^0 \ne \mathbf 0.
                \end{cases} \notag
            \end{equation}
            
        \end{enumerate}
    \end{theorem}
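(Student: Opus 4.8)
The plan is to derive all nine properties directly from the recursion in \autoref{eq:corr}, mostly by strong induction on the total cutpoint count $N:=\sum_{i=1}^p n_i$, the base cases being $\mathbf n=\mathbf 0$ and the $p=0$ configuration, both with $k_d=1$. The induction is legitimate because every $k_{d+1}$ on the right-hand side of \autoref{eq:corr} is evaluated at a configuration with strictly smaller $N$ (the substitutions $n^-_i\to k$ and $n^+_i\to k$ each lower a count). It is convenient to write the recursion, for $\mathbf n\ne\mathbf 0$, as $k_d(\nvecs)=1-P_d+P_d\,S_d(\nvecs)$, where $S_d=\tfrac1{W(\mathbf n)}\sum_{i:\,n_i\ne0}\tfrac{w_i}{n_i}B_i$ and $B_i$ is the inner bracket, a sum of $n^-_i+n^+_i$ evaluations of $k_{d+1}$; since $P_d=\alpha/(1+d)^\beta\in[0,1]$, $k_d$ is an increasing affine function of $S_d$, so it suffices to track $S_d$.

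Properties~\ref{it:corr1}--\ref{it:corner} are then short. Property~\ref{it:lower}: if $\mathbf n^0=\mathbf n$ and $\mathbf n^\pm=\mathbf 0$, every bracket $B_i$ is empty, so $S_d=0$ and $k_d=1-P_d$. Property~\ref{it:corr1}: when $\mathbf n^0=\mathbf 0$ every recursive call again has zero middle vector, so by induction all its $k_{d+1}$ equal $1$, each $B_i=n^-_i+n^+_i=n_i$, hence $S_d=1$. Property~\ref{it:bounds} is the same induction keeping track of inequalities: by induction $k_{d+1}\in[1-P_{d+1},1]\subseteq[0,1]$, so $0\le B_i\le n^-_i+n^+_i\le n_i$, giving $S_d\in[0,1]$ and $k_d\in[1-P_d,1]$; at $d=0$ this is $k_0\ge1-\alpha$. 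Properties~\ref{it:center} and~\ref{it:corner} are the equality cases: with $P_d>0$, $k_d=1$ forces $S_d=1$, and as $S_d$ is a $\mathbf w$-weighted average of the numbers $B_i/n_i\le1$ this forces $B_i/n_i=1$ for every $i$ with $n_i\ne0$, hence $n^0_i=0$, i.e.\ $\mathbf n^0=\mathbf 0$; with also $P_{d+1}<1$, $k_d=1-P_d$ forces $S_d=0$, hence every $B_i=0$, but each summand of $B_i$ is $\ge1-P_{d+1}>0$ by property~\ref{it:bounds}, so $B_i$ vanishes only by being empty, i.e.\ $n^\pm_i=0$ for every $i$ with $n_i\ne0$, which is $(\nvecs)=(\mathbf 0,\mathbf n,\mathbf 0)$. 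The stated conditions on $\alpha$ and $\beta$ are read off from $P_d=\alpha/(1+d)^\beta$.

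Properties~\ref{it:nointer} and~\ref{it:white} follow by feeding the degenerate probabilities into the recursion. If $P_{d+1}=0$ then $k_{d+1}\equiv1$, so $B_i=n^-_i+n^+_i$ and $S_d=\tfrac1{W(\mathbf n)}\sum_i w_i(1-n^0_i/n_i)$, which rearranges to the formula in property~\ref{it:nointer}. If $P_{d'}=1$ for all $d'>d$, induct on $N$ once more: for $\mathbf n^0=\mathbf 0$ use property~\ref{it:corr1}; for $\mathbf n^0\ne\mathbf 0$ every recursive call keeps the (nonzero) middle vector, so by the inductive hypothesis all its $k_{d+1}$ equal $1-P_{d+1}=0$, hence every $B_i=0$ and $k_d=1-P_d$.

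The substance is in the monotonicities~\ref{it:monoext} and~\ref{it:monoint}, and I expect property~\ref{it:monoext} to be the main obstacle. Chaining unit increments, both reduce to the effect of inserting one extra cutpoint. For property~\ref{it:monoint} (one ``between'' cutpoint on some axis $j$, i.e.\ $n^0_j$ raised by one) this is comparatively easy: the inductive hypothesis applied to the $k_{d+1}$ terms, whose middle vectors all have $n^0_j$ raised by one, makes every $B_i$ pointwise no larger; the insertion can only enlarge $W(\mathbf n)$ (and if axis $j$ was previously unavailable it now contributes an empty bracket to $S_d$) and can only shrink $w_j/n_j$, so with nonnegativity from property~\ref{it:bounds} one gets $S_d$ no larger, hence $k_d$ no larger. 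Property~\ref{it:monoext} (one ``below'', symmetrically ``above'', cutpoint on axis $j$) is harder: the recursion for the enlarged configuration contains a $k_{d+1}$ evaluated at the \emph{un}-enlarged configuration $(\mathbf n^-,\mathbf n^0,\mathbf n^+)$ itself, so it cannot be matched term by term against strictly smaller instances. The remedy is to run the induction on the size of the \emph{larger} configuration, so that the awkward inequality ``$k_{d+1}$ at a sub-configuration of $(\mathbf n^-,\mathbf n^0,\mathbf n^+)$ is $\le k_{d+1}(\mathbf n^-,\mathbf n^0,\mathbf n^+)$'' is itself an instance of property~\ref{it:monoext} at strictly smaller ambient size, hence already available. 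After matching the two recursions one is left, for the axis-$j$ contribution, with an inequality of the form $n_j\,k_{d+1}(\mathbf n^-,\mathbf n^0,\mathbf n^+)\ge B_j$ when axis $j$ was already available ($W(\mathbf n)$ unchanged, weight $w_j/n_j\to w_j/(n_j+1)$), or $W(\mathbf n)\,k_{d+1}(\mathbf n^-,\mathbf n^0,\mathbf n^+)\ge\sum_{i:\,n_i\ne0}\tfrac{w_i}{n_i}B_i$ when axis $j$ was unavailable ($W(\mathbf n)$ jumps by $w_j$); each closes by bounding every summand of $B_j$, resp.\ of each $B_i$, by $k_{d+1}(\mathbf n^-,\mathbf n^0,\mathbf n^+)$ via property~\ref{it:monoext} at smaller ambient size, using $n^-_i+n^+_i\le n_i$. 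What remains is bookkeeping of the corners: the $\mathbf n=\mathbf 0$ and $p=0$ base cases, the case $\mathbf n^0=\mathbf 0$ (where both sides are $1$ by property~\ref{it:corr1}), and the split between an axis that is already available and one that becomes available.
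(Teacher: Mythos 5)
Your proposal is correct, and for eight of the nine properties it follows essentially the same route as the paper: read each property off the recursive step, with well-founded induction on the total cutpoint count where needed (the paper phrases its inductions as ``along the recursion'' or as incrementing one component of $\mathbf n^\pm$ at a time, but the content is the same, including the weighted-average reading of the bracket used for properties~\ref{it:center} and~\ref{it:corner}). The one place where you genuinely diverge is property~\ref{it:monoext}. The paper proves it in a single inductive step: it applies the depth-$(d+1)$ hypothesis to every term, and then extends the inner summation ranges from $n^{\pm\prime}_i$ to $n^\pm_i$ while simultaneously changing the denominator from $n'_i$ to $n_i$, justifying this with the remark that $x/(1+x)$ is increasing and that the newly added terms dominate the previous ones. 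You instead decompose into unit increments of a single $n^-_j$ or $n^+_j$ and induct on the size of the \emph{larger} configuration, which isolates the self-referential term $k_{d+1}(\nvecs)$ explicitly and reduces the step to the clean inequality $B_j \le n_j\,k_{d+1}(\nvecs)$ (respectively its $W$-weighted analogue when axis $j$ becomes newly available). Your version is more granular and makes the well-foundedness and the treatment of the denominator change fully explicit, at the cost of a case split on whether axis $j$ was already available; the paper's version is shorter but leans on a terser averaging argument that mixes the $\mathbf n^-$ and $\mathbf n^+$ summands when comparing added terms to the previous ones. Both arguments are sound; yours would be the easier one to verify line by line.
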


    \begin{proof}
        See \autoref{sec:corrprop}.
    \end{proof}
    
    Property~\ref{it:corr1} is just the fact that a function value is totally correlated with itself. Properties~\ref{it:lower} and~\ref{it:bounds} reflect that BART has a random intercept term with prior variance $1 - \alpha$ due to the possibility of the trees not growing at all. Property~\ref{it:center} says that any degree of separation (w.r.t.\ the grid) allows the function values to differ, and property~\ref{it:corner} that, apart from degenerate cases, the lower bound can be reached only by maximally separated points, i.e., points that sit at completely opposite corners of the splitting grid hypercube envelope. Properties~\ref{it:monoext} and~\ref{it:monoint} mean that the correlation monotonically decreases as points get farther along each axis.
        
    Properties~\ref{it:nointer} and~\ref{it:white} show two opposite limiting cases of BART. For $\beta \to \infty$, the trees can at most reach depth~1, so each tree can split at most on one predictor. This means that the regression function has no interactions, in the usual sense that it can be written as a sum of functions each acting only on one predictor, property which is reflected by the correlation function. Instead, if $\alpha = 1$ and $\beta = 0$, the trees continue growing until exhausting all the available splits, so the process becomes white noise (in grid space). Notice that in the $\beta \to \infty$ case the process is stationary (in grid space) because it depends only on the distance $n^0_i$ between the two points.
    
    \autoref{fig:plotcov} shows the correlation function with the default values of the hyperparameters given by \textcite{chipman2010}. It's interesting to note how close the function is to the no-interaction $\beta\to\infty$ limit of property~\ref{it:nointer}, which would be similar but with a perfectly straight profile. That small amount of curvature makes the difference between a regression without interactions and a rich model like BART. This is an instance of the general observation that it is difficult to infer the properties of a stochastic process by visually looking at its correlation function, and that small numerical differences in correlation can correspond to large qualitative differences in inference. In this case, the ``straight'' $\beta\to\infty$ limit of the kernel yields a degenerate stochastic process with support over functions without additive interactions, but this degeneracy is not apparent on visual inspection of its ``tight-tent''-shaped plot, and a slight deviation from the straight shape is sufficient to bring back all the other functions in the support.
    \begin{figure}
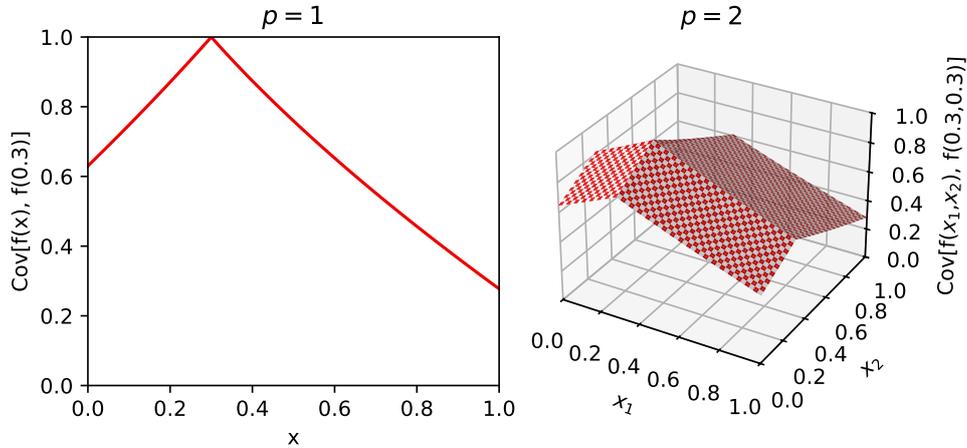

        
        \widecenter{\includempl{plotcov}}
        
        \caption{\label{fig:plotcov} Section of the BART correlation function, with $\alpha=0.95$, $\beta=2$, and $n=100$ evenly spaced splitting points between~0 and~1. Computed with $k^{2,5}_{0,1}$ (see \autoref{sec:compcorr}).}
        
    \end{figure}
        
    \section{Computation of the BART prior correlation function}
    \label{sec:compcorr}
    
    Computing exactly the BART prior correlation following \autoref{eq:corr} amounts, in words, to walking recursively through all possible trees, calculating the probability of each, and checking whether it puts the two points in different leaves. Albeit finite, the number of possible trees is huge, making the computation unfeasible.

    Analogously to the previous literature, I make simplifying assumptions that allow to compute an approximation. However, I hew close to the exact BART specification, deriving a proper approximation method that gives guaranteed bounds and converts additional computational power in arbitrary accuracy. In particular, I take these steps:
    \begin{enumerate}

        \item Limit the maximum depth of the trees (\autoref{sec:truncorr}).

        \item Instead of just an estimate, get an upper and a lower bound on the exact result by putting either an upper or a lower bound in the truncated leaves in place of the subtree computation. These bounds are themselves valid (i.e., positive semi-definite) covariance functions (\autoref{sec:truncorr}).

        \item ``Collapse'' the recursion at various depths to reduce the combinatorial explosion, in a way which again preserves positive semi-definiteness and bounding properties (\autoref{sec:ptruncorr}).

        \item Compute two steps of the recursion analytically in closed form instead of actually branching, to get two tree levels ``for free'' (\autoref{sec:special2}).

        \item Use slow-to-compute bounds to measure empirically the accuracy of a more approximate fast-to-compute estimate (\autoref{sec:finalkernel}).

    \end{enumerate}
    I now describe in detail each of these in turn.
    
    \subsection{Truncated correlation function}
    \label{sec:truncorr}
    
    As first step in constructing a computationally tractable approximation to the sub-correlation function $k_d$ (defined in \autoref{th:corr}), I truncate the trees at depth $D$, i.e., stop the recursion after $D-d$ steps. The BART prior is designed to favor shallow trees, and the well-functioning of its MCMC scheme relies on the trees being shallow, so this should give a good approximation.

    By property~\ref{it:bounds}, \autoref{th:corrprop}, $k_D \in [1 - P_D, 1]$, so I stop the recursion by replacing the term $k_D$ with $1 - (1 - \gamma) P_D$, $\gamma \in [0, 1]$, an interpolation between the lower and upper bounds. I handle separately the $\mathbf n^0=\mathbf 0$ case, since I want the function to represent a correlation function, which must yield~1 in that case.

    \begin{definition}
        \label{def:truncorr}

        The \emph{truncated sub-correlation function} $k^D_{d,\gamma}$, with $d \le D$, is defined recursively as
        \begin{align}
            k^D_{d,\gamma}(\mathbf 0, \mathbf 0, \mathbf 0) &= 1, \\
            k^D_{D,\gamma}(\mathbf n^-, \mathbf 0, \mathbf n^+) &= 1, \label{eq:n0e0base} \\
            k^D_{D,\gamma}(\mathbf n^-, \mathbf n^0\ne \mathbf0, \mathbf n^+) &= 1 - (1 - \gamma) P_D, \label{eq:n0ne0base} \\
            k^{D>d}_{d,\gamma}(\nvecs) &= 1 - P_d \Bigg[1 - \frac1{W(\mathbf n)}
                \sum_{\substack{i=1 \\ n_i\ne 0}}^p \frac{w_i}{n_i} \Bigg( \notag \\
                    &\qquad \sum_{k=0}^{n^-_i - 1}
                    k^D_{d+1,\gamma}(\mathbf n^-_{n^-_i=k}, \mathbf n^0, \mathbf n^+)
                    + \sum_{k=0}^{n^+_i - 1}
                    k^D_{d+1,\gamma}(\mathbf n^-, \mathbf n^0, \mathbf n^+_{n^+_i=k})
                \Bigg)
            \Bigg]. \label{eq:truncorr}
        \end{align}

    \end{definition}

    The following theorems show that $k^D_{d,\gamma}$ can be used to approximate or place bounds on $k_d$, with accuracy and computational cost that increase with $D$. Intuitively, the truncation of the recursion corresponds to a variant of the BART model with a limit on the depth of the trees, so $k^D_{d,\gamma}$ shares the qualitative properties of $k_d$, and is practically equivalent at high $D$. See \autoref{sec:truncproof} for the proofs.

    \begin{theorem}
        As $\gamma$ is varied in $[0,1]$, $k^D_{d,\gamma}$ linearly interpolates between a lower and upper bound on $k_d$. \label{th:truncinterp}
    \end{theorem}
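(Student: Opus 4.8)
The plan is to prove, by a single \emph{downward} induction on the depth $d$ (from $d=D$ down), two statements at once: \emph{(i)} for every fixed argument the map $\gamma\mapsto k^D_{d,\gamma}(\nvecs)$ is affine, so $k^D_{d,\gamma} = (1-\gamma)\,k^D_{d,0} + \gamma\,k^D_{d,1}$ pointwise; and \emph{(ii)} $k^D_{d,0} \le k_d \le k^D_{d,1}$ pointwise. For every $d\le D$ these together say exactly that, as $\gamma$ runs over $[0,1]$, $k^D_{d,\gamma}$ moves linearly from the lower bound $k^D_{d,0}$ to the upper bound $k^D_{d,1}$ on $k_d$, which is the theorem.

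\emph{Base case $d=D$.} If $\mathbf n^0=\mathbf 0$ (in particular if $\mathbf n=\mathbf 0$), \autoref{def:truncorr} gives $k^D_{D,\gamma}=1$, constant in $\gamma$, and property~\ref{it:corr1} of \autoref{th:corrprop} gives $k_D=1$ too; so (i) holds and (ii) holds with equality. If $\mathbf n^0\neq\mathbf 0$, then $k^D_{D,\gamma}=1-(1-\gamma)P_D=(1-P_D)+\gamma P_D$ is affine in $\gamma$ with endpoints $k^D_{D,0}=1-P_D$ and $k^D_{D,1}=1$, and property~\ref{it:bounds} states precisely $1-P_D\le k_D\le 1$.

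\emph{Inductive step.} Fix $d<D$ and assume (i)--(ii) at depth $d+1$ for all count-vector arguments. On the argument $\mathbf n=\mathbf 0$ both functions are $1$; otherwise, expanding the bracket in \autoref{eq:corr} writes $k_d$ as
\begin{align*}
    k_d(\nvecs) = {}& 1 - P_d + \frac{P_d}{W(\mathbf n)}\sum_{\substack{i=1\\ n_i\ne0}}^p \frac{w_i}{n_i}\Bigg(\sum_{k=0}^{n^-_i-1} k_{d+1}(\mathbf n^-_{n^-_i=k},\mathbf n^0,\mathbf n^+) \\
    &{}+ \sum_{k=0}^{n^+_i-1} k_{d+1}(\mathbf n^-,\mathbf n^0,\mathbf n^+_{n^+_i=k})\Bigg),
\end{align*}
and \autoref{eq:truncorr} is the very same identity with $k_{d+1}$ replaced by $k^D_{d+1,\gamma}$ and with identical coefficients. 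Thus both $k_d$ and $k^D_{d,\gamma}$ arise by applying one common map $\Phi$ to the tuple of depth-$(d+1)$ values, where $\Phi$ is affine in those values with coefficients ($P_d$, $w_i/n_i$, $1/W(\mathbf n)$, suitably combined) that depend only on $(\nvecs)$ and not on $\gamma$. Since $k^D_{d+1,\gamma}$ is affine in $\gamma$ by hypothesis~(i), the composition $k^D_{d,\gamma}=\Phi(\cdot)$ is affine in $\gamma$, giving (i) at depth $d$. Moreover $P_d\ge0$, $w_i>0$, $n_i\ge1$, and $W(\mathbf n)>0$ (because $\mathbf w>0$ and some $n_i\ne0$), so every coefficient of $\Phi$ in front of a subtree value is nonnegative and $\Phi$ is monotone nondecreasing in each of its arguments; every recursion argument is a valid count-vector triple at depth $d+1\le D$, so hypothesis~(ii) applies entrywise and monotonicity gives $k^D_{d,0}=\Phi(\{k^D_{d+1,0}\})\le\Phi(\{k_{d+1}\})=k_d\le\Phi(\{k^D_{d+1,1}\})=k^D_{d,1}$, which is (ii) at depth $d$.

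The content of the argument is the single observation that $k_d$ and $k^D_{d,\gamma}$ share the update map $\Phi$, and that $\Phi$ is simultaneously affine in its inputs (so affinity in $\gamma$ propagates) and monotone nondecreasing in them (so the sandwich propagates). The rest is bookkeeping: checking that \autoref{eq:truncorr} has exactly the recursion arguments of \autoref{eq:corr} and that these remain legitimate at depth $d+1$, and isolating the $\mathbf n^0=\mathbf 0$ and $\mathbf n=\mathbf 0$ cases so the base case matches $k_D$. I do not anticipate a real obstacle.
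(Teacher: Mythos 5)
Your proposal is correct and takes essentially the same route as the paper: the paper's proof consists of the single observation that the recursive step is linear with nonnegative coefficients in the depth-$(d+1)$ values, so the inlined expansion to depth $D$ is linear and increasing in the terms $k_D$, whence replacing $k_D$ by its bounds (or an interpolation between them) yields bounds (or an interpolation) at every $d<D$. Your two-part downward induction simply makes that composition of affine monotone maps explicit level by level, with the same base case and the same handling of the $\mathbf n^0=\mathbf 0$ special case.
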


    \begin{theorem}
        The bounding interval by truncation to $D' > D$ is contained in the one with~$D$, i.e., $[k^{D'}_{d,0}, k^{D'}_{d,1}] \subseteq [k^D_{d,0}, k^D_{d,1}]$. \label{th:filter}
    \end{theorem}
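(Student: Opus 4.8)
The plan is to prove the two endpoint inequalities $k^D_{d,0} \le k^{D'}_{d,0}$ and $k^{D'}_{d,1} \le k^D_{d,1}$ pointwise in $(\nvecs)$, which together give the asserted inclusion. First I would reduce to the single-step case $D' = D+1$: granting that $k^e_{d,0} \le k^{e+1}_{d,0}$ and $k^{e+1}_{d,1} \le k^e_{d,1}$ for every $e \ge d$, one chains these over $e = D, D+1, \dots, D'-1$ (all $\ge d$, since $d \le D$) to reach arbitrary $D' > D$. So it suffices to compare truncation at depth $D$ with truncation at depth $D+1$.

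The structural ingredient I would isolate is monotonicity of the recursion step~\eqref{eq:truncorr}. Writing its right-hand side as $1 - P_d + P_d\,\tfrac{1}{W(\mathbf n)}\sum_{i:\,n_i\ne 0}\tfrac{w_i}{n_i}(\cdots)$ exhibits the depth-$d$ value as $1-P_d$ plus a nonnegative linear combination of the depth-$(d+1)$ subtree values, since $P_d, w_i, 1/n_i, 1/W(\mathbf n) \ge 0$. Hence the map sending a depth-$(d+1)$ function to the depth-$d$ function defined by~\eqref{eq:truncorr} is order-preserving: if $a \le a'$ pointwise then the two depth-$d$ outputs are ordered the same way (the $\mathbf n = \mathbf 0$ and $\mathbf n^0 = \mathbf 0$ arguments being trivially equal to $1$ on both sides).

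With this lemma I would run a downward induction on $d$ from $D$ to $0$. For the base case $d = D$: by Equations~\eqref{eq:n0e0base}–\eqref{eq:n0ne0base}, $k^D_{D,0}(\nvecs)$ is $1$ if $\mathbf n^0 = \mathbf 0$ and $1 - P_D$ otherwise, while $k^{D+1}_{D,0}$ performs one more recursion step; since every value of $k^{D+1}_{D+1,0}$ lies in $[0,1]$ directly from \autoref{def:truncorr}, the nonnegative weighted average in~\eqref{eq:truncorr} is in $[0,1]$, so $k^{D+1}_{D,0} = 1 - P_D + P_D(\text{avg}) \ge 1 - P_D$ when $\mathbf n^0 \ne \mathbf 0$ and equals $1$ when $\mathbf n^0 = \mathbf 0$ (all recursive calls preserve $\mathbf n^0 = \mathbf 0$); hence $k^D_{D,0} \le k^{D+1}_{D,0}$, and symmetrically $k^D_{D,1} \equiv 1 \ge k^{D+1}_{D,1}$. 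For $d < D$, both $k^D_{d,\gamma}$ and $k^{D+1}_{d,\gamma}$ arise from their depth-$(d+1)$ versions through the \emph{same} recursion~\eqref{eq:truncorr}, so the monotonicity lemma propagates $k^D_{d+1,0} \le k^{D+1}_{d+1,0}$ to $k^D_{d,0} \le k^{D+1}_{d,0}$ and $k^{D+1}_{d+1,1} \le k^D_{d+1,1}$ to $k^{D+1}_{d,1} \le k^D_{d,1}$. I do not expect a genuine obstacle; the only care needed is bookkeeping — the single-step inequality must be recorded for all $d \le D$ so that the final chaining at the fixed $d$ of interest stays valid, and the $\mathbf n^0 = \mathbf 0$ argument must be handled separately (trivially), since there both truncated functions equal $1$.
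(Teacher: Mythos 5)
Your proposal is correct and follows essentially the same route as the paper: reduce to $D'=D+1$, use the fact that the recursion step depends with nonnegative coefficients on the level below (your monotonicity lemma, which the paper states in one clause and you make explicit via downward induction) to push the comparison to the bottom level $d=D$, and settle that level with the bounds $k^{D+1}_{D,\gamma}\in[1-P_D,1]$. The only cosmetic difference is that the paper invokes property~\ref{it:bounds} of \autoref{th:corrprop} for the base case where you re-derive it inline.
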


    \begin{theorem}
        $k^D_{d,\gamma}$ is a valid correlation function, and the properties of \autoref{th:corrprop} hold for $k^D_{d,\gamma}$ as well, apart when referring to a level deeper than $D$. \label{th:truncprop}
    \end{theorem}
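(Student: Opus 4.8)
The plan is to treat the two assertions separately, exploiting that the recursion \eqref{eq:truncorr} for $k^D_{d,\gamma}$ is \emph{identical} to the recursion \eqref{eq:corr} for $k_d$ at every depth $d<D$ and departs from it only through the depth-$D$ base cases \eqref{eq:n0e0base}--\eqref{eq:n0ne0base}. For validity as a correlation function, first observe that $k^D_{d,\gamma}$ is affine in $\gamma$: the parameter enters linearly only in \eqref{eq:n0ne0base} and \eqref{eq:truncorr} is affine in the child values, so a downward induction on $d$ gives $k^D_{d,\gamma}=(1-\gamma)\,k^D_{d,0}+\gamma\,k^D_{d,1}$, consistently with \autoref{th:truncinterp}. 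Since positive semi-definite kernels with unit diagonal form a convex set, it suffices to treat $\gamma\in\{0,1\}$.

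For these two cases I would exhibit $k^D_{d,\gamma}$ as the probability that $\mathbf x$ and $\mathbf x'$ fall in the same leaf under a concrete depth-limited random partition of the input space: for $\gamma=1$, run the BART tree-generation process but force every node at depth $D$ to be terminal, so $k^D_{D,1}\equiv 1$; for $\gamma=0$, run the same process except that a depth-$D$ node that ``would split'' (probability $P_D$) is instead subdivided into one leaf per atomic grid cell, which separates $\mathbf x$ from $\mathbf x'$ exactly when $\mathbf n^0\ne\mathbf 0$, reproducing \eqref{eq:n0e0base}--\eqref{eq:n0ne0base} with $\gamma=0$; in both cases, for $d<D$ the recursion of the same-leaf probability is exactly \eqref{eq:truncorr}. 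The same-leaf probability of any random partition is positive semi-definite --- the same-block indicator of a fixed partition is an inner product of block-membership feature vectors, and a mixture of positive semi-definite kernels is positive semi-definite --- and its diagonal is $1$; hence so is $k^D_{d,\gamma}$. Equivalently, one can induct downward directly on the recursion, writing each step as a convex combination of the constant kernel $1$ with restrictions of $k^D_{d+1,\gamma}$ to sub-regions, exactly as in the proof of \autoref{th:corr}.

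For the properties of \autoref{th:corrprop}, the strategy is to rerun each proof with the truncated recursion. Because \eqref{eq:truncorr} agrees with \eqref{eq:corr} for $d<D$, every inductive step carries over verbatim, so one only has to re-check the depth-$D$ base cases and the hypotheses that refer to deeper levels. The properties that hold unchanged are those whose base case is unaffected or monotone in the right direction: property~\ref{it:corr1} ($\mathbf n^0=\mathbf 0\Rightarrow k^D_{d,\gamma}=1$), since \eqref{eq:n0e0base} gives it at $d=D$ and it propagates upward; the bounds of property~\ref{it:bounds}, $k^D_{d,\gamma}\in[1-P_d,1]$ (hence $k^D_{0,\gamma}\ge 1-\alpha$), since $1-(1-\gamma)P_D\in[1-P_D,1]$ seeds the induction; and the monotonicities of properties~\ref{it:monoext} and~\ref{it:monoint}, since $k^D_{D,\gamma}$ is constant in $\mathbf n^\pm$ and nonincreasing in the occurrence of the event $\mathbf n^0\ne\mathbf 0$. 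The remaining properties need the obvious adjustment, which is the meaning of ``apart when referring to a level deeper than $D$'': property~\ref{it:lower} becomes $k^D_{D,\gamma}(\mathbf 0,\mathbf n,\mathbf 0)=1-(1-\gamma)P_D$ at $d=D$ (still $1-P_d$ for $d<D$); property~\ref{it:center} holds for $d<D$ and, at $d=D$, only when $\gamma<1$; property~\ref{it:corner} requires $P_{d+1}<1$ only for $d+1\le D$; and properties~\ref{it:nointer} and~\ref{it:white}, which hypothesize conditions on $P_{d'}$ for all $d'>d$, are to be read as conditions on $P_{d'}$ for $d<d'\le D$, there being no level beyond $D$.

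I expect the positive semi-definiteness to be essentially routine once the depth-limited generative model is written down, and the real work to be the bookkeeping in the second part: saying precisely, property by property, which hypotheses and equalities pass over verbatim, which must have $1-P_D$ replaced by $1-(1-\gamma)P_D$ at the truncation depth, and which become vacuous or must be restricted to $d'\le D$. The delicate cases are exactly properties~\ref{it:lower},~\ref{it:center},~\ref{it:corner},~\ref{it:nointer} and~\ref{it:white}, i.e.\ those making nontrivial assertions about the deepest tree levels.
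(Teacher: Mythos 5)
Your proof is correct, and the core idea --- realizing $k^D_{d,\gamma}$ as the same-leaf probability of a depth-limited variant of the tree-generating process --- is the right one, but the paper gets there by a much shorter route that you came within a hair of. The paper observes that your two generative models are just $k_d$ itself evaluated with a modified but still valid probability sequence: set $P_{d'} \mapsto 1$ for all $d' > D$ and $P_D \mapsto (1-\gamma)P_D$. By property~\ref{it:white} of \autoref{th:corrprop}, this substitution makes the exact recursion reproduce the truncated base cases \eqref{eq:n0e0base}--\eqref{eq:n0ne0base} for \emph{every} $\gamma \in [0,1]$ at once (your $\gamma=0$ ``split down to atomic cells'' model is the $P_{d'}=1$ tail, your $\gamma=1$ ``force terminal at $D$'' model is $P_D \mapsto 0$, and intermediate $\gamma$ is just a smaller splitting probability at depth $D$ --- no convexity argument needed). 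Since \autoref{th:corrprop} was stated and proven for an arbitrary valid sequence $(P_d)$, both positive semi-definiteness and all the properties then transfer in one stroke, with the caveat about levels deeper than $D$ arising exactly because the modified sequence has $P_{D+1}=1$, which voids hypotheses like $P_{d+1}<1$ in property~\ref{it:corner}. Your version instead re-runs each inductive proof against the truncated base case; that bookkeeping is sound (and your per-property adjustments match what the substitution yields, e.g.\ $1-(1-\gamma)P_D$ in place of $1-P_D$ in property~\ref{it:lower} at $d=D$, and $\gamma<1$ needed for property~\ref{it:center} at $d=D$), but it is work the single-substitution observation makes unnecessary.
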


    \subsection{Pseudo-recursive truncated correlation function}
    \label{sec:ptruncorr}

    Due to the recursive iteration over all possible decision rules, the calculation of $k^D_{d,\gamma}$ is exponential in $D$ with a high base, so it is still unfeasible but for the lowest $D$. As a further step in making the approximation faster to compute, I modify the recursion to ``restart'' at various levels: at some pre-defined depths, the arguments to the recursive invocation of the function are set to the initial arguments, instead of the modified values handed down by preceding levels.

    Computationally, this allows to share the same function value across branches of the recursion, effectively limiting the combinatorial growth to the depth spans without resets. It also allows to share terms between different levels, although the utility of this is not evident at this point of the exposition.

    \begin{definition}
        \label{def:prtruncorr}
        The \emph{pseudo-recursive truncated sub-correlation function} $k^{(D_1,\ldots,D_r)}_{d,\gamma}$, with $D_1 < \ldots < D_r$ and $d\le D_r$, is defined as $k^{D_r}_{d,\gamma}$, but in the recursion the functions $k^{(D_1,\ldots,D_r)}_{D_1,\gamma}, \ldots, k^{(D_1,\ldots,D_r)}_{D_{r-1},\gamma}$ are evaluated at the initial arguments $\nvecs$ determined using the full splitting grid, instead of the restricted ones handed down by the recursion:
        \begin{align}
            k^{(D_1,\ldots,D_r)}_{d,\gamma}(\nvecs)
            &= \tilde k^{(D_1,\ldots,D_r)}_{d,\gamma}(\nvecs;\nvecs), \\
            \tilde k^{(D_1,\ldots,D_r)}_{d,\gamma}(\mathbf 0, \mathbf 0, \mathbf 0; \barnvecs)
            &= 1, \\
            \tilde k^{(D_1,\ldots,D_r)}_{D_r,\gamma}(\mathbf n^-, \mathbf 0, \mathbf n^+; \barnvecs)
            &= 1, \label{eq:prtruncorrbasen00} \\
            \tilde k^{(D_1,\ldots,D_r)}_{D_r,\gamma}(\mathbf n^-, \mathbf n^0\ne \mathbf0, \mathbf n^+; \barnvecs)
            &= 1 - (1 - \gamma) P_{D_r}, \label{eq:prtruncorrbasen0nz} \\
            \tilde k^{(D_1,\ldots,D_r)}_{d<D_r,\gamma}(\nvecs;\barnvecs)
            &= 1 - P_d \Bigg[1 - \frac1{W(\mathbf n)}
                \sum_{\substack{i=1 \\ n_i\ne 0}}^p \frac{w_i}{n_i} \Bigg( \notag \\
                    &\hspace{-25ex} \sum_{k=0}^{n^-_i - 1} \begin{cases}
                        \tilde k^{(D_1,\ldots,D_r)}_{d+1,\gamma}(\barnvecs;\barnvecs)
                            & \text{if $d+1 \in \{D_1, \ldots, D_{r-1}\}$} \\
                        \tilde k^{(D_1,\ldots,D_r)}_{d+1,\gamma}(\mathbf n^-_{n^-_i=k}, \mathbf n^0, \mathbf n^+;\barnvecs)
                            & \text{otherwise}
                    \end{cases}
                    + {} \notag \\
                    &\hspace{-25ex} \sum_{k=0}^{n^+_i - 1} \begin{cases}
                        \tilde k^{(D_1,\ldots,D_r)}_{d+1,\gamma}(\barnvecs;\barnvecs)
                            & \text{if $d+1 \in \{D_1, \ldots, D_{r-1}\}$} \\
                        \tilde k^{(D_1,\ldots,D_r)}_{d+1,\gamma}(\mathbf n^-, \mathbf n^0, \mathbf n^+_{n^+_i=k};\barnvecs)
                            & \text{otherwise}
                    \end{cases}
                \Bigg)
            \Bigg]. \label{eq:ptruncorr}
        \end{align}
        For convenience, I define the shorthand
        \begin{equation}
            k^{D_0,r}_{d,\gamma} = k^{(D_0, 2D_0,\ldots,rD_0)}_{d,\gamma}.
        \end{equation}
    \end{definition}

    This corresponds to a modification of the BART model in which at the reset depths the decision rules are not bound to be consistent with the decision rules of ancestor nodes \autocite[which was already used more extensively in][prop.~1]{linero2017}, so the result is still a valid correlation function with the same qualitative properties, as formally stated in the next theorem.

    \begin{theorem}
        $k^{(D_1,\ldots,D_r)}_{d,\gamma}$ is a valid correlation function. \label{th:prtruncorrpos}
    \end{theorem}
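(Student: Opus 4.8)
The strategy is to realise $k^{(D_1,\ldots,D_r)}_{d,\gamma}$ as the correlation function of an honest stochastic process; since the correlation function of any random field is symmetric, positive semi-definite, and equal to~1 on the diagonal, this is exactly what ``valid correlation function'' requires. The process is the depth-$D_r$-truncated BART of \autoref{def:truncorr} with a single change. Concretely, fix $d$ and generate a tree from a root at depth~$d$ whose cell has available-cutpoint counts $(\nvecs)$ along the $p$ axes: descend using termination probabilities $P_{d'}$, at each non-terminal node selecting a splitting axis with weights $\mathbf w$ and a uniform cutpoint among those still available in the current cell, stopping the recursion at depth $D_r$, where to each still-non-terminal node we attach one independent ``virtual split'' that separates the two evaluation points with probability $(1-\gamma)P_{D_r}$ (this reproduces the interpolated leaf value $1-(1-\gamma)P_{D_r}$, exactly as in the proof of \autoref{th:truncprop}). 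The sole modification is that every node whose depth lies in $\{D_1,\ldots,D_{r-1}\}$ draws its splitting axis and cutpoint from the \emph{full original grid} — counts $(\barnvecs)$, here set equal to the root's $(\nvecs)$ — rather than from the cutpoints still available inside the cell reached so far. Leaf values are i.i.d.\ with mean~0 and variance~1, independent of the tree, and $f$ is the resulting piecewise-constant function.

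Two remarks make this legitimate. First, allowing a reset node to pick a cutpoint lying outside its ancestors' cell does no harm: such a rule merely creates an empty child and, since a cutpoint outside a box along axis~$i$ lies on one side of both coordinates $x_i,x_i'$, it never separates two points that had not already been separated; hence $f$ is a well-defined (possibly over-refined) random piecewise-constant function and ``$\mathbf x$, $\mathbf x'$ in the same leaf'' is a well-defined event. Second, for any fixed realization of the tree the matrix with entries $\mathbbm 1[\text{$\mathbf x_a$, $\mathbf x_b$ in the same leaf}]$ is, after permuting the points by leaf, block-diagonal with all-ones blocks, hence positive semi-definite; since the leaf values have mean~0 and variance~1 we get $\operatorname{Cov}[f(\mathbf x),f(\mathbf x')] = \operatorname{Corr}[f(\mathbf x),f(\mathbf x')] = P(\text{$\mathbf x$, $\mathbf x'$ same leaf})$, and taking the expectation over the tree preserves positive semi-definiteness. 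Symmetry is immediate and the diagonal is~1 because a point always shares its leaf with itself.

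It then suffices to verify the identity: the probability that $\mathbf x$ and $\mathbf x'$ land in the same leaf of a subtree rooted at depth~$d$ whose current cell has counts $(\nvecs)$ and which remembers the original counts $(\barnvecs)$ for its reset descendants equals $\tilde k^{(D_1,\ldots,D_r)}_{d,\gamma}(\nvecs;\barnvecs)$; then $k^{(D_1,\ldots,D_r)}_{d,\gamma}(\nvecs) = \tilde k^{(D_1,\ldots,D_r)}_{d,\gamma}(\nvecs;\nvecs)$ is the correlation of the process above. I would prove this by conditioning on the root of the subtree exactly as in the proof of \autoref{th:corr}: with probability $1-P_d$ it is a leaf (contribution~1); otherwise one averages over the splitting axis~$i$ (weight $w_i/W(\mathbf n)$) and the uniform cutpoint among the $n_i$ available ones, where a cutpoint in the ``between'' region (probability $n^0_i/n_i$) separates the points while each of the $n^-_i$ or $n^+_i$ outer cutpoints keeps them together and passes the correspondingly decremented counts to a child. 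The only new bookkeeping relative to \autoref{th:corr} is that a child at depth $d+1\in\{D_1,\ldots,D_{r-1}\}$ receives $(\barnvecs)$ as both its current and its remembered counts — which is precisely the case split in \autoref{eq:ptruncorr} — while the base cases \autoref{eq:prtruncorrbasen00} and \autoref{eq:prtruncorrbasen0nz} encode, respectively, that points with no cutpoint between them on any axis are never separated, and that a depth-$D_r$ non-terminal node separates them with probability $(1-\gamma)P_{D_r}$.

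Everything past the process construction is a rerun of the argument behind \autoref{th:corr} and \autoref{th:truncprop}, so the only real content — and the main, though mild, obstacle — is the reset bookkeeping: checking that ``the child sits at a reset depth'' in the generative model corresponds exactly to ``evaluate $\tilde k$ at $(\barnvecs;\barnvecs)$'' in \autoref{def:prtruncorr}, and that no non-reset node ever sees the full grid; the accompanying observation that inconsistent decision rules at reset nodes cannot merge two leaves or otherwise spoil the ``same leaf'' partition structure (made above) is what guarantees the construction still produces a positive semi-definite kernel.
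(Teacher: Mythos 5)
Your proposal is correct and takes essentially the same route as the paper: realise $k^{(D_1,\ldots,D_r)}_{d,\gamma}$ as the correlation function of a BART variant whose nodes at the reset depths re-draw decision rules from the full grid (harmlessly creating empty children), verify the recursion by the same conditioning argument as \autoref{th:corr}, and obtain the depth-$D_r$ truncation and $\gamma$-interpolation as in \autoref{th:truncprop}. The one nuance is that your ``virtual split'' at depth $D_r$, as stated, is a per-pair separation event, which does not by itself define a random partition (and hence does not automatically yield positive semi-definiteness); it should be realised as an actual process, e.g.\ via the substitution $P_{D_r}\mapsto(1-\gamma)P_{D_r}$ and $P_{d}\mapsto 1$ for $d>D_r$ that the paper uses.
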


    The following theorems show that $k^{D,r}_{d,1}$ improves on $k^D_{d,1}$ as upper bound on $k_d$. Unfortunately, $k^{D,r}_{d,0}$ is not a lower bound, because resetting the set of possible decision rules tends to increase the correlation, as the presence of logically empty leaves reduces the total effective number of leaves and thus also the probability of separating any two points.

    \begin{theorem}
        $k^{(D_1,\ldots,D_r)}_{d,1}$ is an upper bound on $k_d$. \label{th:prtruncorrup}
    \end{theorem}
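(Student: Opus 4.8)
The plan is to generalise the statement to the two-argument function $\tilde k^{(D_1,\ldots,D_r)}_{d,1}$ of \autoref{def:prtruncorr} and prove it by downward induction on the depth $d$, using the monotonicity property~\ref{it:monoext} of \autoref{th:corrprop} to absorb the effect of the resets. Fix a reference vector $\barnvecs$ and call a count triple $(\nvecs)$ \emph{admissible} if $\mathbf n^- \le \bar{\mathbf n}^-$, $\mathbf n^0 = \bar{\mathbf n}^0$, and $\mathbf n^+ \le \bar{\mathbf n}^+$. I would prove that $\tilde k^{(D_1,\ldots,D_r)}_{d,1}(\nvecs;\barnvecs) \ge k_d(\nvecs)$ for every $d \le D_r$ and every admissible $(\nvecs)$.

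The theorem then follows by taking $\barnvecs$ and $(\nvecs)$ both equal to the counts read off from $\mathbf x,\mathbf x'$, since $k^{(D_1,\ldots,D_r)}_{d,1}(\nvecs) = \tilde k^{(D_1,\ldots,D_r)}_{d,1}(\nvecs;\nvecs)$ and $k = k_0$. Admissibility is precisely the invariant preserved along the recursion: the central counts $\mathbf n^0$ are never modified by a recursive call in \autoref{eq:ptruncorr} (nor in \autoref{eq:corr}), a reset replaces the running counts by $\barnvecs$, which is admissible by reflexivity, and between resets only $\mathbf n^\pm$ can shrink.

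For the induction, the base case $d = D_r$ is immediate: $\tilde k^{(D_1,\ldots,D_r)}_{D_r,1} \equiv 1$ by \autoref{def:prtruncorr}, whereas $k_{D_r} \le 1$ by property~\ref{it:bounds}. In the inductive step at depth $d < D_r$ I would compare the bracketed quantities in \autoref{eq:ptruncorr} and \autoref{eq:corr} axis by axis, noting that they carry the same factor $P_d$ and the same weights $w_i/(n_i W(\mathbf n))$, so it suffices to bound the two inner sums for each $i$ with $n_i \ne 0$. If $d+1 \notin \{D_1,\ldots,D_{r-1}\}$, the recursive terms correspond one-to-one, each pair satisfies $\tilde k^{(D_1,\ldots,D_r)}_{d+1,1} \ge k_{d+1}$ by the inductive hypothesis (the modified counts remain admissible), so the pseudo-recursive inner sum dominates the BART one and the claim holds at depth $d$. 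If $d+1 = D_j$ is a reset depth, the two inner sums in \autoref{eq:ptruncorr} collapse to $n^-_i + n^+_i$ identical copies of $\tilde k^{(D_1,\ldots,D_r)}_{d+1,1}(\barnvecs;\barnvecs)$, while the matching sum in \autoref{eq:corr} has the same number of terms $k_{d+1}$ evaluated at admissible counts; property~\ref{it:monoext} bounds each such term by $k_{d+1}(\barnvecs)$, and the inductive hypothesis at depth $d+1$ gives $k_{d+1}(\barnvecs) \le \tilde k^{(D_1,\ldots,D_r)}_{d+1,1}(\barnvecs;\barnvecs)$, so again term by term the BART sum does not exceed the pseudo-recursive one. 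The degenerate cases $\mathbf n = \mathbf 0$ and $\mathbf n^0 = \mathbf 0$ are settled by the base clauses of the two definitions together with property~\ref{it:corr1}, both sides being~$1$.

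The main obstacle is the reset step. One must recognise that although refreshing the available splits to the full grid lowers the per-split chance of keeping the two points together, the quantity actually being compared is $k_{d+1}$ at a \emph{restricted} count versus $\tilde k^{(D_1,\ldots,D_r)}_{d+1,1}$ at the \emph{full-grid} count, and that property~\ref{it:monoext} is exactly the bridge between them, because neither resetting nor descending ever enlarges $\mathbf n^\pm$ or changes $\mathbf n^0$. Isolating this invariant, and checking that the reset sums genuinely collapse to equal summands so that the axiswise term-by-term bound is legitimate, is the only delicate point; the rest is bookkeeping with the defining recursions. As a sanity check, the sub-case $r=1$ recovers the upper-bound endpoint ($\gamma=1$) of \autoref{th:truncinterp}.
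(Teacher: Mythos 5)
Your proposal is correct and follows essentially the same route as the paper's proof: propagate an upper bound from the base case $k_{D_r}\le 1$ up through the recursion using the nonnegative dependence of each level on the next, and invoke property~\ref{it:monoext} of \autoref{th:corrprop} at the reset depths to pass from restricted counts to the full-grid counts. Your explicit admissibility invariant and term-by-term induction is just a more formal packaging of the same two ingredients the paper uses.
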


    \begin{theorem}
        $k^{(D_1,\ldots,D_r)}_{d,1} \le k^{(D_1,\ldots,D_{r'})}_{d,1}$ if $r\ge r'$, in particular $k^{D,r}_{d,1} \le k^{D,r'}_{d,1}$ and $k^{D,r}_{d,1} \le k^{D}_{d,1}$. \label{th:prtruncorrupbetter}
    \end{theorem}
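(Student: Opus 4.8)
The plan is to reduce the claim to the one-step case --- comparing $k^{(D_1,\dots,D_{s+1})}_{d,1}$ with $k^{(D_1,\dots,D_s)}_{d,1}$ --- and then telescope. Two ingredients make the one-step case work. First, the recursive map in \autoref{eq:ptruncorr} is nondecreasing in each of the child values it combines, since every coefficient $P_d\,w_i/(W(\mathbf n)\,n_i)$ is nonnegative; the argument-resets merely relabel which child value feeds which node and so preserve this monotonicity. Second, with $\gamma=1$ the terminal value $1-(1-\gamma)P_{D_r}=1$ is the largest value a correlation can take. So, intuitively, the longer sequence differs from the shorter one only in that what the shorter one records as a terminal $1$ at depth $D_s$ becomes, for the longer one, a ``reset-and-continue'' value that is a genuine correlation, hence $\le 1$; monotonicity then pushes every node value downward.

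Before the induction I would record the elementary bound $\tilde k^{(D_1,\dots,D_r)}_{d,\gamma}(\,\cdot\,;\,\cdot\,)\le 1$ for all $d$, all $\gamma$, and all pairs of argument triples. This is a downward induction on $d$: the base cases \autoref{eq:prtruncorrbasen00}--\autoref{eq:prtruncorrbasen0nz} equal $1$ or $1-(1-\gamma)P_{D_r}\le 1$, and in \autoref{eq:ptruncorr} the bracket $1-\tfrac1{W(\mathbf n)}\sum_{i:\,n_i\ne0}\tfrac{w_i}{n_i}\big(\sum_k(\cdot)+\sum_k(\cdot)\big)$ is nonnegative whenever the child values are $\le 1$, because $\sum_i\tfrac{w_i}{n_i}(n^-_i+n^+_i)\le\sum_i w_i=W(\mathbf n)$ (using $n^-_i+n^+_i\le n_i$); hence $1-P_d[\,\ge 0\,]\le 1$. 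This is the upper half of the analogue of property~\ref{it:bounds} of \autoref{th:corrprop} for the pseudo-recursive function, and is in any case subsumed by \autoref{th:prtruncorrpos}.

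Now fix $1\le s<r$ and compare the recursions for $(D_1,\dots,D_s)$ and $(D_1,\dots,D_{s+1})$. For every depth $d$ with $d+1<D_s$ the branch choice in \autoref{eq:ptruncorr} is identical for the two sequences, since $d+1\in\{D_1,\dots,D_{s-1}\}$ iff $d+1\in\{D_1,\dots,D_s\}$ (the only extra reset depth $D_s$ and the new terminal depth $D_{s+1}$ both lie at depth $\ge D_s$). At depth $d+1=D_s$ they differ: the shorter sequence terminates with the base value $1$ (as $\gamma=1$), whereas for the longer one $D_s=D_{(s+1)-1}$ is a reset depth, so the child call returns $\tilde k^{(D_1,\dots,D_{s+1})}_{D_s,1}(\barnvecs;\barnvecs)=k^{(D_1,\dots,D_{s+1})}_{D_s,1}(\barnvecs)\le 1$ by the bound just recorded. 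A downward induction on $d$ starting from $D_s$, using monotonicity of \autoref{eq:ptruncorr} at each node together with the matching of the reset and ordinary branches for $d+1<D_s$, then gives $\tilde k^{(D_1,\dots,D_{s+1})}_{d,1}(\nvecs;\barnvecs)\le\tilde k^{(D_1,\dots,D_s)}_{d,1}(\nvecs;\barnvecs)$ for all $d\le D_s$ and all arguments. Setting $\barnvecs=\nvecs$ gives $k^{(D_1,\dots,D_{s+1})}_{d,1}\le k^{(D_1,\dots,D_s)}_{d,1}$, and telescoping over $s=r',\dots,r-1$ yields $k^{(D_1,\dots,D_r)}_{d,1}\le k^{(D_1,\dots,D_{r'})}_{d,1}$ whenever $r\ge r'$. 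For the two ``in particular'' statements, take $(D_1,\dots,D_j)=(D,2D,\dots,jD)$: the first is then immediate, and the second follows since the $r=1$ pseudo-recursion introduces no resets, so $k^{D,1}_{d,1}=k^{(D)}_{d,1}=k^D_{d,1}$.

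The main obstacle is organizational rather than technical: because the two pseudo-recursive functions bottom out at different depths and the argument-resetting of \autoref{def:prtruncorr} precludes a naive ``identical recursion tree'' coupling, one must see clearly that it suffices to compare the two recursions only down to the shallower termination depth $D_s$, and that there the crude bound ``a correlation is $\le 1$'' is all that is needed. Once that reduction is made the rest is a routine monotonicity induction; the single point worth checking carefully is that the branch selection in \autoref{eq:ptruncorr} really does coincide for the two sequences at every depth strictly above $D_s$, which holds because the extra reset depth and the new terminal depth introduced by the longer sequence both sit at depth $\ge D_s$.
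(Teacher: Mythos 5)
Your proof is correct and follows essentially the same route as the paper's: both arguments observe that the two recursions coincide above the shallower terminal depth $D_{r'}$, that at that depth the shorter sequence returns the base value $1$ while the longer returns a genuine (pseudo-recursive) correlation $\le 1$, and that the nonnegative coefficients of the recursive step back-propagate the inequality upward. The only differences are organizational --- you telescope through single-step comparisons and prove the $\le 1$ bound by a self-contained induction, whereas the paper compares $r$ and $r'$ directly and cites the positive-semidefiniteness proof for that bound.
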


    For the proofs, see \autoref{sec:ptruncproof}.

    \subsection{Special-casing of one recursion level}

    Since the recursion step (Equations~\ref{eq:corr}, \ref{eq:truncorr}, \ref{eq:ptruncorr}) is arithmetically simple, expanding the recursion (i.e., plugging the recursive definition within itself one or more times) and simplifying the resulting expression looks promising as a way to make the calculation faster.
    
    For starters, I plug the base case into the recursive step: if $d = D - 1$, $k^D_{d,\gamma}$ (\autoref{def:truncorr}) reduces to
    \begin{align}
        k^D_{D-1,\gamma}(\nvecs) &= 1 - P_{D-1} \Bigg(
            1 - k^D_{D,\gamma} \Bigg(1 -
            \frac 1 {W(\mathbf n)}
            \sum_{\substack{i=1 \\ n_i\ne 0}}^p
            w_i \frac{n^0_i}{n_i} \Bigg)
        \Bigg), \label{eq:depth1} \\
        \text{with } k^D_{D,\gamma} &= \begin{cases}
            1 - (1 - \gamma) P_D & \mathbf n^0 \ne \mathbf 0, \\
            1                    & \mathbf n^0   = \mathbf 0.
        \end{cases}
    \end{align}
    It is worthwhile to look at the bounds given by $\gamma = 0, 1$ in this case:
    \begin{align}
        k_d(\nvecs) &\in [k^{d+1}_{d,0}(\nvecs), k^{d+1}_{d,1}(\nvecs)] = \notag \\
        &= \Bigg[
            1 - \frac{P_d}{W(\mathbf n)}
            \sum_{\substack{i=1 \\ n_i \ne 0}}^p
            w_i \left(
                \frac{n^0_i}{n_i} + P_{d+1} \left( 1 - \frac{n^0_i}{n_i} \right)
            \right), \notag \\
            &\phantom{{} \in \Bigg[}
            1 - \frac{P_d}{W(\mathbf n)}
            \sum_{\substack{i=1 \\ n_i \ne 0}}^p
            w_i \frac{n^0_i}{n_i}
        \Bigg]. \label{eq:bounds1}
    \end{align}
    First, note that the upper bound in \autoref{eq:bounds1} is equal to the $P_{d+1} = 0$ case of property~\ref{it:nointer}, \autoref{th:corrprop}. This means that the no-interactions limit of BART is an upper bound on the general correlation function. Second, consider the width of the interval:
    \begin{align}
        \Delta_d^{d+1}(\nvecs) = \frac{P_d P_{d+1}}{W(\mathbf n)}
        \sum_{\substack{i=1 \\ n_i \ne 0}}^p
        w_i \left( 1 - \frac{n^0_i}{n_i} \right),
    \end{align}
    where I defined $\Delta_d^D = k^D_{d,1} - k^D_{d,0}$. The term $1 - n^0_i/n_i$ is proportional to how close the two points are. This suggests that the maximum estimation error in using $k^D_{d,\gamma}$ as approximation of $k_d$ is low for correlations between faraway points and is maximum for almost overlapping points.

    \paragraph{Usage with the pseudo-recursion}

    With $k^D_{D-1,\gamma}$, I am considering an expansion only at the bottom level of the recursion. However, in the pseudo-recursive function $k^{(D_1,\ldots,D_r)}_{d,\gamma}$ (\autoref{def:prtruncorr}), it is possible to use this expansion at all intermediate reset levels $D_1,\ldots,D_{r-1}$. To see this, consider that in \autoref{eq:ptruncorr}, if $d+1 \in \{D_1,\ldots,D_{r-1}\}$, the terms of the inner summations are all identical, and so can be collected exactly like in \autoref{eq:depth1}.

    \subsection{Special-casing of two recursion levels}
    \label{sec:special2}

    The recursive step (\autoref{eq:corr}) has two stacked summations, one along predictors and one along the cutpoints. With the convention of placing cutpoints between observed predictor values, the cost of the operation (taking the subsequent levels as given) is $O(np)$. In GP regression, the correlation function is invoked to compute each entry of the $n\times n$ covariance matrix, for a total cost of $O(n^3p)$. This complexity is prohibitive because it's higher than the already expensive $O(n^2\max(n,p))$ bottleneck (see \autoref{sec:gprecap}). Thus I need to not use the recursion step even once.

    The one-level expansion (\autoref{eq:depth1}) is a sum that separates predictors, so it represents a model without interactions. This is probably too rigid as approximation, even if stacked with the pseudo-recursion as $k^{1,r}_{d,\gamma} = k^{(1,2,3,\ldots,r)}_{d,\gamma}$. So I want to ``compress'' at least two recursion levels, expanding the recursion step within itself and simplifying until I get an $O(p)$ expression. Fortunately, this is possible, as stated in the next theorem.

    \begin{theorem}
        \label{th:depth2}
        $k^D_{D-2}$ admits the following non-recursive expression:
        \newcommand{\somespace}{\hspace{-6ex}}
        \begin{align}
            k^D_{D-2}(\mathbf n^-, \mathbf 0, \mathbf n^+) &= 1, \\
            k^D_{D-2}(\mathbf n^-,
                      \underbrace{\mathbf n^0}_{\mathclap{{}\ne \mathbf 0}},
                      \mathbf n^+) &=
            1 - P_{D-2} \Bigg[1 - \frac 1 {W(\mathbf n)} \Bigg[
                (1 - P_{D-1}) S +
                P_{D-1} k^D_D
                \sum_{\substack{i=1 \\ n_i \ne0}}^p \frac {w_i} {n_i} \Bigg( \notag \\
                    & \phantom{{}+{}}
                    \left(S + w_i \frac {n^0_i} {n_i}\right)
                    \left(
                        \frac 1 {W(\mathbf n_{n^-_i=0})} +
                        \frac 1 {W(\mathbf n_{n^+_i=0})} +
                        \frac {n^-_i + n^+_i - 2} {W(\mathbf n)}
                    \right)
                    + {} \notag \\
                    & {} +
                    \frac {w_i} {W(\mathbf n_{n^-_i=0})}
                    \left(\left\{
                        n^0_i + n^+_i
                        \,\middle|\,
                        \frac {n^+_i} {n^0_i + n^+_i}
                    \right\} - 1\right) + {} \notag \\
                    & {} +
                    \frac {w_i} {W(\mathbf n_{n^+_i=0})}
                    \left(\left\{
                        n^0_i + n^-_i
                        \,\middle|\,
                        \frac {n^-_i} {n^0_i + n^-_i}
                    \right\} - 1\right)
                    + {} \notag \\
                    & {} -
                    \frac {w_i n^0_i} {W(\mathbf n)}
                    \big(
                        2\psi(n_i) -
                        \psi(1 + n^0_i + n^-_i) -
                        \psi(1 + n^0_i + n^+_i)
                    \big)
            \Bigg)\Bigg]\Bigg], \label{eq:depth2} \\
            \intertext{where}
            k^D_D &= 1 - (1 - \gamma) P_D, \\
            S &= \sum_{\substack{i=1 \\n_i \ne 0}}^p
            w_i \left(1 - \frac{n^0_i} {n_i} \right), \\
            \{ x \mid E \} &= \begin{cases}
                E & x > 0, \\
                0 & \text{$x = 0$, even if $E$ is not well defined,}
            \end{cases}
        \end{align}
        and $\psi$ is the digamma function \autocite[\S5.15]{dlmf}.
    \end{theorem}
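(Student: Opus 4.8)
The plan is to prove \autoref{eq:depth2} by expanding the recursion of \autoref{def:truncorr} once at depth $d=D-2$ and then substituting the already–closed one–level formula \autoref{eq:depth1} for the depth-$(D-1)$ values that appear. The case $\mathbf n^0=\mathbf 0$ is immediate: $k^D_{D-2}$ is a valid correlation function (\autoref{th:truncprop}), so property~\ref{it:corr1} of \autoref{th:corrprop} gives $k^D_{D-2}(\mathbf n^-,\mathbf 0,\mathbf n^+)=1$. So assume $\mathbf n^0\ne\mathbf 0$. Applying \autoref{eq:truncorr} with $d=D-2$ writes $k^D_{D-2}(\nvecs)$ in terms of the values $k^D_{D-1,\gamma}(\mathbf n^-_{n^-_i=k},\mathbf n^0,\mathbf n^+)$ and $k^D_{D-1,\gamma}(\mathbf n^-,\mathbf n^0,\mathbf n^+_{n^+_i=k})$ over $i$ with $n_i\ne 0$ and $0\le k<n^-_i$ (resp.\ $n^+_i$). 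Since these reductions never touch the middle vector, every such argument still has nonzero middle vector, so in \autoref{eq:depth1} the factor $k^D_D=1-(1-\gamma)P_D$ is the same constant for all of them, and I may write each value as $1-P_{D-1}+P_{D-1}k^D_D-P_{D-1}k^D_D\,W(\tilde{\mathbf n})^{-1}\sum_{j:\tilde n_j\ne 0}w_j n^0_j/\tilde n_j$, where $\tilde{\mathbf n}$ is the reduced total–count vector.

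Next I would split the double sum $\sum_{i:n_i\ne0}\frac{w_i}{n_i}\bigl(\sum_{k<n^-_i}(\cdot)+\sum_{k<n^+_i}(\cdot)\bigr)$ along the three pieces $1-P_{D-1}$, $P_{D-1}k^D_D$, $-P_{D-1}k^D_D\,W(\tilde{\mathbf n})^{-1}\sum_j w_j n^0_j/\tilde n_j$ of the expanded $k^D_{D-1,\gamma}$, and carry out the sums over $k$. The first two are $k$-independent and, via $\sum_{k<n^-_i}1+\sum_{k<n^+_i}1=n_i-n^0_i$ and $\sum_{i:n_i\ne0}w_i(1-n^0_i/n_i)=S$, give the terms $(1-P_{D-1})S$ and $P_{D-1}k^D_D S$. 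For the third piece, inside the sum over $j$ I would separate the $j\ne i$ summand, which is constant in $k$ and equals $(W(\mathbf n)-S)-w_i n^0_i/n_i$, from the $j=i$ summand $w_i n^0_i/(k+n^0_i+n^+_i)$ (for the $n^-$-reduction; the $n^+$-reduction is the image under exchanging $\mathbf n^-$ and $\mathbf n^+$), while simultaneously splitting the prefactor $W(\tilde{\mathbf n})^{-1}$ according to whether $k=0$ (value $W(\mathbf n_{n^-_i=0})^{-1}$, resp.\ $W(\mathbf n_{n^+_i=0})^{-1}$) or $k\ge 1$ (value $W(\mathbf n)^{-1}$), the two agreeing unless the reduction removes axis $i$ altogether, i.e.\ $n^0_i+n^+_i=0$ (resp.\ $n^0_i+n^-_i=0$). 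The constant-in-$k$ contributions then produce, after peeling the two $k=0$ terms, the factor $W(\mathbf n_{n^-_i=0})^{-1}+W(\mathbf n_{n^+_i=0})^{-1}+(n^-_i+n^+_i-2)W(\mathbf n)^{-1}$; the $j=i$, $k\ge1$ contributions give the harmonic partial sum $\sum_{k=1}^{n^-_i-1}(k+n^0_i+n^+_i)^{-1}=\psi(n_i)-\psi(1+n^0_i+n^+_i)$ and its mirror, hence the digamma term; and the $j=i$, $k=0$ boundary terms, written with the $\{\cdot\mid\cdot\}$ convention, give the remaining guarded terms. Finally, rewriting the bare coefficient $S$ as $\sum_{i:n_i\ne0}\frac{w_i}{n_i}(n^-_i+n^+_i)$ so that all the $j\ne i$ and boundary pieces live under a single $\sum_{i:n_i\ne0}\frac{w_i}{n_i}(\cdot)$, and combining the $n^-$- and $n^+$-reduction contributions, yields \autoref{eq:depth2}.

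I expect the only real difficulty to be careful bookkeeping rather than any new idea. The two genuinely delicate points are: (a) the active–axis set $\{j:\tilde n_j\ne0\}$, and hence $W(\tilde{\mathbf n})$, varies with $k$, but only through the single $k=0$ term and only when that term removes axis $i$ (which for the $n^-$-reduction happens exactly when $n^0_i+n^+_i=0$, so that $W(\mathbf n)=W(\mathbf n_{n^-_i=0})+w_i$); this is precisely why the $\{x\mid E\}$ convention is introduced, so these degenerate cases — in which also $n^0_i=0$, hence the guarded numerator vanishes too — fold into the generic expression without separate treatment; and (b) the index shift in the digamma arguments ($\psi(1+n^0_i+n^+_i)$, not $\psi(n^0_i+n^+_i)$), which is a consequence of having peeled the $k=0$ term out of the harmonic sum before evaluating it. Both are easy to keep under control with two sanity checks: that the formula collapses to $1$ when $\mathbf n^0=\mathbf 0$ (where $S=W(\mathbf n)$), that it reduces to $1-P_{D-2}(1-S/W(\mathbf n))$ when $P_{D-1}=0$, and with a brute–force numerical comparison against two unrolled steps of the recursion.
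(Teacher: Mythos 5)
Your proposal is correct and follows essentially the same route as the paper's proof: expand the recursion two levels (your substitution of the closed one-level formula is algebraically identical to the paper's double expansion), separate the $j=i$ summand from the $k$-constant $j\ne i$ part, peel off the $k=0$ terms to control $W(\mathbf n_{n^\pm_i=k})$, convert the remaining sums to digamma differences, and let the $\{x\mid E\}$ guards absorb the degenerate cases. The only detail you gloss over relative to the paper is the separate degenerate case $n^\pm_i=0$ (empty inner sum, so there is no $k=0$ term to peel), which the paper resolves by the same ``the generic expression yields zero anyway'' verification you invoke for the axis-removal case.
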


    \begin{proof}
        See \autoref{sec:depth2}.
    \end{proof}

    Like the one-level expansion of the previous section, this two-level expansion can be used at all reset depths in the pseudo-recursive version, allowing to compute $k^{2,r}_{d,\gamma} = k^{(2,4,6,\ldots,2r)}_{d,\gamma}$ without actual recursions.

    \paragraph{Computational optimizations}

    Even if \autoref{eq:depth2} is $O(p)$, in practice the computation time of the expression as written is too high because of its sheer length and the usage of a special function. A lot can be gained by carefully re-arranging the operations. The formula is typically applied to compute a covariance matrix, so there is a list of points $X = (\mathbf x_1, \ldots, \mathbf x_n)$ and the formula is evaluated at all possible pairs $(\mathbf x, \mathbf x') \in X \times X$. I state the following without proof: even though the calculation is overall $O(n^2p)$, the terms that involve only one of $\mathbf x$ or $\mathbf x'$ instead of both require only $O(np)$, and those that involve neither $O(p)$ or $O(1)$; by re-arranging the expression and re-writing it in terms of $(\mathbf x, \mathbf x')$ instead of $(\nvecs)$, only a small number of terms actually require $O(n^2p)$, and they are simple operations. This kind of optimizations is common in GP regression \autocite[see, e.g.,][\S A.2]{epperly2024}.
    
    \subsection{Final choice of estimate and its accuracy}
    \label{sec:finalkernel}

    Putting together the results of the previous sections, I have that $k^2_{0,0}$ and $k^{2,r}_{0,1}$ are efficiently computable lower and upper bounds on $k$, and are themselves valid (p.s.d.) correlation functions.

    To get a single estimate of the correlation function, I could set a constant interpolation coefficient $\gamma\in[0,1]$ to pick a value $(1-\gamma)k^2_{0,0} + \gamma k^{2,r}_{0,1}$ within the bounds. However, some numerical tests show that the upper bound $k^{2,r}_{0,1}$ is already pretty close to the true value, and that $r=5$ is sufficient to get most of the gain from increasing $r$, so to make the computation faster and simpler, I decide to use directly $k^{2,5}_{0,1}$ as estimate.

    To measure the accuracy of $k^{2,5}_{0,1}$ as estimate of $k_0$, in \autoref{sec:interpolation} I compute narrow bounding intervals at high depth (spending a lot of computation) on a range of values of hyperparameters, split configurations, and pairs of points, and compare it with the estimate to determine a maximum error. The result is that the correlation thus computed is accurate to 0.007 with $p=1$ predictors, and to 0.0005 with $p=10$, relatively to the interval $[1 - \alpha, 1]$, at the default BART hyperparameters $\alpha=0.95$, $\beta=2$. These figures are not proven bounds, because of the many arbitrary choices in this empirical verification.
    
    \paragraph{Numerical cross checks}

    There are various error-prone parts in the calculation of the correlation function, in particular all the arbitrary details of the BART model (\autoref{sec:bart}) and the cumbersome calculation of the two-level expansion (\autoref{th:depth2}). To make the results dependable, I want to be confident there is not even one mistake, not in the math nor in the implementation. I assure this in two ways. First, I check many self-consistency properties of the correlation function that do not need an oracle reference value, like the properties in \autoref{th:corrprop}. Second, I sample from the BART prior by explicitly generating the trees, compute the sample covariance matrix, and compare it with my covariance function. All these tests work out fine, see \autoref{sec:crosschecks} for the details.

    \section{GP regression with the BART kernel}
    \label{sec:empirical}

    Due to the splitting grid in predictor space being finite, the BART prior over the sum of trees is effectively a distribution over a finite-dimensional vector, where each component is the value of the regression function in one cell of the grid. The regression function is a sum of $m$ a priori i.i.d.\ terms, one per tree. Then, due to the multivariate CLT \autocite[16]{vandervaart1998}, as the number of trees tends to infinity, the prior converges to a multivariate Normal distribution, and so regression with the BART model becomes equivalent to GP regression.
    
    Using the efficiently computable approximation $k^{2,5}_{0,1}$ of the BART prior correlation function derived in the previous section, I can implement BART with an infinite number of trees as a GP regression. This is interesting both as an exploration of the BART model, and as a potential surrogate to use in place of BART.

    In \autoref{sec:benchmarkmain}, I compare the predictive performance of BART against the one of its infinite trees limit, in various configurations. In \autoref{sec:acic}, I use the GP surrogate to replace the BART components of BCF, a causal inference model, and compare its accuracy at inferring an average causal effect against other competitors in a simulated data competition.

    \subsection{Out-of-sample predictive accuracy on real data}
    \label{sec:benchmarkmain}

    I take the 42 datasets of the original BART article \autocite{chipman2010, kim2007}, and compare the predictions of various regression methods on held-out test sets, with 20 random 5:1 train/test splits for each dataset. The dataset sizes range approximately from $n\approx 100$ to $n\approx 5000$, and the number of features from $p=3$ to $p\approx 70$ (with dummies).

    The regression methods I consider are:
    \begin{itemize}
        \item \emph{MCMC}: standard BART.
        \item \emph{MCMC-CV}: BART cross-validated as in \textcite{chipman2010}, tuning parameters $\nu$, $q$, $k$, $m$.
        \item \emph{MCMC-XCV}: BART cross-validated with the pre-packaged routine \texttt{xbart} provided by \texttt{dbarts} \autocite{dorie2024}, tuning parameters $\alpha$, $\beta$, $m$.
        \item \emph{GP}: the infinite trees limit of \emph{MCMC} as GP regression.
        \item \emph{GP-hyp}: like \emph{GP} but tuning the parameters $\alpha$, $\beta$, $k$.
        \item \emph{GP+MCMC}: standard BART with parameters set to those found by \emph{GP-hyp}.
    \end{itemize}
    I take care to set up \emph{MCMC} and \emph{GP} such that \emph{GP} is the infinite trees limit of \emph{MCMC}, with all other model details identical. I also make an effort to accurately reproduce the benchmark in \textcite{chipman2010} with \emph{MCMC-CV}. For the other methods, I take more freedom in the configuration to obtain good performance in various ways.

    I compare the methods by looking at the distribution across datasets of RMSE and log-loss for each method, shown in \autoref{fig:comparison4}. Compared to the RMSE, the log-loss is a more stringent metric that takes into account not just the central value but the complete joint distribution of the predictions, weighing together inaccuracy, overconfidence, underconfidence, and mismatch of the correlational structure.
    \begin{figure}
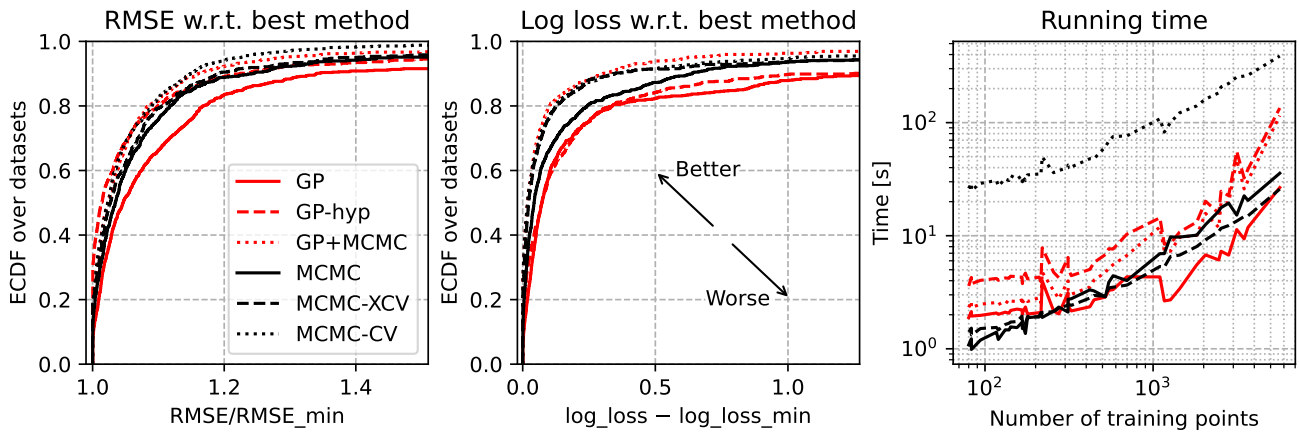

        
        \widecenter{\includempl{articleplot}}
        
        \caption{\label{fig:comparison4} Comparison between standard BART (so with a finite number of trees and implemented with MCMC), and BART with an infinite number of trees as GP regression, in various configurations. On each dataset, methods are compared on their predictions on a held-out test set. I use the RMSE and log loss ($-\log p_\text{model}(\mathbf Y_\text{test}=\mathbf y_\text{test,true}|\mathbf Y_\text{train})/n_\text{test}$) as performance metrics. For each dataset, and for each metric, I take the best-performing method, and rescale/shift the metric relative to that. The plots show the distribution of relative metrics over datasets and 20 random 5:1 train/test splits per dataset. The plots are truncated, the relative RMSE maxima range from 1.9 (\emph{MCMC-CV}) to 5.1 (\emph{GP+MCMC}), the relative log-loss maxima from 9.8 (\emph{GP+MCMC}) to 190 (\emph{GP-hyp}). The number of predictors $p$ ranges from 3 to 67. See \autoref{sec:benchmarkmain}.}
        
    \end{figure}
    I interpret the results as follows:
    \begin{itemize}
        \item \emph{MCMC} is better than \emph{GP}, so the infinite trees limit of BART, without other modifications, is worse than the original. This is loosely in line with what reported by \textcite[\S5.2, p.~554]{linero2017} (who used a looser approximation of the kernel, see \autoref{sec:laplacekernel}), with the analogous result for the GP limit of wide Neural networks \autocite[\S5]{arora2019}, and with general results on Gaussian processes (see \autoref{sec:gpbad}).

        \item \emph{GP-hyp} is overall almost as good as \emph{MCMC}. This is a fairer comparison in some sense, as tuning GP hyperparameters is standard practice. In the BART vs.\ GP discussion in \textcite[1052]{hahn2020}, they observe that choosing the right kernel in GP regression is important, that BART is a complex mixture of Gaussian processes, and that BART empirically seems to do better than GP by adapting to the data covariance structure. (In this optic, BART is already a GP regression, but with a huge number of hyperparameters.) This result suggests that tuning a few hyperparameters with the right kernel may be sufficient to make a simple GP as flexible as BART.

        \item The top performers are \emph{MCMC-CV} and \emph{GP+MCMC}. It is known that tuning the hyperparameters improves BART \autocites[fig.~2]{chipman2010}[18]{imai2022}[445]{imai2013}[\S5.4 p.~55, \S6.2 p.~58]{dorie2019}.\marginpar{Here I'd like to cite the published imai2024 instead of the arxiv imai2022, but I can't access the published version so nope.} Interestingly, these two methods tune different hyperparameters and in a different way.

        \item The fact that \emph{GP+MCMC} works well means that the hyperparameters tuned for the infinite-trees GP transfer over to the finite-trees MCMC.

        \item If I also take into account convenience, I consider \emph{MCMC-XCV} the winner, as it runs fast and mostly works out of the box with little configuration, but achieves a good log-loss profile.
    \end{itemize}

    Further investigations (see \autoref{sec:anal}) show that \emph{GP-hyp} seems to be taking advantage of tuning $\alpha$ and $\beta$, the parameters that regulate the depth of the trees, employing a wide span of settings which go from depth one (no interactions) to very deep trees. This suggests that standard BART might benefit from a similar extreme exploration of tree depth settings, although it's not granted that the behavior should transfer over. \emph{MCMC-XCV} tunes those parameters as well, in a more limited fashion, but surprisingly there is no relationship at all between the values chosen by the two methods. I also find \emph{GP-hyp} does worse, with \emph{MCMC-CV} as reference, on low noise datasets where accurate predictions are possible, and better on noisy ones.

    Overall the GP-based methods are slower than BART, taking \emph{MCMC-XCV} as the BART champion. Apart from scaling worse with sample size, which is an (in this context) inherent limit of GP regression, they are also slower at low sample size. Detailed timings of my code indicate that the bottleneck of these GP regressions is evaluating the kernel (and its derivatives w.r.t.\ hyperparameters) to compute the prior covariance matrix, which has complexity $O(n^2p)$. This is not the most common bottleneck of GP regression, it is caused by my kernel $k^{2,5}_{0,1}$ being inordinately sophisticated.
    
    See \autoref{sec:benchmark} for the complete details of the models and of the analysis.

    \subsection{Inferential accuracy on synthetic data}
    \label{sec:acic}

    I build a GP version of Bayesian Causal Forests (BCF) \autocite{hahn2020}, replacing its two BART components with GP regression with the BART kernel. BCF is a BART-based method taylored to causal inference problems, in the basic observational setting where a causal effect $Z\rightarrow Y$ is estimated by adjusting for deconfounding covariates $X$ via regressions $Z|X$ and $Y|X$.

    I test this method on the pre-made synthetic datasets of the Atlantic Causal Inference Conference (ACIC) 2022 Data Challenge \autocite{thal2023}. I use the datasets of a competition, instead of generating them myself, because I expect them to be better that what I would make, and because I can immediately compare my method against the many participants. The goal of the competition is estimating average causal effects, both grand and within subgroups, in longitudinal data with the treatment $Z$ active from a certain time onwards. The study is not randomized, but the provided covariates are guaranteed to be sufficient for adjustment.

    The results are summarized in \autoref{fig:simcompact}.
    \begin{figure}
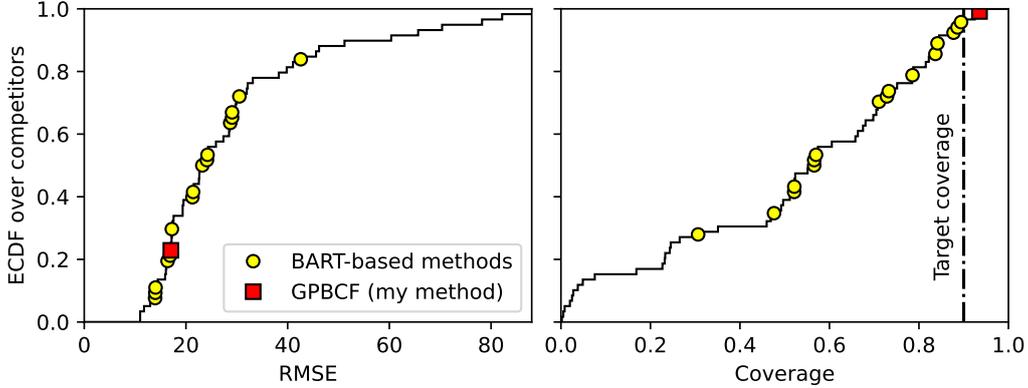

        \widecenter{\includempl{analysis_compact}}
        \caption{\label{fig:simcompact} Comparison of a GP regression method based on my BART kernel against the competitors of the ACIC 2022 Data Challenge. The task is estimating a causal effect from synthetic data. Note: I did not participate in the competition, this is a post-competition comparison. The plots show the empirical cumulative distribution function (ECDF) across competitors of two performance metrics: root mean squared error (RMSE), and coverage of a bounding interval with target confidence level \SI{90}\%, for the estimation of the grand effect. See \autoref{sec:acic}.}
    \end{figure}
    The complete details are in \autoref{sec:simdata}. Overall, my GPBCF method performs in line with the other BART-based methods, including standard BCF\marginpar{I should really run a standard BCF together, to be sure I have the same feature engineering. The BCF in the competition may be poorly set up.}, providing evidence that in practice the infinite trees limit of BART is a functional regression method.

    In particular, GPBCF excels in having coverage close to the target confidence level, and overcovering instead of undercovering like almost all other competitors. I hypothesize this is due to the fact that I take into account the uncertainty over the hyperparameters by sampling their posterior distribution.\marginpar{I could actually investigate this by not sampling the hypers after having optimized them, instead of hypothesize.} This is made easy by the GP formulation: I find the mode with BFGS and do a simple Laplace approximation. The running time of the whole procedure is about 1 minute with 1000 datapoints.

    \section{Conclusions}
    \label{sec:conclusions}

    \paragraph{Results}

    I provide the first calculation of the prior covariance function of BART. With it, I show empirically that the infinite trees Gaussian process limit of BART is not as statistically performant as the standard finite-trees BART, in line with what was expected from the literature and with the analogous result for the GP limit of Neural networks. However, if I forsake carrying out a strict limit of the BART model, and instead tune the GP limit with standard GP methodology, I get a method with similar performance. Standard BART still has an edge if properly tuned.

    \paragraph{Applications}

    My GP limit of BART can be used right away in place of BART. My Python module for GP regression, \texttt{lsqfitgp} \autocite{petrillo2024c}, provides packaged BART-like and BCF-like GP regression routines, and a function to compute the prior covariance matrix and its derivatives for arbitrary usage. The GP version of BCF is potentially the most useful as standard BCF uses a Gibbs sampler on two BART components that reportedly can get stuck.\marginpar{Ask Linero for a citation on this, although I fear there isn't any.}

    \paragraph{Pro}

    The general advantage of the GP version is having an analytical likelihood function. This simplifies extending the model arbitrarily, and allows to do away with the MCMC algorithm, making the computation more reliable and hyperparameter tuning easier. It is nontrivial to adapt BART to settings different from the basic continuous or binary regression cases, and every new version requires a dedicated effort \autocite[consider, e.g.,][]{murray2021,hahn2020}. An arbitrary model can be more quickly implemented on top of a GP version with a generic likelihood-based optimization or sampling algorithm.\marginpar{I guess there exist better examples to show as complex BART extensions.}

    \paragraph{Con}

    A disadvantage is that GP regression does not scale to large datasets, and in this particular case the bottleneck of the computation is the evaluation of the kernel, so existing agnostic techniques that accelerate the covariance matrix decomposition do not help, while approximate techniques would probably reduce the statistical performance \autocite[fig.~18.4, p.~707]{murphy2023}. There is a scaling problem also w.r.t.{} the number of predictors~$p$, as the kernel calculation unconditionally involves all predictors, while the MCMC algorithm uses a stable number of predictors per cycle. Another disadvantage is that the predictive performance that can be obtained from standard BART is on average higher.

    \paragraph{Extensions}

    There are many straightforward further developments of this work, like making the GP versions of BART variants \autocite[e.g.,][]{linero2018b}, or scaling the inference to large datasets. However, I think the most interesting result is not the specific method I obtained, but showing in general that BART---a method considered to be of a different kind than Gaussian processes, with different use cases---is instead in a practical sense similar to an appropriate GP regression. This suggests to me that the important matters to investigate are:
    \begin{itemize}
        
        \item Whether the BART prior, ostensibly a complex Normal mixture, is actually well approximated by a simple Normal mixture; and if so, if this can be changed to improve BART.
        
        \item Developing new GP kernels in general: if it's possible to almost match BART with the constraint of using its own limit kernel with just three free hyperparameters, it should be possible to surpass it (see \autoref{sec:extrapolation}).

    \end{itemize}

    \section*{Acknowledgements}
    
    I thank Hugh Chipman for providing the data files used in \textcite{chipman2010}; and Antonio Linero, Fabrizia Mealli, Alessandra Mattei, Joseph Antonelli, Jared Murray and Francesco Stingo for useful comments and suggestions. I used the following open-source scientific software: Numpy \autocite{harris2020}, Scipy \autocite{scipy}, Matplotlib \autocite{matplotlib}. I did this work as a PhD candidate at the Department of Statistics, Computer Science, Applications (DISIA) ``G.\ Parenti'' of the University of Florence (UNIFI), Italy, and as a visiting researcher at the Department of Statistics and Data Science (SDS) of the University of Texas at Austin (UT Austin), USA.

    \printbibliography[heading=bibintoc]

    \clearpage
    
    \appendix
    
    \section{Proofs}

    \subsection{Proof of \autoref{th:corr}}
    \label{sec:corrproof}
    
    I follow the notation of \textcite{chipman2010} and \autoref{sec:bart}. First observe that, since the BART prior is a sum of i.i.d.\ trees, the covariance function can be computed on a single tree:
    \begin{align}
        \operatorname{Cov}[f(\mathbf x), f(\mathbf x')]
        &= \operatorname{Cov} \left[
            \sum_{j=1}^m g(\mathbf x;T_j,M_j),
            \sum_{l=1}^m g(\mathbf x';T_l,M_l)
        \right] = \notag \\
        &= \sum_{j=1}^m \sum_{l=1}^m
        \operatorname{Cov}[g(\mathbf x;T_j,M_j), g(\mathbf x';T_l,M_l)] = \notag \\
        &= \sum_{j=1}^m
        \operatorname{Cov}[g(\mathbf x;T_j,M_j), g(\mathbf x';T_j,M_j)] = \notag \\
        &= m \operatorname{Cov}[g(\mathbf x;T,M), g(\mathbf x';T,M)] = \notag \\
        &=: m \sigma_\mu^2 k(\mathbf x, \mathbf x'),
    \end{align}
    where, since $\sigma_\mu^2$ is the variance of all the leaves of the tree,
    $k(\mathbf x, \mathbf x')$ is the correlation function. To evaluate this
    function, consider that, if the two points $\mathbf x$ and $\mathbf x'$
    end up in the same cell of the stepwise function represented by
    the tree, i.e., in the same leaf, then they are assigned the same value,
    so their correlation is~1. Instead, if they are separated by any tree split,
    they are assigned to distinct leaves, and since the leaves are independent,
    their correlation is~0. The marginal correlation is thus
    \begin{align}
        k(\mathbf x, \mathbf x')
        &= 0 \cdot P(\text{the two points are separated by a split}) + 1 \cdot P(\text{not separated}) = \notag \\
        &= P(\text{$\mathbf x$ and $\mathbf x'$ not separated by a split}),
    \end{align}
    as stated in \textcite[\S5.2]{linero2017}. To continue the calculation,
    first introduce the counts of allowed splitting points $\mathbf n^-$,
    $\mathbf n^0$ and $\mathbf n^+$ respectively before, between and after
    $\mathbf x$ and $\mathbf x'$ along each predictor axis, totalling $\mathbf
    n = \mathbf n^- + \mathbf n^0 + \mathbf n^+$ (see \autoref{fig:counts}).
    These counts change while traversing the tree, since each children of a
    splitting node is assigned only a subset of the space, and thus a subset of
    the splitting points.
    
    Let $k_d(\mathbf n^-, \mathbf n^0, \mathbf n^+)$ be the probability that a
    subtree at depth $d$ does not separate the two points with its allowed
    splits, in particular $k(\mathbf x, \mathbf x') = k_0(\mathbf n^-, \mathbf
    n^0, \mathbf n^+)$. Then I proceed by expanding the probabilities
    following the recursive definition of the tree prior:
    \begin{align}
        k_d(\mathbf n^-, \mathbf n^0, \mathbf n^+)
        &= P(\text{the two points are \emph{not} separated by a split}) = \notag \\
        &= 1 - P(\text{the two points are separated by a split}) = \notag \\
        &= 1 - P(\text{nonterminal node}) P(\text{separated by a split}\mid\text{nonterminal}) = \notag \\
    \intertext{Now if $\mathbf n = \mathbf 0$ then $P(\text{nonterminal}) = 0$
    and thus the result is~1. The same holds if $\mathbf n \ne \mathbf 0$, but
    for all $n_i > 0$, the axis weight $w_i = 0$. In particular the following
    total weight must be nonzero:
    \begin{align}
        W(\mathbf n) = \sum_{\substack{i=1 \\ n_i\ne 0}}^p w_i,
    \end{align}
    I continue the calculation assuming $W(\mathbf n) > 0$ and so
    $P(\text{nonterminal}) = P_d$:}
        &= 1 - P_d \big[ 1 -
        P(\text{not separated by a split}\mid\text{nonterminal}) \big] = \notag \\
        &= 1 - P_d \left[ 1 - \sum_{\substack{i=1 \\ P(\text{split on axis $i$}) > 0}}^p
        P(\text{not separated by a split}\mid\text{split on axis $i$}) \right. \notag \\
        & \left. \phantom{=1 - P_d \left[ 1 - \sum_{\substack{i=1 \\ P(\text{split on axis $i$}) > 0}}^p\right.}
        P(\text{split on axis $i$}) \right], \label{eq:corrstart}
    \end{align}
    Now I consider separately the probabilities appearing in the last
    expression. The simplest one is
    \begin{align}
        P(\text{split on axis $i$})
        &= \begin{cases}
            \displaystyle \frac {w_i} {W(\mathbf n)} & n_i \ne 0, \\
            0                          & n_i = 0.
        \end{cases} \label{eq:spliti}
    \end{align}
    The other probability expands as
    \begin{align}
        &P(\text{not separated by a split}\mid\text{split on axis $i$}) = \notag \\
        &\quad= P(\text{not separated by the current split \&} \notag \\
        &\quad\phantom{=P(\,} \text{\& not by eventual descendants}\mid \text{split on axis $i$}) = \notag \\
        &\quad= \sum_{q \in \text{splits}}
        P(\text{not separated by current \& not by descendants}\mid \text{split on axis $i$ at $q$}) \cdot {} \notag \\
        &\quad\hphantom{= \sum_{q \in \text{splits}}\,\,}
        P(\text{split at $q$}\mid\text{on axis $i$}) = \notag \\
        &\quad= \sum_{q=0}^{n_i-1}
        P(\text{not separated by current}\mid \text{split on axis $i$ at $q$}) \cdot {} \notag \\
        &\quad\hphantom{= \sum_{q \in \text{splits}}\,\,}
        P(\text{not by descendants} \mid \text{not by current}, \text{split on axis $i$ at $q$}) \cdot {} \notag \\
        &\quad\hphantom{= \sum_{q \in \text{splits}}\,\,}
        P(\text{split at $q$} \mid \text{on axis $i$}) = \notag \\
        &\quad= \sum_{q=0}^{n_i-1}
        P(\text{the split falls out of $n^0_i$}\mid\text{split on axis $i$ at $q$}) \cdot {} \notag \\
        &\quad\hphantom{= \sum_{q \in \text{splits}}\,\,}
        P(\text{not by left nor right descendant}\mid\text{not by current}, \text{split on axis $i$ at $q$}) \cdot {} \notag \\
        &\quad\hphantom{= \sum_{q \in \text{splits}}\,\,}
        P(\text{split at $q$}\mid\text{on axis $i$}). \label{eq:condsplit}
    \end{align}
    
    Now consider separately the three probabilities appearing in the last
    expression. First:
    \begin{align}
        P(\text{the split falls out of $n^0_i$}\mid\text{split on axis $i$ at $q$})
        &= \begin{cases}
            1 & 0 \le q < n^-_i \lor n_i - n^+_i \le q < n_i, \\
            0 & \text{otherwise.}
        \end{cases} \label{eq:fallout}
    \end{align}
    Then, given that the current split is not separating, it must fall either
    in $n^+_i$ or $n^-_i$, and only one of the children can divide the two
    points again, so
    \begin{align}
        &P(\text{not by left nor right descendant}\mid\text{not by current}, \text{split on axis $i$ at $q$}) = \notag \\
        &\quad = \begin{cases}
            k_{d+1}(\mathbf n^-, \mathbf n^0, \mathbf n^+_{n^+_i\mapsto n^+_i-(n_i-q)}) \cdot 1 & n_i - n^+_i \le q < n_i, \\
            1 \cdot k_{d+1}(\mathbf n^-_{n^-_i\mapsto q}, \mathbf n^0, \mathbf n^+) & 0 \le q < n^-_i,
        \end{cases} \label{eq:leftright}
    \end{align}
    where when $q$ is in $n^+_i$, only the left subtree has a chance to
    separate again the two points, since the right one is assigned to the space
    to the right of both points, and viceversa for $q$ in $n^-_i$. Finally
    \begin{align}
        P(\text{split at $q$}\mid\text{on axis $i$})
        &= \frac 1 {n_i}, \label{eq:splitk}
    \end{align}
    and by putting together Equations~\ref{eq:corrstart}, \ref{eq:spliti},
    \ref{eq:condsplit}, \ref{eq:fallout}, \ref{eq:leftright}, \ref{eq:splitk},
    I obtain \autoref{eq:corr}:
    \begin{align}
        k_d(\mathbf 0, \mathbf 0, \mathbf 0) &=
        k_d((),(),()) = 1, \notag \\
        k_d(\nvecs) &= 1 - P_d \Bigg[1 - \frac1{W(\mathbf n)}
            \sum_{\substack{i=1 \\ n_i\ne 0}}^p \frac{w_i}{n_i} \Bigg( \notag \\
                &\qquad \sum_{k=0}^{n^-_i - 1}
                k_{d+1}(\mathbf n^-_{n^-_i=k}, \mathbf n^0, \mathbf n^+)
                + \sum_{k=0}^{n^+_i - 1}
                k_{d+1}(\mathbf n^-, \mathbf n^0, \mathbf n^+_{n^+_i=k})
            \Bigg)
        \Bigg], \quad \mathbf w > 0. \notag
    \end{align}
    
    To simplify the next proofs and calculations, I assume the weights positive. To handle zero weights, I fix the convention that all predictors with $w_i = 0$ shall be ignored. This gives the correct result, since at each recursive step the summation term would be multiplied by $w_i = 0$, and if $\mathbf w > 0$ then $W(\mathbf n) = 0 \iff \mathbf n = 0$.
    
    \subsection{Proof of \autoref{th:corrprop}}
    \label{sec:corrprop}
    
    \subsubsection{Property \ref{it:corr1}}
    
    Here I check that the recursive formula for the correlation (\autoref{th:corr}) satisfies $k_d(\mathbf x, \mathbf x) = 1$. Written in terms of splitting counts, this translates to $k_d(\mathbf n^-, \mathbf 0, \mathbf n^+) = 1$. I prove it by induction over $\mathbf n^-$ and $\mathbf n^+$. The base case, $k_d(\mathbf 0, \mathbf 0, \mathbf 0) = 1$, corresponds to the base case of the recursion. Now, assuming that the property holds for all counts up to some given $\mathbf n^-$ and $\mathbf n^+$, I show that it is valid also if one component of $\mathbf n^-$ is incremented by~1:
    \begin{align}
        k_d(\mathbf n^- + \underbrace{\mathbf 1_j}
            _{\mathclap{{}=(\delta_{1j}, \ldots, \delta_{pj})}},
        \mathbf 0, \mathbf n^+)
        &= 1 - P_d \Bigg[1 - \frac1{W(\mathbf n + \mathbf 1_j)}
            \sum_{\substack{i=1 \\ n_i + \delta_{ij}\ne 0}}^p
            \frac{w_i}{n_i + \delta_{ij}} \Bigg( \notag \\
                &\qquad \sum_{k=0}^{n^-_i - 1}
                k_{d+1}(\mathbf n^-_{n^-_i=k}, \mathbf 0, \mathbf n^+)
                + \sum_{k=0}^{n^+_i - 1}
                k_{d+1}(\mathbf n^-, \mathbf 0, \mathbf n^+_{n^+_i=k})
            \Bigg) - {} \notag \\
                &\qquad \frac{w_j}{W(\mathbf n + \mathbf 1_j)}
                \frac{k_{d+1}(\mathbf n^-, \mathbf 0, \mathbf n^+)}{n_j + 1}
        \Bigg] = \notag \\
        \intertext{(I have extracted from the summations the term with $i = j$
        and $k = n^-_j$)}
        &= 1 - P_d \Bigg[1 - \frac1{W(\mathbf n + \mathbf 1_j)}
            \sum_{\substack{i=1 \\ n_i + \delta_{ij}\ne 0}}^p
            \frac{w_i}{n_i + \delta_{ij}} \Bigg(
                \sum_{k=0}^{n^-_i - 1} 1 + \sum_{k=0}^{n^+_i - 1} 1
            \Bigg) - {} \notag \\
                &\phantom{{}= 1 - P_d \Bigg[1 - {}}
                \frac{w_j}{W(\mathbf n + \mathbf 1_j)} \frac1{n_j + 1}
        \Bigg] = \notag \\
        &= 1 - P_d \Bigg[1 - \frac1{W(\mathbf n + \mathbf 1_j)}
            \sum_{\substack{i=1 \\ n_i + \delta_{ij}\ne 0}}^p
            \frac{w_i}{n_i + \delta_{ij}} (\overbrace{n^+_i + n^-_i}^{{}=n_i} + \delta_{ij})
        \Bigg] = 1.
    \end{align}
    The case where $\mathbf n^+$ is incremented is analogous.

    \subsubsection{Property \ref{it:lower}}

    The recursive step applies, rather than the base case. So
    \begin{align}
        k_d(\mathbf 0, \mathbf n, \mathbf 0) &=
        1 - P_d\left[
            1 - \frac 1 {W(\mathbf n)} \sum_{\substack{i=0 \\ n_i\ne0}}^p
                \frac{w_i}{n_i}
                \underbrace{(0 + 0)}_\text{empty summations}
        \right] = \notag \\
        &= 1 - P_d.
    \end{align}
    
    \subsubsection{Property \ref{it:bounds}}
    \label{sec:bounds}
    
    I prove it by induction along the recursion. The recursion either bottoms down at the base case or when the inner summations are empty. The base case trivially is in $[1 - P_d, 1]$. Next, assuming that $k_{d+1} \in [1 - P_{d+1}, 1]$, I prove that $k_d \in [1 - P_d, 1]$, using the recursive step for $k_d$. The inner summations over $k$ have $n^+_i$ and $n^-_i$ terms in $[0, 1]$, so overall they are bounded in $[0, n^+_i + n^-_i]$. Since $n^+_i + n^-_i \le n_i$, the whole term of the outer summation over $i$ is in $[0, w_i]$, which in turn means that the complete summation is in $[0, 1]$. I remain with $1 - P_d(1 - \text{something in $[0, 1]$})$ which is in $[1 - P_d, 1]$.
    
    \subsubsection{Properties \ref{it:center} and \ref{it:corner}}
    
    Now I prove that $\mathbf n^0 = \mathbf n$ is a necessary condition for reaching the lower bound $k_d = 1 - P_d$, provided the nontermination probabilities are such that the bounds are not degenerate ($P_d > 0$) and that the subtree has a nonzero termination probability ($P_{d+1} < 1$).
    
    Assume $k_d(\nvecs) = 1 - P_d$. This implies that the outer summation over
    $i$ is equal to~0. Since all the terms are nonnegative, each one of them
    must be zero too, so under the assumption $\mathbf w > 0$ I have
    $k_{d+1}(\mathbf n^-_{n^-_i=k}, \mathbf n^0, \mathbf n^+) = 0$ and
    $k_{d+1}(\mathbf n^-, \mathbf n^0, \mathbf n^+_{n^+_i=k}) = 0$ respectively
    for all $k \in \mathbb N$ less than $n^-_i$ and $n^+_i$. Since $k_{d+1} \ge
    1 - P_{d+1} > 0$, this is not possible, so the only way to yield a zero
    summation is to have no terms, i.e., $n^+_i = n^-_i = 0$.
    
    Now instead consider $k_d = 1$, under the assumption $P_d > 0$. I prove
    that necessarily $\mathbf n^0 = 0$. The term which multiplies $P_d$ must
    be~0, so each term of the outer summation over $i$ must be equal to $w_i$.
    Since $k_{d+1} \le 1$ the summations over $k$ together yield at most $n^-_i
    + n^+_i$, so the only way to achieve the bound is to have $n^-_i
    + n^+_i = n_i$, which implies $n^0_i = 0$.
    
    \subsubsection{Properties \ref{it:monoext} and \ref{it:monoint}}
    
    Intuitively, points which are more distant w.r.t.\ the grid along all axes
    should have lower correlation. In fact, if the movement is split in two
    steps, first shrinking the grid outside of the points, then enlarging it
    between, this property holds separately for both variations:
    \begin{align}
        \mathbf n^{-\prime} &\le \mathbf n^-, \quad
        \mathbf n^{+\prime} \le \mathbf n^+ &&\implies&
        k_d(\mathbf n^{-\prime}, \mathbf n^0, \mathbf n^{+\prime}) &\le
        k_d(\nvecs), \label{eq:monoext} \\
        \mathbf n^{0\prime} &\ge \mathbf n^0 &&\implies&
        k_d(\mathbf n^-, \mathbf n^{0\prime}, \mathbf n^+) &\le k_d(\nvecs).
        \label{eq:monoint}
    \end{align}
    I prove the inequalities by recursion. The base case satisfies. Assuming
    \autoref{eq:monoext} holds for $k_{d+1}$, I prove it for $k_d$:
    \begin{align}
        &k_d(\mathbf n^{-\prime}, \mathbf n^0, \mathbf n^{+\prime}) = \notag \\
        &= 1 - P_d \Bigg[1 - \frac1{W(\mathbf n)}
            \sum_{\substack{i=1 \\ n'_i\ne 0}}^p \frac{w_i}{n'_i} \Bigg(
                \sum_{k=0}^{n^{-\prime}_i - 1}
                k_{d+1}(\mathbf n^{-\prime}_{n^-_i=k}, \mathbf n^0, \mathbf n^{+\prime})
                + \sum_{k=0}^{n^{+\prime}_i - 1}
                k_{d+1}(\mathbf n^{-\prime}, \mathbf n^0, \mathbf n^{+\prime}_{n^+_i=k})
            \Bigg)
        \Bigg] \le \notag \\
        \intertext{My goal is replace primed terms with unprimed ones. I
        apply the inequality to each $k_{d+1}$ term:}
        &\le 1 - P_d \Bigg[1 - \frac1{W(\mathbf n)}
            \sum_{\substack{i=1 \\ n'_i\ne 0}}^p \frac{w_i}{n'_i} \Bigg(
                \sum_{k=0}^{n^{-\prime}_i - 1}
                k_{d+1}(\mathbf n^-_{n^-_i=k}, \mathbf n^0, \mathbf n^+)
                + \sum_{k=0}^{n^{+\prime}_i - 1}
                k_{d+1}(\mathbf n^-, \mathbf n^0, \mathbf n^+_{n^+_i=k})
            \Bigg)
        \Bigg] \le \notag \\
        \intertext{I am left with primed terms in the inner summation ranges
        and in the denominator. Extending the summations preserves the
        inequality, while increasing the denominator has the opposite effect,
        so I have to deal with both at once. Since the function $x/(1+x)$ is
        increasing, with $x \propto n^-_i + n^+_i$, it is sufficient to observe
        that the terms added to the summations, i.e., those with
        $n^{\pm\prime}_i \le k < n^\pm_i$, are greater than the previous ones,
        which again derives from the inequality assumed on $k_{d+1}$. Thus:}
        &\le 1 - P_d \Bigg[1 - \frac1{W(\mathbf n)}
            \sum_{\substack{i=1 \\ n'_i\ne 0}}^p \frac{w_i}{n_i} \Bigg(
                \sum_{k=0}^{n^-_i - 1}
                k_{d+1}(\mathbf n^-_{n^-_i=k}, \mathbf n^0, \mathbf n^+)
                + \sum_{k=0}^{n^+_i - 1}
                k_{d+1}(\mathbf n^-, \mathbf n^0, \mathbf n^+_{n^+_i=k})
            \Bigg)
        \Bigg] \le \notag \\
        \intertext{Finally, for the condition on the outer summation, $n'_i >
        0$ implies $n_i > 0$, so the latter condition can only eventually add
        nonnegative terms compared to the first:}
        &\le 1 - P_d \Bigg[1 - \frac1{W(\mathbf n)}
            \sum_{\substack{i=1 \\ n_i\ne 0}}^p \frac{w_i}{n_i} \Bigg(
                \sum_{k=0}^{n^-_i - 1}
                k_{d+1}(\mathbf n^-_{n^-_i=k}, \mathbf n^0, \mathbf n^+)
                + \sum_{k=0}^{n^+_i - 1}
                k_{d+1}(\mathbf n^-, \mathbf n^0, \mathbf n^+_{n^+_i=k})
            \Bigg)
        \Bigg] = \notag \\
        &= k_d(\nvecs). \notag
    \end{align}
    
    Analogously, assuming \autoref{eq:monoint} holds for $k_{d+1}$:
    \begin{align}
        &k_d(\mathbf n^-, \mathbf n^{0\prime}, \mathbf n^+) = \notag \\
        &= 1 - P_d \Bigg[1 - \frac1{W(\mathbf n)}
            \sum_{\substack{i=1 \\ n'_i\ne 0}}^p \frac{w_i}{n'_i} \Bigg(
                \sum_{k=0}^{n^-_i - 1}
                k_{d+1}(\mathbf n^-_{n^-_i=k}, \mathbf n^{0\prime}, \mathbf n^+)
                + \sum_{k=0}^{n^+_i - 1}
                k_{d+1}(\mathbf n^-, \mathbf n^{0\prime}, \mathbf n^+_{n^+_i=k})
            \Bigg)
        \Bigg] \le \notag \\
        &\le 1 - P_d \Bigg[1 - \frac1{W(\mathbf n)}
            \sum_{\substack{i=1 \\ n'_i\ne 0}}^p \frac{w_i}{n'_i} \Bigg(
                \sum_{k=0}^{n^-_i - 1}
                k_{d+1}(\mathbf n^-_{n^-_i=k}, \mathbf n^0, \mathbf n^+)
                + \sum_{k=0}^{n^+_i - 1}
                k_{d+1}(\mathbf n^-, \mathbf n^0, \mathbf n^+_{n^+_i=k})
            \Bigg)
        \Bigg] \le \notag \\
        \intertext{Since $1/n'_i \le 1/n_i$, I can replace the denominator.
        This time the condition $n'_i > 0$ is less restrictive than $n_i > 0$,
        so the set of summation terms potentially gets larger. However, if $n_i
        = 0$ then necessarily $n^+_i = n^-_i = 0$, so the corresponding
        term is zero:}
        &\le 1 - P_d \Bigg[1 - \frac1{W(\mathbf n)}
            \sum_{\substack{i=1 \\ n_i\ne 0}}^p \frac{w_i}{n_i} \Bigg(
                \sum_{k=0}^{n^-_i - 1}
                k_{d+1}(\mathbf n^-_{n^-_i=k}, \mathbf n^0, \mathbf n^+)
                + \sum_{k=0}^{n^+_i - 1}
                k_{d+1}(\mathbf n^-, \mathbf n^0, \mathbf n^+_{n^+_i=k})
            \Bigg)
        \Bigg] = \notag \\
        &= k_d(\nvecs). \notag
    \end{align}

    \subsubsection{Property \ref{it:nointer}}

    If $P_{d+1} = 0$, then $k_{d+1} = 1 - P_{d+1} [\ldots] = 1$ with the recursive step, and $k_{d+1} = 1$ with the base case. Assuming $\mathbf n \ne \mathbf 0 \implies (\nvecs) \ne (\mathbf 0,\mathbf 0,\mathbf 0)$, the recursive step applies to $k_d$, so
    \begin{align}
        k_d(\nvecs) &= 1 - P_d \left[
            1 - \frac1{W(\mathbf n)} \sum_{\substack{i=0 \\ n_i\ne 0}}^{p} \frac{w_i}{n_i} \left(
                \sum_{k=0}^{n^-_i-1} 1 + \sum_{k=0}^{n^+_i-1} 1
            \right)
        \right] = \notag \\
        &= 1 - P_d \left[1 - \frac1{W(\mathbf n)} \sum_{\substack{i=0 \\ n_i\ne 0}}^{p} w_i \left( 1 - \frac{n^0_i}{n_i} \right) \right] = \notag \\
        &= 1 - P_d \left[1 - \frac{W(\mathbf n)}{W(\mathbf n)} + \frac1{W(\mathbf n)} \sum_{\substack{i=0 \\ n_i\ne 0}}^{p} w_i \frac{n^0_i}{n_i} \right] = \notag \\
        &= 1 - \frac{P_d}{W(\mathbf n)} \sum_{\substack{i=0 \\ n_i\ne 0}}^{p} w_i \frac{n^0_i}{n_i}.
    \end{align}

    \subsubsection{Property \ref{it:white}}

    The case $k_d = 1$ if $\mathbf n^0 = \mathbf 0$ is already covered by property~\ref{it:corr1}. If $\mathbf n^0 \ne \mathbf 0$, the recursive case applies, with the recursion bottoming when the inner summations are empty. Now I prove that $k_d=0$ when $P_d, P_{d+1}, \ldots = 1$. Base case:
    \begin{align}
        k_d(\mathbf 0, \mathbf n^0, \mathbf 0)
        &= 1 - \underbrace{1}_{P_d}\left[
            1 - \frac1{W(\mathbf n)} \sum_{\substack{i=0 \\ n_i\ne 0}}^{p} \frac{w_i}{n_i} (0 + 0)
        \right] = 0.
    \end{align}
    The calculation for the induction step is analogous to the one of the base case, with the zeros given by $k_{d+1}$ rather than the summations being empty. Finally:
    \begin{align}
        k_d(\nvecs) = 1 - P_d\left[1 - \frac1{W(\mathbf n)} \sum_{\substack{i=0 \\ n_i\ne 0}}^{p} \frac{w_i}{n_i} (0 + 0) \right]
        = 1 - P_d.
    \end{align}

    \subsection{Truncated correlation function}
    \label{sec:truncproof}
    
    \subsubsection{Proof of \autoref{th:truncinterp}}

    In the recursive step, $k_d$ depends linearly on the $k_{d+1}$ terms, and with nonnegative coefficients. This implies that the inlined expansion to depth $D$ of $k_d$ is linear and increasing w.r.t.\ the terms $k_D$. So, if I replace $k_D$ with a lower or upper bound on its value, as in the definition of $k^D_{d,\gamma}$, I get respectively a lower or upper bound on $k_d$. Moreover, by linearity, any interpolation between the two bounds at level $D$ corresponds to an interpolation at level $d < D$. In the special case $\mathbf n^0=\mathbf 0$, $k^D_{d,\gamma}=k_d=1$.

    \subsubsection{Proof of \autoref{th:filter}}
    
    I have to prove that \marginpar{Necessary and sufficient
    conditions under which these inequalities are strict? Sufficient I guess $P_d \in (0, 1)$, $\mathbf n^0 \ne \mathbf 0$.}
    \begin{align}
        D' \ge D \implies \begin{cases}
            k^{D'}_{d,1} \le k^D_{d,1}, \\
            k^{D'}_{d,0} \ge k^D_{d,0}.
        \end{cases}
    \end{align}
    
    It is sufficient to prove the property for $D' = D + 1$. It also sufficient
    to check the bottom case $d = D$, since the higher levels of the recursion
    are identical even if $D$ is different, and depend with positive
    coefficients on lower levels. This reduces the inequalities to the general
    bounds for the correlation function (property~\ref{it:bounds}, \autoref{th:corrprop}):
    \begin{align}
        k^{D+1}_{D,1} &\le 1 \phantom{{}-P_D}
        = k^D_{D,1}, \\
        k^{D+1}_{D,0} &\ge 1 - P_D = k^D_{D,0}.
    \end{align}

    \subsubsection{Proof of \autoref{th:truncprop}}
    \label{sec:truncpropproof}

    Due to property~\ref{it:white}, \autoref{th:corrprop}, $k^D_{d,\gamma}=k_d$ if in $k_d$ I replace $P_d \mapsto 1$ for $d > D$ and $P_D \mapsto (1 - \gamma) P_D$, thus it is p.s.d.\ and the same properties apply.

    \subsection{Pseudo-recursive truncated correlation function}
    \label{sec:ptruncproof}
        
    \subsubsection{Proof of \autoref{th:prtruncorrpos}}
    \label{sec:prtruncorrposproof}

    I'll prove that $k^{(D_1,\ldots,D_r)}_{d,\gamma}$ is p.s.d.\ by showing it stems from the correlation function of a variant of BART.

    Starting from BART, modify it by re-allowing all splits at levels $D_1$ to $D_{r-1}$: instead of forcing the subtrees to use only the regions delimited by the splits of the ancestor nodes, they start off with the initial complete set of splits. This may look wasteful because it produces redundant decision rules and empty leaves where no point ever falls, but it's formally valid.

    The correlation function of this process is obtained from the calculation in the proof of \autoref{th:corr} (\autoref{sec:corrproof}), with two modifications.

    First, like in \autoref{def:prtruncorr}, duplicate the arguments of the sub-correlation function, to ``remember'' the initial arguments: $\tilde k^{(D_1,\ldots,D_r)}_d(\nvecs;\barnvecs)$ is the probability that a subtree with root at depth $d$ does not separate the two points $\mathbf x$ and $\mathbf x'$, with $\nvecs$ the splits to use in decision rules, potentially restricted, and $\barnvecs$ all the splits. In particular the correlation function is
    \begin{equation}
        \operatorname{Corr}[f(\mathbf x), f(\mathbf x')] = \tilde k^{(D_1,\ldots,D_r)}_0(\nvecs;\nvecs).
    \end{equation}
    
    Second, modify \autoref{eq:leftright} as
    \begin{equation}
        \begin{cases}
            \tilde k^{(D_1,\ldots,D_r)}_{d+1}(\barnvecs; \barnvecs)
                & d+1 \in \{D_1,\ldots,D_{r-1}\} \land n_i - n^+_i \le q < n_i, \\
            \tilde k^{(D_1,\ldots,D_r)}_{d+1}(\barnvecs; \barnvecs)
                & d+1 \in \{D_1,\ldots,D_{r-1}\} \land 0 \le q < n^-_i, \\
            \tilde k^{(D_1,\ldots,D_r)}_{d+1}(\mathbf n^-, \mathbf n^0, \mathbf n^+_{n^+_i\mapsto n^+_i-(n_i-q)}; \barnvecs)
                & d+1 \notin \{D_1,\ldots,D_{r-1}\} \land n_i - n^+_i \le q < n_i, \\
            \tilde k^{(D_1,\ldots,D_r)}_{d+1}(\mathbf n^-_{n^-_i\mapsto q}, \mathbf n^0, \mathbf n^+; \barnvecs)
                & d+1 \notin \{D_1,\ldots,D_{r-1}\} \land 0 \le q < n^-_i.
        \end{cases}
    \end{equation}

    $\tilde k^{(D_1,\ldots,D_r)}_d$ has the right recursive step, but it's not yet equal to $\tilde k^{(D_1,\ldots,D_r)}_{d,\gamma}$ because it does not terminate at depth $D_r$ and it lacks the interpolation coefficient $\gamma$. To add these features, set $P_d \mapsto 1$ for $d > D_r$ and $P_{D_r} \mapsto (1 - \gamma) P_{D_r}$ like in the proof of \autoref{th:truncprop} (\autoref{sec:truncpropproof}). This finally yields $k^{(D_1,\ldots,D_r)}_{d,\gamma}$ as correlation function of the process.

    \subsubsection{Proof of \autoref{th:prtruncorrup}}

    In the recursive step, the sub-correlation function at depth $d$ always depends with nonnegative coefficients on the sub-correlation function at depth $d+1$. Thus replacing $k_{d+1}$ with an upper bound on it produces in turn an upper bound on $k_d$.

    Restricting the available splits keeps $\mathbf n^0$ unchanged while reducing $\mathbf n^\pm$, so by property~\ref{it:monoext}, \autoref{th:corrprop}, $k_d$ evaluated on the original unrestricted splits is greater than $k_d$ evaluated on restricted splits.

    Using these two facts, I can recursively modify $k_d$ in a non-decreasing way until it's equal to $k^{(D_1,\ldots,D_r)}_{d,1}$. The base case is $k_{D_r} \le 1 = k^{(D_1,\ldots,D_r)}_{D_r,1}$. I plug this bound into the recursive step for $k_{D_r - 1}$, obtaining an upper bound on it, and yielding $k^{(D_1,\ldots,D_r)}_{D_r-1,1}$, and so on until $k_{D_{r-1}-1}$. Here I replace the restricted $\nvecs$ with the initial ones, obtaining again an upper bound equal to $k^{(D_1,\ldots,D_r)}_{D_{r-1}-1,1}$. I continue until $d$.

    \subsubsection{Proof of \autoref{th:prtruncorrupbetter}}

    Consider $k^{(D_1,\ldots,D_r)}_{d,1}$ and $k^{(D_1,\ldots,D_{r'})}_{d,1}$, with $r'\le r$. Their recursive expansions are identical until hitting level $D_{r'}$. At that point, the first invokes $\tilde k^{(D_1,\ldots,D_r)}_{D_{r'},1}$, while the second $\tilde k^{(D_1,\ldots,D_{r'})}_{D_{r'},1}$, which is equal to 1 by Equations~\ref{eq:prtruncorrbasen00} and~\ref{eq:prtruncorrbasen0nz}. Since $\tilde k^{(D_1,\ldots,D_r)}_{D_{r'},1}\le 1$ because it is a correlation function (see \autoref{sec:prtruncorrposproof}), and the recursive step depends with nonnegative coefficients on the sub-correlations, the inequality back-propagates up to $k^{(D_1,\ldots,D_r)}_{d,1} \le k^{(D_1,\ldots,D_{r'})}_{d,1}$.

    \subsection{Proof of \autoref{th:depth2}}
    \label{sec:depth2}
    
    Here I derive the formula for $k^D_{D-2}$ of \autoref{eq:depth2}. In the
    following I assume $\mathbf n^0 \ne \mathbf 0$ since otherwise I can
    shortcut to $k^D_{D-2} = 1$. I start from \autoref{eq:truncorr}:
    \begin{align}
        &k^D_{D-2}(\nvecs) =
        1 - P_{D-2} \Bigg[
            1 - \frac 1 {W(\mathbf n)} \sum_{\substack{i=1 \\ n_i \ne 0}}^p
            \frac {w_i} {n_i} \Bigg( \notag \\
                & \phantom{{}={}}
                \sum_{k=0}^{n^-_i-1}
                k^D_{D-1}(\mathbf n^-_{n^-_i=k}, \mathbf n^0, \mathbf n^+) +
                \sum_{k=0}^{n^+_i-1}
                k^D_{D-1}(\mathbf n^-, \mathbf n^0, \mathbf n^+_{n^+_i=k})
            \Bigg)
        \Bigg] = \notag \\
        \intertext{and expand again $k^D_{D-1}$:}
        &= 1 - P_{D-2} \Bigg[
            1 - \frac 1 {W(\mathbf n)}
            \sum_{\substack{i=1 \\ n_i \ne 0}}^p \frac {w_i} {n_i} \Bigg( \notag \\
                & \phantom{{}={}}
                \sum_{k=0}^{n^-_i - 1} \Bigg[
                    1 - P_{D-1} \Bigg[
                        1 - \frac 1 {W(\mathbf n_{n^-_i=k})}
                        \sum_{\substack{j=1 \\ n_{j,n^-_i=k} \ne 0}}^p
                        \frac {w_j} {n_{j,n^-_i=k}} \Bigg(
                            \sum_{q=0}^{n^-_{j,n^-_i=k} - 1} k^D_D +
                            \sum_{q=0}^{n^+_j - 1} k^D_D
                        \Bigg)
                    \Bigg]
                \Bigg] + {} \notag \\
                & \phantom{{}={}}
                \sum_{k=0}^{n^+_i - 1} \Bigg[
                    1 - P_{D-1} \Bigg[
                        1 - \frac 1 {W(\mathbf n_{n^+_i=k})}
                        \sum_{\substack{j=1 \\ n_{j,n^+_i=k} \ne 0}}^p
                        \frac {w_j} {n_{j,n^+_i=k}} \Bigg(
                            \sum_{q=0}^{n^-_j - 1} k^D_D +
                            \sum_{q=0}^{n^+_{j,n^+_i=k} - 1} k^D_D
                        \Bigg)
                    \Bigg]
                \Bigg]
            \Bigg)
        \Bigg] = \notag \\
        \intertext{Note that, since $\mathbf n^0 \ne \mathbf 0$, $k^D_D = 1 -
        (1 - \gamma) P_D$ and $W(\mathbf n_{n^*_i=k}) > 0$. I continue
        simplyfing the expression:}
        &= 1 - P_{D-2} \Bigg[
            1 - \frac 1 {W(\mathbf n)}
            \sum_{\substack{i=1 \\ n_i \ne 0}}^p  \frac {w_i} {n_i} \Bigg( \notag \\
                & \phantom{{}={}}
                \sum_{k=0}^{n^-_i - 1} \Bigg[
                    1 - P_{D-1} +
                    \frac{P_{D-1} k^D_D} {W(\mathbf n_{n^-_i=k})}
                    \sum_{\substack{j = 1 \\ n_{j,n^-_i=k}\ne 0}}^p w_j
                    \frac {n^+_j + n^-_{j,n^-_i=k}} {n_{j,n^-_i=k}}
                \Bigg] + {} \notag \\
                & \phantom{{}={}}
                \sum_{k=0}^{n^+_i - 1} \Bigg[
                    1 - P_{D-1} +
                    \frac{P_{D-1} k^D_D} {W(\mathbf n_{n^+_i=k})}
                    \sum_{\substack{j=1 \\ n_{j,n^+_i=k}\ne 0}}^p w_j
                    \frac {n^+_{j, n^+_i=k} + n^-_j} {n_{j, n^+_i=k}}
                \Bigg]
            \Bigg)
        \Bigg] = \notag \\
        \intertext{I take out the terms with $j=i$ from the sums over $j$:}
        &= 1 - P_{D-2} \Bigg[
            1 - \frac 1 {W(\mathbf n)}
            \sum_{\substack{i=1 \\ n_i \ne 0}}^p  \frac {w_i} {n_i} \Bigg(
                \sum_{k=0}^{n^-_i - 1} (1 - P_{D-1}) +
                \sum_{k=0}^{n^+_i - 1} (1 - P_{D-1}) + {} \notag \\
                & \phantom{{}={}}
                \sum_{k=0}^{n^-_i - 1}
                \frac {P_{D-1} k^D_D} {W(\mathbf n_{n^-_i=k})} \Bigg[
                    \sum_{\substack{j=1,j \ne i \\ n_{j,n^-_i=k} \ne 0}}^p w_j
                    \frac {n^+_j + n^-_{j,n^-_i=k}} {n_{j,n^-_i=k}} +
                    w_i \left\{
                        n_{i,n^-_i=k} \,\middle|\,
                        \frac {n^+_i + n^-_{i,n^-_i=k}} {n_{i,n^-_i=k}}
                    \right\}
                \Bigg] + {} \notag \\
                & \phantom{{}={}}
                \sum_{k=0}^{n^+_i - 1}
                \frac {P_{D-1} k^D_D} {W(\mathbf n_{n^+_i=k})} \Bigg[
                    \sum_{\substack{j=1,j \ne i \\ n_{j,n^+_i=k} \ne 0}}^p w_j
                    \frac {n^+_{j,n^+_i=k} + n^-_j} {n_{j,n^+_i=k}} +
                    w_i \left\{
                        n_{i,n^+_i=k} \,\middle|\,
                        \frac {n^+_{i,n^+_i=k} + n^-_i} {n_{i,n^+_i=k}}
                    \right\}
                \Bigg]
            \Bigg)
        \Bigg] = \notag \\
        \intertext{Where $\{x \mid E\} = E$ if $x > 0$ and $0$ otherwise, even
        if $E$ is not well defined. So now I can remove all the exceptions
        of the kind $_{n^\pm_i=k}$:}
        &= 1 - P_{D-2} \Bigg[
            1 - \frac 1 {W(\mathbf n)}
            \sum_{\substack{i=1 \\ n_i \ne 0}} \frac {w_i} {n_i} \Bigg(
                (n^+_i + n^-_i) (1 - P_{D-1}) + {} \notag \\
                & \phantom{{}={}}
                \sum_{k=0}^{n^-_i - 1}
                \frac {P_{D-1} k^D_D} {W(\mathbf n_{n^-_i=k})} \Bigg[
                    \sum_{\substack{j=1,j \ne i \\ n_j \ne 0}}^p
                    w_j \frac {n^+_j + n^-_j} {n_j} +
                    w_i \left\{
                        n^+_i + n^0_i + k \,\middle|\,
                        \frac {n^+_i + k} {n^+_i + n^0_i + k}
                    \right\}
                \Bigg]
                 + {} \notag \\
                & \phantom{{}={}}
                \sum_{k=0}^{n^+_i - 1}
                \frac {P_{D-1} k^D_D} {W(\mathbf n_{n^+_i=k})} \Bigg[
                    \sum_{\substack{j=1,j \ne i \\ n_j\ne 0}}^p
                    w_j \frac {n^+_j + n^-_j} {n_j} +
                    w_i \left\{
                        k + n^0_i + n^-_i \,\middle|\,
                        \frac {n^-_i + k} {k + n^0_i + n^-_i}
                    \right\}
                \Bigg]
            \Bigg)
        \Bigg] = \notag \\
        \intertext{Now the only nontrivial summation terms are the fractions
        where $k$ appears. My goal is to write them as harmonic sums ($\sum
        1/k$) which can be written in terms of the digamma function $\psi$, so
        it would be convenient to remove the exception indicated by the braces.
        I can do this by extracting the $k = 0$ terms from the summations.
        This brings the added simplification that, if $k > 0$, necessarily
        $W(\mathbf n_{n^\pm_i=k}) = W(\mathbf n)$. I also need to remember that
        the $k = 0$ term is missing if the summation is empty, i.e., if $n^\pm_i
        = 0$:}
        &= 1 - P_{D-2} \Bigg[
            1 - \frac 1 {W(\mathbf n)}
            \sum_{\substack{i=1 \\ n_i \ne 0}}^p  \frac {w_i} {n_i} \Bigg(
                (n^+_i + n^-_i) (1 - P_{D-1}) + {} \notag \\
                & \phantom{{}={}}
                \Bigg\{
                    n^-_i \,\Bigg|\,
                    \frac {P_{D-1} k^D_D} {W(\mathbf n_{n^-_i=0})} \Bigg[
                        \sum_{\substack{j=1,j\ne i \\ n_j \ne 0}}^p
                        w_j \frac {n^+_j + n^-_j} {n_j} +
                        w_i \left\{
                            n^+_i + n^0_i \,\middle|\,
                            \frac {n^+_i} {n^+_i + n^0_i}
                        \right\}
                    \Bigg]
                \Bigg\} + {} \notag \\
                & \phantom{{}={}}
                \Bigg\{
                    n^+_i \,\Bigg|\,
                    \frac {P_{D-1} k^D_D} {W(\mathbf n_{n^+_i=0})} \Bigg[
                        \sum_{\substack{j=1,j\ne i \\ n_j \ne 0}}^p
                        w_j \frac {n^+_j + n^-_j} {n_j} +
                        w_i \left\{
                            n^0_i + n^-_i \,\middle|\,
                            \frac {n^-_i} {n^0_i + n^-_i}
                        \right\}
                    \Bigg]
                \Bigg\} + {} \notag \\
                & \phantom{{}={}}
                \sum_{k=1}^{n^-_i - 1}
                \frac {P_{D-1} k^D_D} {W(\mathbf n)} \Bigg[
                    \sum_{\substack{j=1,j\ne i \\ n_j \ne 0}}^p
                    w_j \frac {n^+_j + n^-_j} {n_j} +
                    w_i \frac {n^+_i + k} {n^+_i + n^0_i + k}
                \Bigg] + {} \notag \\
                & \phantom{{}={}}
                \sum_{k=1}^{n^+_i - 1}
                \frac {P_{D-1} k^D_D} {W(\mathbf n)} \Bigg[
                    \sum_{\substack{j=1,j\ne i \\ n_j\ne 0}}^p
                    w_j \frac {n^+_j + n^-_j} {n_j} +
                    w_i \frac {k + n^-_i} {k + n^0_i + n^-_i}
                \Bigg]
            \Bigg)
        \Bigg] = \notag \\
        \intertext{Notice there are many terms $(n^+_* + n^-_*)/n_*$ to be
        summed over. However, the terms in $j$ change as $i$ runs over its
        range because of the conditions $j\ne i$. I remove them by adding and
        subtracting the $j=i$ missing term:}
        &= 1 - P_{D-2} \Bigg[
            1 - \frac 1 {W(\mathbf n)}
            \sum_{\substack{i=1 \\ n_i \ne 0}}^p  \frac {w_i} {n_i} \Bigg(
                (n^+_i + n^-_i) (1 - P_{D-1}) + {} \notag \\
                & \phantom{{}={}}
                 \frac {P_{D-1} k^D_D} {W(\mathbf n_{n^-_i=0})} \Bigg\{
                    n^-_i \,\Bigg|\,
                    \sum_{\substack{j=1 \\ n_j \ne 0}}^p
                    w_j \frac {n^+_j + n^-_j} {n_j} -
                    w_i \frac {n^+_i + n^-_i} {n_i} +
                    w_i \left\{
                        n^+_i + n^0_i \,\middle|\,
                        \frac {n^+_i} {n^+_i + n^0_i}
                    \right\}
                \Bigg\} + {} \notag \\
                & \phantom{{}={}}
                \frac {P_{D-1} k^D_D} {W(\mathbf n_{n^+_i=0})} \Bigg\{
                    n^+_i \,\Bigg|\,
                    \sum_{\substack{j=1 \\ n_j \ne 0}}^p
                    w_j \frac {n^+_j + n^-_j} {n_j} -
                    w_i \frac {n^+_i + n^-_i} {n_i} +
                    w_i \left\{
                        n^0_i + n^-_i \,\middle|\,
                        \frac {n^-_i} {n^0_i + n^-_i}
                    \right\}
                \Bigg\} + {} \notag \\
                & \phantom{{}={}}
                \frac {P_{D-1} k^D_D} {W(\mathbf n)} \sum_{k=1}^{n^-_i - 1}
                \Bigg[
                    \sum_{\substack{j=1 \\ n_j \ne 0}}^p
                    w_j \frac {n^+_j + n^-_j} {n_j} -
                    w_i \frac {n^+_i + n^-_i} {n_i} +
                    w_i \frac {n^+_i + k} {n^+_i + n^0_i + k}
                \Bigg] + {} \notag \\
                & \phantom{{}={}}
                \frac {P_{D-1} k^D_D} {W(\mathbf n)} \sum_{k=1}^{n^+_i - 1}
                \Bigg[
                    \sum_{\substack{j=1 \\ n_j \ne 0}}^p
                    w_j \frac {n^+_j + n^-_j} {n_j} -
                    w_i \frac {n^+_i + n^-_i} {n_i} +
                    w_i \frac {k + n^-_i} {k + n^0_i + n^-_i}
                \Bigg]
            \Bigg)
        \Bigg] = \notag \\
        \intertext{So I define $\displaystyle S = \sum_{\substack{i=1 \\ n_i
        \ne 0}}^p w_i \frac {n^+_i + n^-_i} {n_i}$ and replace it throughout:}
        &= 1 - P_{D-2} \Bigg[
            1 - \frac {S (1 - P_{D-1})} {W(\mathbf n)}
            - \frac {P_{D-1} k^D_D} {W(\mathbf n)}
            \sum_{\substack{i=1 \\ n_i \ne 0}}^p  \frac {w_i} {n_i} \Bigg( \notag \\
                & \phantom{{}={}}
                \frac 1 {W(\mathbf n_{n^-_i=0})} \Bigg\{
                    n^-_i \,\Bigg|\,
                    S -
                    w_i \frac {n^+_i + n^-_i} {n_i} +
                    w_i \left\{
                        n^+_i + n^0_i \,\middle|\,
                        \frac {n^+_i} {n^+_i + n^0_i}
                    \right\}
                \Bigg\} + {} \notag \\
                & \phantom{{}={}}
                \frac 1 {W(\mathbf n_{n^+_i=0})} \Bigg\{
                    n^+_i \,\Bigg|\,
                    S -
                    w_i \frac {n^+_i + n^-_i} {n_i} +
                    w_i \left\{
                        n^0_i + n^-_i \,\middle|\,
                        \frac {n^-_i} {n^0_i + n^-_i}
                    \right\}
                \Bigg\} + {} \notag \\
                & \phantom{{}={}}
                \frac 1 {W(\mathbf n)} \sum_{k=1}^{n^-_i - 1}
                \Bigg[
                    S - w_i \frac {n^+_i + n^-_i} {n_i} +
                    w_i \frac {n^+_i + k} {n^+_i + n^0_i + k}
                \Bigg] + {} \notag \\
                & \phantom{{}={}}
                \frac 1 {W(\mathbf n)} \sum_{k=1}^{n^+_i - 1}
                \Bigg[
                    S - w_i \frac{n^+_i + n^-_i} {n_i} +
                    w_i \frac {k + n^-_i} {k + n^0_i + n^-_i}
                \Bigg]
            \Bigg)
        \Bigg] = \notag \\
        \intertext{To have $k$ only in the denominators, I add and subtract
        $n^0_i$ to the numerators and split the fractions:}
        &= 1 - P_{D-2} \Bigg[
            1 - \frac {S (1 - P_{D-1})} {W(\mathbf n)}
            - \frac {P_{D-1} k^D_D} {W(\mathbf n)}
            \sum_{\substack{i=1 \\ n_i \ne 0}}^p  \frac {w_i} {n_i} \Bigg( \ldots + {} \notag \\
                & \phantom{{}={}}
                \frac 1 {W(\mathbf n)} \sum_{k=1}^{n^-_i - 1}
                \Bigg[
                    S - w_i \frac {n^+_i + n^-_i} {n_i} +
                    w_i - w_i \frac {n^0_i} {n^+_i + n^0_i + k}
                \Bigg] + {} \notag \\
                & \phantom{{}={}}
                \frac 1 {W(\mathbf n)} \sum_{k=1}^{n^+_i - 1}
                \Bigg[
                    S - w_i \frac{n^+_i + n^-_i} {n_i} +
                    w_i - w_i \frac {n^0_i} {k + n^0_i + n^-_i}
                \Bigg]
            \Bigg)
        \Bigg] = \notag \\
        \intertext{So I can reduce all the terms which do not depend on $k$
        and be left with a harmonic sum:}
        &= 1 - P_{D-2} \Bigg[
            1 - \frac {S (1 - P_{D-1})} {W(\mathbf n)}
            - \frac {P_{D-1} k^D_D} {W(\mathbf n)}
            \sum_{\substack{i=1 \\ n_i \ne 0}}^p  \frac {w_i} {n_i} \Bigg( \ldots + {} \notag \\
                & \phantom{{}={}}
                \frac {\{ n^-_i \mid n^-_i - 1 \}} {W(\mathbf n)}
                \left( S + w_i \frac{n^0_i} {n_i} \right) -
                \frac {w_i n^0_i} {W(\mathbf n)}
                \sum_{k'=1+n^0_i+n^+_i}^{n_i - 1} \frac 1 {k'} + {} \notag \\
                & \phantom{{}={}}
                \frac {\{ n^+_i \mid n^+_i - 1 \}} {W(\mathbf n)}
                \left( S + w_i \frac{n^0_i} {n_i} \right) -
                \frac {w_i n^0_i} {W(\mathbf n)}
                \sum_{k'=1+n^0_i+n^-_i}^{n_i - 1} \frac 1 {k'}
            \Bigg)
        \Bigg] = \notag \\
        \intertext{Now I use the identity from \cite[eq.~5.4.14]{dlmf}
        \begin{equation}
            \psi(n) = \sum_{k=1}^{n-1} \frac 1 k - \gamma_E
        \end{equation}
        to obtain
        \begin{align}
            \sum_{k=1+m}^{n-1} \frac 1 k
            &= \left\{ n - m \,\middle|\,
                \sum_{k=1}^{n-1} \frac 1 k - \sum_{k=1}^m \frac 1 k \right\} = \notag \\
            &= \{ n - m \mid \psi(n) - \psi(m + 1) \},
        \end{align}
        so}
        &= 1 - P_{D-2} \Bigg[
            1 - \frac {S (1 - P_{D-1})} {W(\mathbf n)}
            - \frac {P_{D-1} k^D_D} {W(\mathbf n)}
            \sum_{\substack{i=1 \\ n_i \ne 0}}^p  \frac {w_i} {n_i} \Bigg( \ldots + {} \notag \\
                & \phantom{{}={}}
                \frac {\{ n^-_i \mid n^-_i - 1 \}} {W(\mathbf n)}
                \left( S + w_i \frac{n^0_i} {n_i} \right) -
                \frac {w_i n^0_i} {W(\mathbf n)}
                \{ \underbrace{n_i - n^0_i - n^+_i}_{=n^-_i} \mid
                    \psi(n_i) - \psi(1 + n^0_i + n^+_i) \} + {} \notag \\
                & \phantom{{}={}}
                \frac {\{ n^+_i \mid n^+_i - 1 \}} {W(\mathbf n)}
                \left( S + w_i \frac{n^0_i} {n_i} \right) -
                \frac {w_i n^0_i} {W(\mathbf n)}
                \{ \underbrace{n_i - n^0_i - n^-_i}_{=n^+_i} \mid
                    \psi(n_i) - \psi(1 + n^0_i + n^-_i) \}
            \Bigg)
        \Bigg] = \notag \\
        \intertext{Now all the outer conditions are on $n^\pm_i$. By grouping
        them together and reordering, I obtain:}
        &= 1 - P_{D-2} \Bigg[1 - \frac 1 {W(\mathbf n)} \Bigg[
            (1 - P_{D-1}) S +
            P_{D-1} k^D_D
            \sum_{\substack{i=1 \\ n_i \ne0}}^p \frac {w_i} {n_i} \Bigg( \notag \\
                & \phantom{{}+{}}
                \Bigg\{n^-_i \,\Bigg|\,
                \frac 1 {W(\mathbf n_{n^-_i=0})} \left( S
                - w_i \frac{n^-_i + n^+_i} {n_i}
                + w_i \left\{n^0_i + n^+_i \,\middle|\, \frac {n^+_i} {n^0_i + n^+_i}\right\}\right) + {} \notag \\
                & \phantom{\Bigg\{n^-_i \,\Bigg|\,}
                {} + \frac {n^-_i - 1} {W(\mathbf n)}
                \left(S + w_i \frac {n^0_i} {n_i}\right)
                - \frac {w_i n^0_i} {W(\mathbf n)}
                (\psi(n_i) - \psi(1 + n^0_i + n^+_i)) \Bigg\} + {} \notag \\
                & {} +
                \Bigg\{n^+_i \,\Bigg|\,
                \frac 1 {W(\mathbf n_{n^+_i=0})} \left( S
                - w_i \frac{n^+_i + n^-_i} {n_i}
                + w_i \left\{n^0_i + n^-_i \,\middle|\, \frac {n^-_i} {n^0_i + n^-_i}\right\}\right) + {} \notag \\
                & \phantom{\Bigg\{n^+_i \,\Bigg|\,}
                {} + \frac {n^+_i - 1} {W(\mathbf n)}
                \left(S + w_i \frac {n^0_i} {n_i}\right)
                - \frac {w_i n^0_i} {W(\mathbf n)}
                (\psi(n_i) - \psi(1 + n^0_i + n^-_i)) \Bigg\}
        \Bigg)\Bigg]\Bigg] = \notag \\
        \intertext{Observing that, if $n^\pm_i=0$ and $n_i > 0$, then
        $n^0_i+n^\mp_i = n_i$, $W(\mathbf n_{n^\pm_i=0}) = W(\mathbf n)$, and
        $\psi(n_i) - \psi(1 + n^0_i + n^\mp_i) = -1/n_i$, calculation shows that
        the two large conditional terms would yield~0 anyway if the condition was false,
        so I can remove the outer braces:}
        &= 1 - P_{D-2} \Bigg[1 - \frac 1 {W(\mathbf n)} \Bigg[
            (1 - P_{D-1}) S +
            P_{D-1} k^D_D
            \sum_{\substack{i=1 \\ n_i \ne0}}^p \frac {w_i} {n_i} \Bigg( \notag \\
                & \phantom{{}+{}}
                \frac 1 {W(\mathbf n_{n^-_i=0})} \left( S
                - w_i \frac{n^-_i + n^+_i} {n_i}
                + w_i \left\{n^0_i + n^+_i \,\middle|\, \frac {n^+_i} {n^0_i + n^+_i}\right\}\right) + {} \notag \\
                & {} + \frac {n^-_i - 1} {W(\mathbf n)}
                \left(S + w_i \frac {n^0_i} {n_i}\right)
                - \frac {w_i n^0_i} {W(\mathbf n)}
                (\psi(n_i) - \psi(1 + n^0_i + n^+_i)) + {} \notag \\
                & {} +
                \frac 1 {W(\mathbf n_{n^+_i=0})} \left( S
                - w_i \frac{n^+_i + n^-_i} {n_i}
                + w_i \left\{n^0_i + n^-_i \,\middle|\, \frac {n^-_i} {n^0_i + n^-_i}\right\}\right) + {} \notag \\
                & {} + \frac {n^+_i - 1} {W(\mathbf n)}
                \left(S + w_i \frac {n^0_i} {n_i}\right)
                - \frac {w_i n^0_i} {W(\mathbf n)}
                (\psi(n_i) - \psi(1 + n^0_i + n^-_i))
        \Bigg)\Bigg]\Bigg] = \notag \\
        \intertext{Which allows to gather some terms, arriving at
        \autoref{eq:depth2}:}
        &= 1 - P_{D-2} \Bigg[1 - \frac 1 {W(\mathbf n)} \Bigg[
            (1 - P_{D-1}) S +
            P_{D-1} k^D_D
            \sum_{\substack{i=1 \\ n_i \ne0}}^p \frac {w_i} {n_i} \Bigg( \notag \\
                & \phantom{{}+{}}
                \left(S + w_i \frac {n^0_i} {n_i}\right)
                \left(
                    \frac 1 {W(\mathbf n_{n^-_i=0})} +
                    \frac 1 {W(\mathbf n_{n^+_i=0})} +
                    \frac {n^-_i + n^+_i - 2} {W(\mathbf n)}
                \right)
                + {} \notag \\
                & {} +
                \frac {w_i} {W(\mathbf n_{n^-_i=0})}
                \left(\left\{
                    n^0_i + n^+_i
                    \,\middle|\,
                    \frac {n^+_i} {n^0_i + n^+_i}
                \right\} - 1\right) +
                \frac {w_i} {W(\mathbf n_{n^+_i=0})}
                \left(\left\{
                    n^0_i + n^-_i
                    \,\middle|\,
                    \frac {n^-_i} {n^0_i + n^-_i}
                \right\} - 1\right)
                + {} \notag \\
                & {} -
                \frac {w_i n^0_i} {W(\mathbf n)}
                \big(
                    2\psi(n_i) -
                    \psi(1 + n^0_i + n^-_i) -
                    \psi(1 + n^0_i + n^+_i)
                \big)
        \Bigg)\Bigg]\Bigg]. \notag
    \end{align}

    \paragraph{Next levels} It is probably possible to obtain an $O(p)$ formula for $k^D_{D-k}$ for any $k$ by writing generalized harmonic sums in terms of higher order polygamma functions. However the calculation would be even more tedious and error-prone than for $k^D_{D-2}$ due to the need to keep track of the special cases $n_i = 0$ that multiply going down the recursion, so I did not attempt it. It would likely also not be computationally convenient because the bottleneck in my code is currently evaluating the formula for $k^D_{D-2}$ due to its sheer length.

    \section{Accuracy of the calculation of the prior correlation function}
    \label{sec:interpolation}

    With numerical experiments, I justify the choice made in \autoref{sec:finalkernel} of using $k^{2,5}_{0,1}$ as approximate estimate of $k_0$, and measure its estimation error. The idea is computing accurate reference values by reaching higher depths, and then compare them with the the less accurate but fast repeated depth~2 calculation.

    \paragraph{Setup of the experiment}

    For each combination of the following values of hyperparameters $\alpha$ and $\beta$, number of predictors $p$, base truncation depth $D_0$, and recursions~$r$:
    \begin{center}
                
        \begin{tabular}{l|*{13}{l@{\hspace{1.3ex}}}}
            $\alpha$ & 0.01 & 0.15 & 0.30 & 0.45 & 0.60 & 0.70 & 0.80 & 0.90 & 0.95 & 0.99 & 1.00 & $\phantom{0.00}$ & $\phantom{0.00}$ \\
            $\beta$  & 0.0 & 0.1 & 0.2 & 0.4 & 0.6 & 0.8 & 1.0 & 1.5 & 2.0 & 2.5 & 3.0 & 3.5 & 4.0 \\
            $p$      & 1 & 2 & 3 & 4 & 5 & 6 & 7 & 8 & 9 & 10 \\
            $D_0$    & 1 & 2 & 3 & 4 & 5 \\
            $r$      & 2 & 5
        \end{tabular}
        
    \end{center}
    I generate $N=250$ quasi-Monte Carlo pairs of locations on a grid with $n_i=10$ splits \marginpar{Is the result sensitive to $n_i$? I'd need to prove some sort of interpolation property.} per axis and evaluate the lower and upper bounds $k^{D_0}_{0,0}$ and $k^{D_0,r}_{0,1}$ on $k_0$. I modify the functions with the substitution $P_0\mapsto 1$ to have them range in $[0,1]$ instead of $[1-\alpha, 1]$.\marginpar{I haven't stated the bounding property for the pseudo-recursive version.} The locations are re-sampled for each combination of $\alpha$, $\beta$, $p$.

    \paragraph{Determination of an accurate reference value}

    I somewhat arbitrarily define a ``most accurate'' estimate $K$ of $k_0$, as the point which has the same interpolation coefficient with respect to the $D_0=3$ and $D_0=5$ intervals at $r=5$:
    \begin{align}
        \frac{K - k^{5}_{0,0}}{\Delta^{5,5}} &=
        \frac{K - k^{3}_{0,0}} {\Delta^{3,5}} &&\implies&
        K &= \frac{\Delta^{3,5} k^{5}_{0,0} - \Delta^{5,5} k^{3}_{0,0}}
        {\Delta^{3,5} - \Delta^{5,5}},
    \end{align}
    where $\Delta^{D_0,r} = k^{D_0,r}_{0,1} - k^{D_0}_{0,0}$. This choice is made on the intuition that, since the function is recursive, the position of the more accurate $D_0=5$ interval within the larger $D_0=3$ one gives an indication on the position of deeper intervals w.r.t.\ the one at $D_0=5$. In particular, in the white noise limit (property~\ref{it:white}, \autoref{th:corrprop}), $k^{5}_{0,0} = k^{3}_{0,0} = 1 - P_0 = K$, so $K$ is exact.

    \paragraph{Interpolation coefficient}

    For base depths $D_0 < 5$, I compute the interpolation coefficient $\gamma_K$ that yields the reference value $K$:
    \begin{equation}
        K = (1 - \gamma_K) k^{D_0}_{0,0} + \gamma_K k^{D_0,r}_{0,1}.
    \end{equation}
    I will leave implicit that $\gamma_K$ depends on $\alpha$, $\beta$, $p$, $D_0$, $r$, and the $N$ pairs of locations.

    \paragraph{Averaging of the interpolation coefficient}

    I'd like to have a single ``good'' value $\bar\gamma_K$ for each combination of hyperparameters. I define $\bar\gamma_K(\alpha,\beta,p,D_0,r)$ as the median of $\gamma_K$ over location pairs, weighted with the bounding interval width $\Delta^{D_0,r}$.

    The reason for weighting with interval width is that the value of $\gamma_K$ is more important when the width is larger: if the width is small, the lower and upper bounds are close, so are all the interpolated values in between. The maximum error is indeed proportional to the width.

    The reason for using a median instead of an average is to ignore points where the value of $\gamma_K$ is very different from the others.

    There is a potential problem in that averaging over locations could create artefacts that depend on the distribution of locations. As $p$ increases, it is less likely for a pair of points to be close, so the typical correlation is smaller. On the other hand, this may simply reflect the actual usage of the correlation function. I haven't investigated this issue.

    \paragraph{Bound on the estimation error}

    At fixed hyperparameters, I compute the maximum over location pairs of the width of the bounding interval, and an upper bound on the maximum error in estimating $k_0$ with $k^{D_0,r}_{0,1}$:
    \begin{equation}
        \text{error} = |k^{D_0,r}_{0,1} - k_0| \le \max_\text{pairs of locations}\max \left\{
            |k^{5,5}_{0,1} - k^{D_0,r}_{0,1}|,
            |k^{D_0,r}_{0,1} - k^{5}_{0,0}|
        \right\}.
    \end{equation}
    I used the interval at depth $D_0=5$, $[k^5_{0,0}, k^{5,5}_{0,1}]$, because it is the narrowest one, thus producing the smallest error bound possible. \autoref{fig:gamma} shows $\bar\gamma_K$, the maximum width, and maximum error, only at base depth $D_0=2$, since this is the depth that matters for calculations.
    
    \begin{figure}
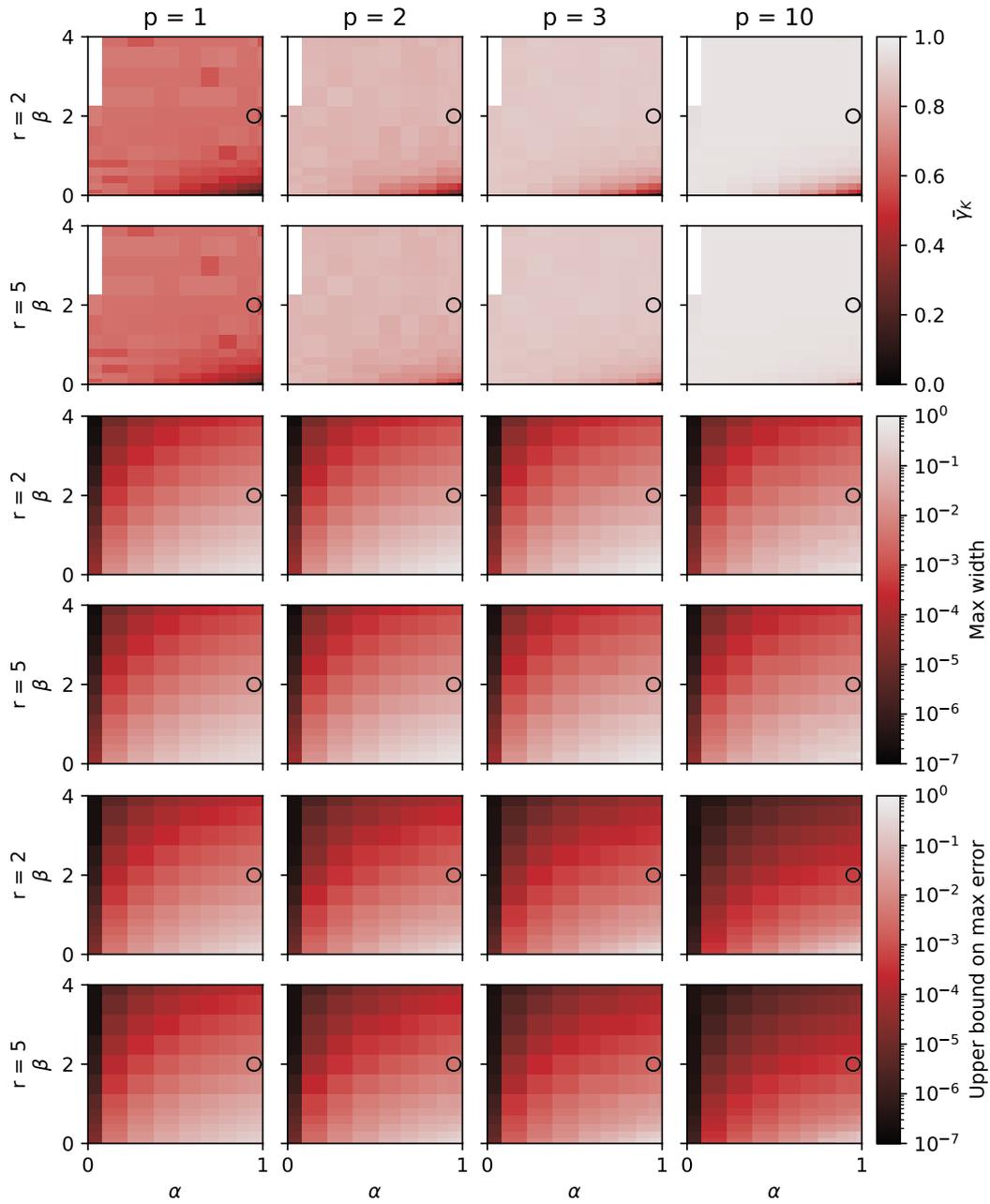

        
        \centering
        \widecenter{\includempl{testnd2-plot2}}
        
        \caption{\label{fig:gamma} Top two rows: Interpolation coefficient $\gamma_K$ such that $K = (1-\gamma_K) k^{2}_{0,0} + \gamma_K k^{2,r}_{0,1}$, averaged over locations, where $K \approx k_0$. The missing values are due to very small interval widths, so they can be safely ignored. Middle rows: the maximum observed width of the bounding interval $k^2_{0,0} \le k_0 \le k^{2,r}_{0,1}$. Bottom rows: maximum observed error in estimating $k_0$ with $k^{2,r}_{0,1}$. The circles are centered on the default BART hyperparameters $\alpha=0.95$, $\beta=2$. See \autoref{sec:interpolation}.}
        
    \end{figure}
    
    As expected, $\bar\gamma_K \to 0^+$ in the white noise corner $\alpha \to 1^-$ and $\beta \to 0^+$ (property~\ref{it:white}, \autoref{th:corrprop}). Increasing $r$ makes a difference almost only in this corner. For other values of the hyperparameters, the relationship $\bar\gamma_K(\alpha,\beta,p)$ looks flat (at fixed $p$), and increases quite quickly from $p=1$ to $p=2$ but then stabilizes, suggesting an asymptote w.r.t.\ $p$ at $\bar\gamma_K\approx0.96$. This means that the reference value $K\approx k_0$ is pretty close to the upper bound $k^{2,r}_{0,1}$. The upper bound on the maximum error at the default BART values $\alpha=0.95$, $\beta=2$ is, both at $r=2$ and $r=5$:
    \begin{center}
        \begin{tabular}{l|*4l}
            $p$ & 1 & 2 & 3 & 10 \\
            error $\le$ & 0.0065 & 0.0040 & 0.0022 & 0.0005
        \end{tabular}
    \end{center}

    \section{Numerical cross checks}
    \label{sec:crosschecks}

    This section extends \autoref{sec:finalkernel}, providing assurance that the calculation of the kernel is correct.
    
    \subsection{Comparison between kernel and generative tree process}
    \label{sec:checkprior}
    
    To check at once the correctness of the derivations of the BART correlation function $k_d$ (\autoref{th:corr}), its truncated version $k^D_{d,\gamma}$ (\autoref{def:truncorr}), the special-casing $k^D_{D-2,\gamma}$ (\autoref{th:depth2}), and the computer implementation based on them, I sample directly from the BART prior, generating the trees, and compare the sample covariance matrix with the one computed with my kernel.
    
    I fix some arbitrary values of the hyperparameters, the number of predictors, the splitting grid, and 100 predictor vectors. I generate \num{100000} samples to have a precise estimate of the covariance, and use $m = \num{10000}$ trees to make the distribution of the samples Normal within good approximation. I use the Normality of the samples to assume a scaled Wishart distribution for the sample covariance matrix and perform a likelihood ratio test for the equality of the covariance to the one computed by the kernel. I use $(k^D_{0,0}+k^D_{0,1})/2$ as kernel estimate, increasing $D$ until the maximum error $k^D_{0,1}- k^D_{0,0}$ is much smaller than the Wishart standard deviation, reaching $D=4$. Note that, due to sufficiency, a Normal LR test on the samples would be identical to the Wishart LR test on the sample covariance matrix. The result is shown in \autoref{fig:checkprior}. The test does not reject the equality hypothesis.\marginpar{What version of the kernel did I use exactly here?}
        
    \begin{figure}

        \centering
        \includegraphics[width=80ex]{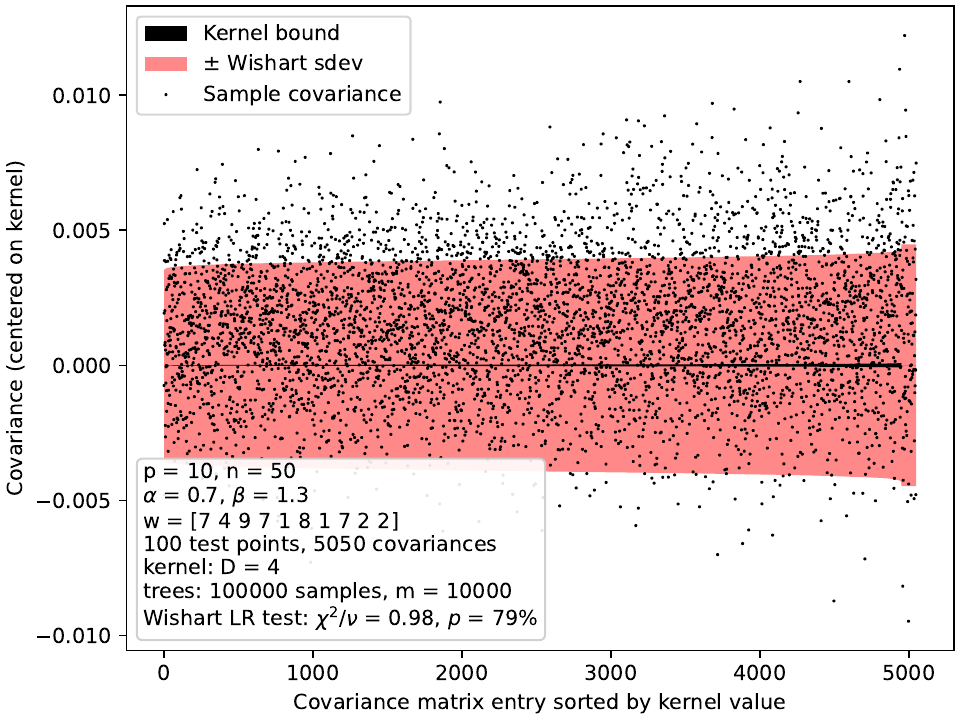}
        
        \caption{\label{fig:checkprior} Comparison of the sample covariance
        matrix of samples drawn from the BART prior to my numerical
        approximation of the covariance function, to cross-check my
        calculations. See \autoref{sec:checkprior}.}
        
    \end{figure}
    
    \subsection{Self-consistency of the correlation function}
    \label{sec:kernelcheck}
    
    I check numerically the following qualitative properties and corner cases of the correlation function on a large set of values of hyperparameters, problem sizes and locations:
    \begin{itemize}
        
        \item $0 \le k^D_{0,0} \le k^{D,2}_{0,1} \le k^D_{0,1} \le 1$ (Theorems~\ref{th:truncinterp}, \ref{th:prtruncorrup}, \ref{th:prtruncorrupbetter}).
        
        \item $[k^D_{0,0}, k^D_{0,1}] \subseteq [k^{D'}_{0,0}, k^{D'}_{0,1}]$ if $D \ge D'$ (\autoref{th:filter}).

        \item $k^D_{0,\gamma}[\alpha] \le k^D_{0,\gamma}[\alpha']$ if $\alpha \ge \alpha'$.
        
        \item $k^D_{0,\gamma}[\beta] \le k^D_{0,\gamma}[\beta']$ if $\beta \le \beta'$.
        
        \item $k^D_{0,\gamma}$ decreases as $\mathbf n^0$ increases at fixed
        $\mathbf n$ (properties~\ref{it:monoext} and~\ref{it:monoint}, \autoref{th:corrprop}).
        
        \item $k^D_{0,\gamma} = 1$ under either one of these sufficient conditions: $\mathbf n^0 = \mathbf 0$ (property~\ref{it:corr1} \autoref{th:corrprop}), $\mathbf w \odot \mathbf n=\mathbf 0$, $p = 0$.
        
        \item $k^D_{0,\gamma}$ is invariant under: per-axis swapping of $n^-_i$
        with $n^+_i$, simultaneous reordering of $\nvecs$ and $\mathbf w$,
        appending zeros to $\nvecs$ with any $\mathbf w$, appending zeros to
        $\mathbf w$ with any $\nvecs$.
        
        \item $k^D_{0,0} = k^D_{0,1}$ under either one of these sufficient
        conditions: $\beta = \infty$ if $D > 0$, $\alpha = 0$, $\mathbf n^0 =
        \mathbf 0$.
        
        \item Using the closed-form formulas for $k^D_{D-1,\gamma}$ and $k^D_{D-2,\gamma}$ (Equations~\ref{eq:depth1} and~\ref{eq:depth2}) yields the same result as walking through the recursion (\autoref{eq:truncorr}).

        \item $k^2_{0,1} = 1 - \alpha + \alpha k^2_{0,1}[P_0\mapsto 1]$.
        
    \end{itemize}
    
    \section{Out-of-sample predictive accuracy on real data}
    \label{sec:benchmark}

    I compare GP regression with my BART kernel against the original BART MCMC by comparing their predictions on held-out test sets on a series of datasets. This section extends \autoref{sec:benchmarkmain}.

    \subsection{Data}
    \label{sec:benchmarkdata}

    I use the 42 datasets of the original BART paper \autocite{chipman2010,kim2007}. The dataset size ranges from $n_0=96$ to $n_0=6806$, while the number of predictors ranges from $p_0=3$ to $p_0=28$. The outcome is log- or sqrt-transformed if its distribution looks skewed, and standardized; and categorical predictors are converted to binary dummies, one for each stratum.
    % I checked that it's 1 per stratum both in my impl and in Chipman et al (2010), rather than n(strata) - 1.
    Each dataset is randomly split 5:1 in train:test parts, 20 times. The train parts are again randomly split in 5 folds for cross-validation. After partitioning and dummyfication, the training sample size ranges from $n=80$ to $n=5671$, and the number of predictors from $p=3$ to $p=67$. Hugh Chipman kindly provided their original data files, so I have exactly the same transformations, train/test splits, and folds of \textcite{chipman2010}.
    % the transformations are already applied in the data files, as explained in the readme of the datasets directory.
    
    \subsection{Methods}

    I test six methods, with this naming scheme:
    \begin{description}
        \item[\emph{MCMC}] standard BART (\autoref{sec:modelmcmc}).
        \item[\emph{MCMC-CV}] BART cross-validated as in \textcite{chipman2010} (\autoref{sec:modelmcmccv}).
        \item[\emph{MCMC-XCV}] BART cross-validated with the pre-packaged routine \texttt{xbart} provided by \texttt{dbarts} (\autoref{sec:modelmcmcxcv})
        \item[\emph{GP}] the infinite trees limit of BART as GP regression (\autoref{sec:modelgp}).
        \item[\emph{GP-hyp}] like \emph{GP} but with hyperparameter tuning (\autoref{sec:modelgphyp}).
        \item[\emph{GP+MCMC}] standard BART with hyperparameters set to those found by the GP regression (\autoref{sec:modelgpmcmc}).
    \end{description}

    \subsubsection{Method: MCMC}
    \label{sec:modelmcmc}

    I mostly reproduce the setup of \textcite{chipman2010}. See Equations~\ref{eq:bartdeffirst} to~\ref{eq:bartdeflast} for the definition of the model. The hyperparameters are set to
    \begin{itemize}
        
        \item $m = 200$.

        \item $\alpha=0.95$, $\beta=2$.

        \item $\mathbf w = \mathbf 1$.

        \item $\displaystyle \mu_\mu = \frac{\max \mathbf y_\text{train} + \min \mathbf y_\text{train}}{2m}$.

        \item $\displaystyle \sigma_\mu = \frac{\max \mathbf y_\text{train} - \min \mathbf y_\text{train}}{2k\sqrt m}$, $k=2$.
        % I checked in the BART 2.9.6 code this is what it does, loc BART/R/gbart.R::gbart:117.

        \item $\nu = 3$.

        \item $\lambda$ is set such that $P(\sigma<\hat\sigma_\text{OLS}) = q=0.9$, where $\hat\sigma_\text{OLS}^2$ is the error variance estimated by OLS (with an intercept), i.e., the sum of squared residuals over the degrees of freedom.

        \item The splitting grid is set by taking the unique set of values in the training set for each predictor, and putting a cutpoint midway between each one.
        % R BART does this if numcut=<large number> and usequants=T, see BART 2.9.6 loc BART/R/bartModelMatrix.R::bartModelMatrix:92.

    \end{itemize}
    Instead of the original implementation \texttt{BayesTree}, I use the \texttt{BART} package, because it seems to be more adherent to the exact specification of the BART model (see \autoref{sec:crancomparison}). I run 4 MCMC chains in parallel, discarding the first 1000 samples from each, and keeping a total of 1000 samples over all chains, so 250 per chain. I then merge all the chains. The burn-in of 1000 is higher than the default 100; I do this because my personal experience has taught me that 100 is often too low to get reasonable convergence.

    \subsubsection{Method: MCMC-CV}
    \label{sec:modelmcmccv}

    I optimize some of the hyperparameters of the \emph{MCMC} method with cross validation. The folds are described in \autoref{sec:benchmarkdata}. I reproduce the CV procedure of \textcite{chipman2010}, exploring the cartesian product of these sets of parameter values:
    \begin{itemize}
        \item $(\nu,q) \in \{(3, 0.90), (3, 0.99), (10, 0.75)\}$,
        \item $m \in \{50, 200\}$,
        \item $k \in \{1, 2, 3, 5\}$.
    \end{itemize}
    The combination which yields the lowest RMSE over all folds is chosen. The MCMC configuration is the same as \emph{MCMC} for all CV runs. I use \texttt{dbarts} instead of \texttt{BART}, because it is faster, and because it seems to match \texttt{BayesTree} in its details (see \autoref{sec:crancomparison}), such that \emph{MCMC-CV} overall should be an accurate reproduction of the ``BART-cv'' method showcased in \textcite{chipman2010}.

    \subsubsection{Method: MCMC-XCV}
    \label{sec:modelmcmcxcv}

    I use the built-in CV routine \texttt{xbart} of the package \texttt{dbarts}, trying combinations of these hyperparameters:
    \begin{itemize}
        \item $m \in \{50, 200\}$,
        \item $\alpha \in \{0.60, 0.80, 0.95\}$,
        \item $\beta \in \{1, 2, 4\}$,
    \end{itemize}
    with one repetition per combination (\texttt{reps = 1}).\marginpar{Maybe I should use the default \texttt{n.reps = 40}. It would be as slow as \emph{MCMC-CV}, but there could be a reason it is set so high. Also it would adhere better to the goal of using a method as-is.} Instead of imposing the setup of \autoref{sec:modelmcmc}, I leave the configurations to their default values; the only difference this should entail is that the splitting grid is evenly spaced in the range of the data, but I'm not sure. For the full run after the CV, the only non-default configuration I adopt is to set the number of MCMC samples as in \autoref{sec:modelmcmc}. The general idea of this setup is to use a good existing tool as-is, to emulate the applied usage of BART.

    \subsubsection{Method: GP}
    \label{sec:modelgp}

    I implement a GP regression following closely the setup of \emph{MCMC}. The only differences are that the latent function is a GP instead of a sum of trees, and that \texttt{BART} forbids leaves containing less that 5 datapoints; the latter should not make a visible difference as this is unlikely to happen due to the trees being shallow.\marginpar{Unless the predictors are correlated! Could I make a fork of BART and put that limit in a config? It should be easy to do, I can fork bnptools and use install\_github on my fork.} The kernel is $k^{2,5}_{0,1}$ as explained in \autoref{sec:finalkernel} and \autoref{sec:interpolation}, and the variance and mean of the GP are matched to those of the sum of trees, such that the GP represents the infinite trees limit.

    The only free parameter which is not part of the GP is the error variance $\sigma$. The posterior density on $\sigma$ is
    \begin{equation}
        p(\sigma|\mathbf y) \propto \mathcal N(\mathbf y; m\mu_\mu\mathbf 1, m\sigma_\mu^2\Sigma + \sigma^2I) p(\sigma),
    \end{equation}
    where $\Sigma$ is the prior correlation matrix computed with the kernel, and I have left implicit the other hyperparameters. Since $\sigma$ is positive, I do the inference working with $\log\sigma^2$.

    Evaluating the Normal density requires computing the a decomposition of the covariance matrix, which is of the form $C+\sigma^2I$ ($C=m\sigma_\mu^2\Sigma$); this is the bottleneck of the calculation. The default option for this task is the Cholesky decomposition. However, since to draw samples the density is evaluated at multiple $\sigma$ values, it is convenient to use a decomposition which is slower but can be computed only once and recycled: I diagonalize $C$, so $C = ODO^\top$, with $O$ orthogonal and $D$ diagonal. Since $OIO^\top = I$, the same change of basis diagonalizes the whole matrix: $C + \sigma^2I = O(D + \sigma^2I)O^\top$. The rest of the calculation does not require any linear algebra operation as expensive as producing $O$.

    To sample the $\log\sigma^2$ posterior I use the ratio of uniforms method, a rejection sampling method apt for unnormalized densities with unbounded support \autocite{kinderman1977}. To tune the acceptance of the method, I find the MAP and other extrema with some numerical minimizations. I draw 1000 independent samples. To compute the posterior mean I combine the formula for GP regression posterior mean (\autoref{eq:gpreg}) with iterated expectation:
    \begin{align}
        E[\mathbf Y_\text{test}\mid\mathbf y_\text{train}] &=
            E[E[\mathbf Y_\text{test}|\mathbf y_\text{train}, \sigma^2]\mid\mathbf y_\text{train}] \approx \notag \\
        &\approx \frac 1 {n_\text{samples}} \sum_{j=1}^{n_\text{samples}} E[\mathbf Y_\text{test}|\mathbf y_\text{train}, \sigma^2_j].
    \end{align}

    \subsubsection{Method: GP-hyp}
    \label{sec:modelgphyp}

    Starting from the \emph{GP} method, I optimize the hyperparameters $\alpha$, $\beta$, $k$, $\sigma$ by fixing them to their marginal a posteriori mode. I use L-BFGS to maximize the Normal density of the GP times a prior on the free hyperparameters. To improve the quality of the approximation, I follow the heuristic of transforming the hyperparameters such that their prior is standard Normal, and find the mode in the trasformed space. The prior is:
    \begin{align}
        \alpha &\sim \operatorname{Beta}(2, 1), \\
        \beta &\sim \operatorname{InvGamma}(1, 1), \\
        \log k &\sim \mathcal N(\log 2, 2^2), \\
        \sigma &\sim \text{same as BART model.}
    \end{align}
    The inverse gamma prior on $\beta$ keeps it away from~0, to forbid the case $\alpha\to 1$, $\beta\to 0$ where the latent function becomes equivalent to the error term.\marginpar{Tuning $k$ is probably doing a lot here, the R packages leave notes on the ground, as $k$ could be inferred in the MCMC. But then why does \texttt{dbarts} set $k$ to a constant by default, despite implementing its inference?}

    I plug the optimized values of $\alpha$, $\beta$ and $k$ into the same procedure of the \emph{GP} method, thus re-deriving the posterior on $\sigma$ conditional on the other parameters.

    \subsubsection{Method: GP+MCMC}
    \label{sec:modelgpmcmc}

    Starting from \emph{GP-hyp}, instead of running \emph{GP} with the optimized values of the hyperparameters $\alpha$, $\beta$, $k$, I use \emph{MCMC} implemented with \texttt{dbarts}.

    \subsection{Performance metrics}

    I evaluate the models on two standard out-of-sample prediction error metrics: RMSE and log-loss.

    \paragraph{RMSE}

    The RMSE is the root mean square error of an estimate w.r.t.\ the true outcome values from a test set not used as data. As estimate I take the predictive posterior mean, since the mean minimizes the expected squared error.
    \begin{equation}
        \mathrm{RMSE} = \sqrt{
            \frac 1{n_\text{test}} \sum_{i=1}^{n_\text{test}}
            \big(
                y_{i,\text{test}} -
                E[Y_{i,\text{test}}\mid \mathbf Y_\text{train} = \mathbf y_\text{train}]
            \big)^2
        }.
    \end{equation}
    Since the data is standardized, the RMSE is comparable across different datasets, and should yield at most about~1.

    \paragraph{Log-loss}

    The log-loss is a distributional metric that takes into account the whole posterior. It is defined as  the negative log posterior density, evaluated at the actual true value:
    \begin{equation}
        \text{log-loss} =
            -\frac1{n_\text{test}}
            \log p(\mathbf Y_\text{test}=\mathbf y_\text{test}\mid \mathbf Y_\text{train} = \mathbf y_\text{train}).
    \end{equation}
    It is divided by the number of test samples, to make it comparable across cases with different sample sizes.

    It is justified as the only proper scoring procedure which is invariant under changes of variables, i.e., the expected value of the log-loss is minimized by computing it with the same distribution used to take the expectation, and reparametrizations can only shift the log-loss by a constant, since they just add a jacobian factor to the probability density.

    Since the data is standardized, typically the log-loss should fall in $(0, 1)$ or nearby. However, due to using a Normal error distribution in the model, which is light-tailed, the log-loss can explode to very high values if the error variance is estimated too small. For this reason, the log-loss is a quite ``strict'' metric to judge models, and it is often not useful alone: the light-tailedness is an accident of design for computability, while the statisticians using the model understand that in practice prediction errors are not best represented by a Normal distribution.

    \paragraph{Computation of the log-loss in MCMC}

    The MCMC algorithm used in BART does not produce directly a posterior density. However the error term density is not modified by conditioning on the data due to the errors being independent, which allows to estimate the marginal density by averaging the error density over the posterior samples:
    \begin{align}
        p(\mathbf y_\text{test}|\mathbf y_\text{train})
        &= \int
            \mathrm df(X_\text{test})\, \mathrm d\sigma\,
            p(\mathbf y_\text{test}, f(X_\text{test}),\sigma|\mathbf y_\text{train}) = \notag \\
        &= \int
            \mathrm df(X_\text{test})\, \mathrm d\sigma\,
            p(\mathbf y_\text{test}| f(X_\text{test}),\sigma, \mathbf y_\text{train}) p(f(X_\text{test}),\sigma|\mathbf y_\text{train}) = \notag \\
        &= \int
            \mathrm df(X_\text{test})\, \mathrm d\sigma\,
            p(\mathbf y_\text{test}| f(X_\text{test}),\sigma) p(f(X_\text{test}),\sigma|\mathbf y_\text{train}) \approx \notag \\
        &\approx \frac 1 {n_\text{samples}} \sum_{j=1}^{n_\text{samples}}
            p(\mathbf y_\text{test}| f_j(X_\text{test}),\sigma_j) = \notag \\
        &= \frac 1 {n_\text{samples}} \sum_{j=1}^{n_\text{samples}}
            \exp\left(
                -\frac{n_\text{test}}2 \log(2\pi\sigma_j^2)
                -\frac12 \frac{\|\mathbf y_\text{test} - f_j(X_\text{test})\|^2}{\sigma_j^2}
            \right). \label{eq:complogloss}
    \end{align}
    To compute the logarithm of \autoref{eq:complogloss}, I use a single \texttt{logsumexp} function that avoids under/overflow.

    \paragraph{Computation of the log-loss in GP regression}

    With GP regression, the posterior density is computable analytically because it is a multivariate Normal density (\autoref{eq:gpreg}). However I noticed that this yields a systematically lower log-loss than using \autoref{eq:complogloss}. So, to make the comparison fairer, I first draw samples from the posterior and then apply the same procedure I use for \emph{MCMC}.

    There are multiple ways to draw posterior samples in GP regression. I pick a method particularly suited to my specific context, in which the only free non-GP parameter of the model is the error variance $\sigma^2$. Using the notation of \autoref{sec:gprecap}, instead of starting from the posterior covariance matrix $\Sigma/\Sigma_{xx}$, I diagonalize both $\Sigma$ and $\Sigma_{xx}$, generate samples with covariance matrix $\Sigma$, and use Matheron's rule \autocite[72]{terenin2022} to transform them to samples with covariance matrix $\Sigma/\Sigma_{xx}$:
    \begin{align}
        \mathbf u \in \mathbb R^{n_\text{train} + n_\text{test}} &\sim
        \mathcal N(\mathbf 0, I), \\
        \Sigma = \begin{pmatrix}
            \Sigma_{xx} & \Sigma_{xx^*} \\
            \Sigma_{x^*x} & \Sigma_{x^*x^*}
        \end{pmatrix} &= V\diag\mathbf w\, V^\top, \quad\text{(diagonalization)} \\
        \begin{pmatrix}
            f(\mathbf x) \\ f(\mathbf x^*)
        \end{pmatrix}
        &= V\diag\sqrt{\max(\mathbf w, 0)}\, \mathbf u,\quad\text{(joint prior sample)} \\
        \Sigma_{xx} &= U\diag\mathbf s\, U^\top, \quad\text{(diagonalization)} \\
        \boldsymbol\varepsilon &\sim \mathcal N(\mathbf 0, \sigma^2 I), \\
        \tilde f(\mathbf x^*) &=
            f(\mathbf x^*) + \Sigma_{x^*x}(\Sigma_{xx} + \sigma^2I)^{-1}
            (\mathbf y_\text{train} - f(\mathbf x) - \boldsymbol\varepsilon) = \\
        &= f(\mathbf x^*) + \Sigma_{x^*x} U\diag\frac{1}{\max(\mathbf s, 0) + \sigma^2} U^\top
            (\mathbf y_\text{train} - f(\mathbf x) - \boldsymbol\varepsilon),
    \end{align}
    where, by Matheron's rule, $\tilde f(\mathbf x^*)$ is a posterior sample. Diagonalizing $\Sigma_{xx}$ is slower than computing $\Sigma/\Sigma_{xx}$ and its Cholesky decomposition, but it can be done once for all posterior samples of $\sigma$. I could Cholesky-decompose $\Sigma$ instead of diagonalize it, but I prefer diagonalization because it's more numerically accurate, since anyway I am already spending a similar amount of time to diagonalize $\Sigma_{xx}$. Diagonalizing $\Sigma$ also allows me to compute other quantities that I was interested in, but which I won't report on for brevity.

    \subsection{Results}

    To summarize the result of the comparisons across all datasets and random train/test splits, for each case, I take the best performing method on that case, and rescale/shift the metrics relative to its metric. Then, I plot the empirical cumulative function of these relative metrics. The reason for making them relative is that absolute performance changes a lot based on the intrinsic difficulty of the dataset.

    See \autoref{fig:comparison4} and \autoref{sec:benchmarkmain} for results and commentary. The running time of \emph{GP} and \emph{GP-hyp} does not include generating the samples used only to compute the log-loss. The maximum values of the relative metrics are:
    \begin{center}
        \begin{tabular}{lrr}
            \toprule
            & \multicolumn{2}{c}{Metric (relative)} \\
            \cmidrule{2-3}
            Method & RMSE & log-loss \\
            \midrule
            \emph{MCMC} & 4.9 & 115 \\
            \emph{MCMC-CV} & 1.9 & 20 \\
            \emph{MCMC-XCV} & 4.6 & 76 \\
            \emph{GP} & 3.7 & 45 \\
            \emph{GP-hyp} & 4.0 & 190 \\
            \emph{GP+MCMC} & 5.1 & 10 \\
            \bottomrule
        \end{tabular}
    \end{center}
    To look more closely at the difference between \emph{GP} and \emph{MCMC}, here I also show a comparison between those two methods alone in \autoref{fig:comparison2}.

    Note: in previous unpublished expositions of this work, I stated that \emph{GP-hyp} had a strictly better log-loss profile than \emph{MCMC-CV}, unlike the current results in which the opposite holds. That was due to a very poor choice on my part of how to compute the log-loss (I used a KDE of the posterior density), which, although a priori fair, was quite inaccurate.

    \begin{figure}
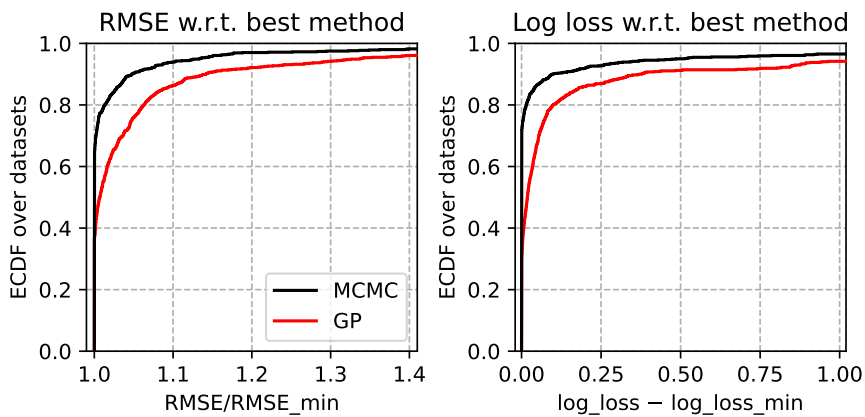

        
        \widecenter{\includempl{articleplot2}}
        
        \caption{\label{fig:comparison2} Comparison between BART (\emph{MCMC}) and its infinite trees limit implemented as a GP regression (\emph{GP}). The datasets, model definition, and fixed hyperparameters are exactly the same of \textcite{chipman2010}. Methods are compared on their predictions on a held-out test set, and the metric is scaled or shifted relative to the best of them. The plots show the distribution of relative metrics over datasets and 20 random train/test splits. The plots are truncated; the maxima are: relative RMSE: \emph{MCMC} 3.0, \emph{GP} 2.8, log-loss: \emph{MCMC} 74, \emph{GP} 15.}
        
    \end{figure}

    \subsection{Analysis}
    \label{sec:anal}

    Looking at other properties of the data or the inference beyond the performance metrics, I establish the following:
    \begin{itemize}
        
        \item \emph{GP-hyp} improves relative to \emph{MCMC-CV} when deeper trees would be more appropriate (\autoref{sec:treedepth}).
        
        \item \emph{GP-hyp} is mostly using $\alpha$ and $\beta$ together to set the depth, rather than distinguishing their effects (\autoref{sec:diffdepth}).
        
        \item There is no relationship between number of datapoints or number of features and the relative performance of \emph{GP-hyp} over \emph{MCMC-CV} (\autoref{sec:datasize}).
        
        \item \emph{MCMC-CV} does better than \emph{GP-hyp} on low-noise datasets (\autoref{sec:difficulty}).
        
        \item There is no relationship between the hyperparameters chosen by \emph{GP-hyp} and \emph{MCMC-XCV} (\autoref{sec:hyperpick}).
    
    \end{itemize}

    \subsubsection{Effect of tree depth}
    \label{sec:treedepth}

    The raison d'être of BART is using shallow trees because the tree structure Metropolis sampler does not work well on deep trees. So the tree depth choice is not justified on a statistical basis, but on a practical one. This makes me expect that there should be cases where deeper trees would be more appropriate.

    The calculation of the GP version of BART is indifferent to the depth of the trees, since they are marginalized away analytically, and \emph{GP-hyp} tunes the parameters $\alpha$ and $\beta$ that regulate the depth distribution. So maybe \emph{GP-hyp} is improving on \emph{GP} and getting closer to \emph{MCMC-CV} by doing better in particular on those datasets where deeper trees would be useful. From the point of view of the GP kernel, deeper trees means shorter correlation length.

    To investigate this, I look at the optimal values of $\alpha$ and $\beta$ found by \emph{GP-hyp}, and check if there is a relationship between them and the relative RMSE of \emph{GP-hyp} over \emph{MCMC-CV}. This comparison is shown in \autoref{fig:rmseratio}, panels~1 and~2. The slanted dashed lines vaguely indicate the trend expected if, indeed, \emph{GP-hyp} is doing better when it represents deeper trees.

    \begin{figure}
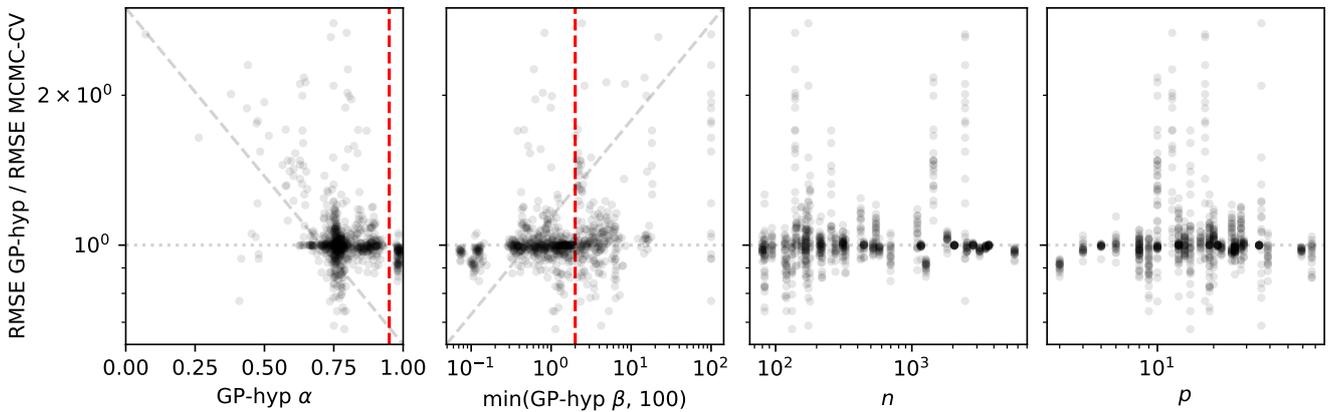

        \widecenter{\includempl{articleplot4}}
        \caption{\label{fig:rmseratio} Relationship between the relative performance of \emph{MCMC-CV} over \emph{GP-hyp}, and various potentially relevant variables. The red dashed lines mark the values of $\alpha$ and $\beta$ used in \emph{MCMC-CV}.}
    \end{figure}

    Although there is not a striking trend, the points towards the extremities do confirm the pattern by a small but unambiguous amount.

    \subsubsection{Effect of dataset size}
    \label{sec:datasize}

    I do not particularly expect in principle that dataset size would influence the relative performance of \emph{GP-hyp} over \emph{MCMC-CV}, but it is a very relevant variable, and so worth exploring nevertheless. The comparison is shown in \autoref{fig:rmseratio}, panels~3 and~4, and no pattern is visible.
    
    \subsubsection{Effect of dataset difficulty}
    \label{sec:difficulty}

    Like for dataset size, I have no expectation either way on what influence intrinsic dataset difficulty should have on \emph{GP-hyp} vs.\ \emph{MCMC-CV}, but it would be very relevant if there was, so I take a look. As a proxy of non-difficulty, I take the minimum RMSE between the two methods. The result is shown in \autoref{fig:explore}, panels~1-2. \emph{MCMC-CV} is evidently doing better on low-RMSE datasets, while \emph{GP-hyp} does better on high-RMSE ones.

    \begin{figure}
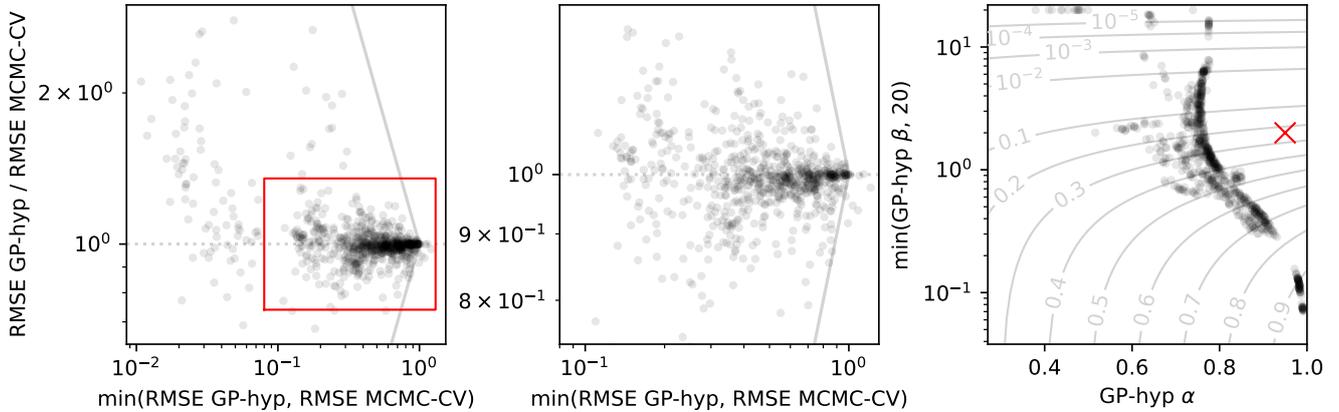

        \widecenter{\includempl{articleplot5}}
        \caption{\label{fig:explore} \textbf{Left and center panel:} relationship between the relative RMSE of \emph{MCMC-CV} over \emph{GP-hyp}, and the best RMSE; this tells if one of the two models tends to be better depending on the difficulty of the task. The funnel marks the boundary at which the worse method crosses RMSE=1. \textbf{Right panel:} relationship between the $\alpha$ and $\beta$ values found by \emph{GP-hyp}; the red cross marks the fixed values used in the MCMC, while the level curves indicate the value of $P_1=\alpha 2^{-\beta}$.}
    \end{figure}

    The data is standardized, so the maximum RMSE is about~1, when the predictive variance is equal to the total data variance. Many datasets have RMSEs above 0.5. Even though on this kind of datasets these sophisticate models may seem wasted, they are still correctly reverting to behaving like a regression with a constant function, rather than making wild extrapolations.

    \subsubsection{Differentiation of depth hyperparameters}
    \label{sec:diffdepth}

    The two depth-regulating parameters have a different effect on the trees: increasing $\beta$ not only limits the depth like lowering $\alpha$, but also makes the trees more balanced. To see if \emph{GP-hyp} is making use of this fine distinction, I plot together the values of $\alpha$ and $\beta$, and the values of the prior probability $P_1=\alpha 2^{-\beta}$ of creating children nodes from a node at depth~1. This comparison is shown in \autoref{fig:explore}, panel~3.

    The pairs of values mostly follow a streak orthogonal to the level curves of $P_1$, or slight variations of the same path. This suggests that the effects of the two parameters are poorly distinguished, since the inference is sticking to a starting point and then moving only along the direction that changes $P_1$. The specific location of the path along the level curves likely depends mostly on the choice of prior over $\alpha$, $\beta$.

    The striking feature of this graph is how wide the range of probabilities is. The extreme choices correspond to depth~1 trees (no additive interactions) and to deep trees (almost white noise). This suggests that \emph{GP-hyp} is reaching its performance primarily by adapting to the differences between datasets, rather than being good on a specific kind of dataset. It also weakly suggests that BART is limited by not being flexible about the depth of the trees, and it could be useful in general to at least try shallower depths since they are computationally cheap.

    Note: since the kernel approximation is poorer for high probability values (see \autoref{sec:interpolation}), high values of $P_1$ in \emph{GP-hyp} correspond to lower values of $P_1$ in actual BART.

    \subsubsection{Relationship between hyperparameter choice in MCMC and GP}
    \label{sec:hyperpick}

    Both \emph{MCMC-XCV} and \emph{GP-hyp} tune $\alpha$ and $\beta$, and I would expect their choices to be correlated. Instead, as shown in \autoref{fig:hyperpick}, there is no relationship at all.

    \begin{figure}
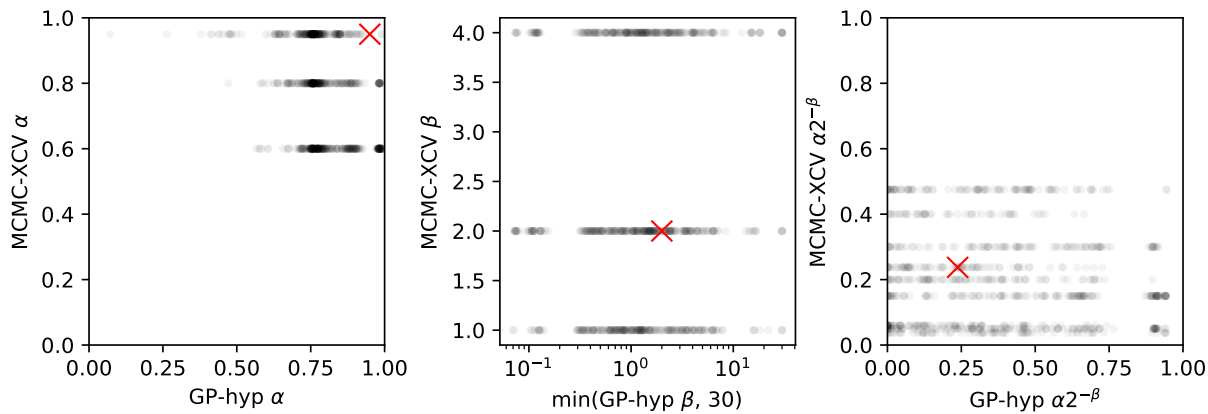

        \widecenter{\includempl{articleplot6}}
        \caption{\label{fig:hyperpick} Relationship between the depth regulating hyperpameters chosen by \emph{MCMC-XCV} and \emph{GP-hyp}. The red crosses mark the fixed values used in methods that do not optimize them.}
    \end{figure}

    \subsection{Cherry-picking/overtesting concerns}

    When showcasing benchmarks, there is always the possibility that the results were selected to make oneself look better. This unfortunately happens even in good faith. As hypothetical example, in choosing the datasets, I could have picked only datasets were I expected my method to work well. As assurance against this kind of problem, I took the following precautions:
    \begin{itemize}
        
        \item I test many methods together. Since the metrics are relative to the best method on each dataset, this way it is more difficult for me to influence the outcome.

        \item There are many datasets which again makes it more difficult to target any particular outcome.

        \item I reproduce as closely as possible the setup of the original BART paper \autocite{chipman2010}, with the method \emph{MCMC-CV}, the datasets, and dataset preprocessing. Since that paper was worked on for a long time \autocite[at least since][]{chipman2006}, I presume an effort was made to obtain good performance, so I'm letting the adversary choose the playing field.

        \item I considered all the popular implementations of BART. In particular the method \emph{MCMC-XCV} should represent the best of what the landscape has to offer out of the box.

        \item When trying out methods and their variations for the first time, I run them only on the first train/test split out of 20. Overtest on the first split hopefully does not completely generalize to overtest on all the splits.

        \item I had some degrees of freedom in choosing the parameters $D$, $\gamma$, $r$ of the kernel $k^{D,r}_{0,\gamma}$. Initially I explored various values and used those that yielded the best performance. I walked that back and used first-principles settings for everything.

    \end{itemize}

    \subsection{Choice of BART implementations}
    \label{sec:crancomparison}
    
    I consider the packages on CRAN that implement BART as originally defined in \textcite{chipman2010}, which are also the most popular by RStudio CRAN mirror download counts:
    \begin{itemize}
        \item \texttt{bartMachine} \autocite{kapelner2016,kapelner2023}
        \item \texttt{dbarts} \autocite{dorie2024}
        \item \texttt{BART} \autocite{sparapani2021,mcculloch2024}
        \item \texttt{BayesTree} \autocite{chipman2010,chipman2024}
    \end{itemize}
    I compare them against method \emph{GP} (\autoref{sec:modelgp}) on the \texttt{Abalone} dataset, $n_0=4177$, $p_0=8$, split in train-test 5:1 and transformed as described in \autoref{sec:benchmarkdata}. I use the hyperparameter values listed in \autoref{sec:modelmcmc}, but for a higher number of trees $m=1000$, to move BART closer to its GP limit. I run the MCMC for 2000 samples, dropping the first 1000.

    \paragraph{Configuration of the BART packages}

    I try to configure the packages to represent exactly the same Bayesian model I use for \emph{GP} (apart from the number of trees being finite). This is not trivial as their documentation is partial. I am confident that I'm following  the specification with \texttt{BART}, a bit less confident for \texttt{dbarts} and \texttt{BayesTree}, and I don't think I am perfectly adherent with \texttt{bartMachine}.

    I configure \texttt{BART} and \texttt{dbarts} to run 4 chains in parallel for 1250 samples, dropping the first 1000 and merging the rest to obtain 1000 posterior samples. I configure \texttt{bartMachine} to run a single chain for 2000 samples because it uses too much memory with multiple chains. \texttt{BayesTree} does not provide parallelization.

    \paragraph{Results}

    The running times and test set prediction errors are:
    \begin{center}
        \begin{tabular}{lrr}
            \toprule
            Method & RMSE & Time \\
            \midrule
            \emph{GP} & 0.589 & \SI{10}s \\
            \texttt{dbarts} & 0.584 & \SI{17}s \\
            \texttt{BART} & 0.585 & \SI{111}s \\
            \texttt{bartMachine} & 0.585 & \SI{101}s \\
            \texttt{BayesTree} & 0.585 & \SI{487}s \\
            \bottomrule
        \end{tabular}
    \end{center}
    The RMSE is very similar. \texttt{BART} is 6x slower than \texttt{dbarts}. Note that the time comparison between the R packages and \emph{GP} is unfair because I'm using an unusually large number of trees.

    To investigate further, in \autoref{fig:abalone} I compare the predictions and the error standard deviation across packages.
    \begin{figure}
        
        \widecenter{\includempl{comparison-Abalone-burnin1000-samp1000-tree1000}}
        
        \caption{\label{fig:abalone} Comparison of the BART packages \texttt{BART} , \texttt{bartMachine}, \texttt{dbarts}, \texttt{BayesTree}, and my infinite-trees limit \emph{GP}. In each column, the axis scales are the same in all plots. The rightmost column shows the marginal posterior of the error standard deviation, the only free hyperparameter. See \autoref{sec:crancomparison}.}
        
    \end{figure}
    There is some small but visible discrepancy on these. The methods are divided in two groups:
    \begin{itemize}
        \item \texttt{BayesTree}, \texttt{dbarts},
        \item \texttt{BART}, \texttt{bartMachine}, \emph{GP}.
    \end{itemize}
    The methods agree with each other within each group, but disagree with the other group. The first group yields a larger error standard deviation than the second. Repeating the experiment with a different dataset, a longer MCMC chain, a different random seed, or a different number of trees, does not change the general picture.

    \paragraph{Interpretation of the differences}

    The fact that \emph{GP} agrees with \texttt{BART} and \texttt{bartMachine} is unlikely to be a coincidence because \emph{GP} is a wildly different implementation. The fact that \texttt{BayesTree} agrees with \texttt{dbarts} makes sense because \texttt{dbarts} claims to be an improved drop-in replacement for \texttt{BayesTree}. So overall I take the conclusion that \texttt{BART} and \texttt{bartMachine} are more likely to be implementing BART as exactly specified on paper.
    
    A potential explanation for the difference between \texttt{BayesTree}-\texttt{dbarts} and the others is that they are the only ones to implement all the tree moves in the Metropolis proposal listed in \textcite{chipman1998}, while \texttt{BART} and \texttt{bartMachine} only implement the simplest moves \autocites[][\S A.3, p.~38]{kapelner2016}[][\S C, p.~57]{sparapani2021}. So there might be some small error or misspecification in the implementation of \texttt{BayesTree}-\texttt{dbarts} favored by the complexity of the moves.

    \paragraph{Final pick}

    The predictive performance is the same across packages. The fastest is \texttt{dbarts}, \texttt{BART} uses much less memory than \texttt{bartMachine}, and \texttt{BART} is more likely than the others to be implementing the model as on paper. So I choose \texttt{BART} for \emph{MCMC} where I care about making an exact comparison between the finite and infinite trees cases, and \texttt{dbarts} for all other cases.

    \section{Inferential accuracy on synthetic data}
    \label{sec:simdata}

    I analyse the Atlantic Causal Inference Conference (ACIC) 2022 Data Challenge synthetic datasets with a GP version of BCF using the BART kernel derived here; c.f.\ \textcite{horii2023} for another work looking at a GP version of BCF, but with standard kernels.

    \subsection{Data generating process}

    I use the ACIC 2022 Data Challenge datasets \autocite{thal2023}. They are 3400 datasets generated in groups of 200 from a family of 17 data generating process (DGP) variations on the same setup. The simulation is calibrated to imitate real longitudinal data about the causal effect of policy interventions to reduce Medicare spending. I use only the first 850 datasets to speed up execution; the order is randomized. The units are \num{300000} patients, grouped in 500 practices. The binary intervention $Z$ varies at the practice level, so patients within an hospital are either all subject to the intervention, or not. The outcome $Y$ is yearly expenditure, measured over 4 years. The intervention takes effect starting from year 3. There are patient-level covariates $V$, and practice-level covariates $X$, fixed throughout the years. Patients can enter or drop out from practices. The DGP is stated to satisfy a parallel trends  assumption:
    \begin{equation}
        Y_{3,4}(0) - Y_{1,2}(0) \Perp Z \mid X, V, Y_1, Y_2, \label{eq:did}
    \end{equation}
    where the subscript indicates the year, and $Y(z)$ indicates the potential outcome $Y(Z=z)$, representing $Y$ in a simulation where $Z$ was set to $z$ irrespectively of its actual causal antecedents, all else the same \autocite{rubin1974,imbens2015}. This assumption allows to identify the average causal effect on the treated units (SATT). See the results report \autocite{thal2023} and the official website \autocite{lipman2022} for further details.

    \subsection{Estimand}

    The estimand is the Sample Average Treatment effect on the Treated (SATT):
    \begin{equation}
        \operatorname{SATT}(S,T) =
        \frac 1 {\sum_{i:X_i \in S, Z_i = 1} \sum_{t\in T} \sum_j 1}
        \sum_{i:X_i \in S, Z_i = 1} \sum_{t\in T} \sum_j
        \big(Y_{ijt}(1) - Y_{ijt}(0)\big), \label{eq:patientsatt}
    \end{equation}
    where $T\subseteq\{3,4\}$ indicates the post-intervention years, and $S$ the subgroup of practices selected by the values of $X$, to be averaged over; and the indices $ijt$ run over practice, patient, and year respectively.

    The average is computed over the observed patients each year, without correcting for patient movement.

    The competition materials provide the true value of the SATT for various subgroups, computed by simulating both $Y(0)$ and $Y(1)$. The data contains only either one of the two as $Y_{ijt} = Z_i 1_{t>2} Y_{ijt}(1) + (1 - Z_i1_{t>2}) Y_{ijt}(0)$.
    
    \subsection{Structural modeling choices}

    I analyse the data aggregated at the practice level, because I don't have an implementation of Gaussian process regression with the BART kernel that scales to the number of patients. I start from the provided pre-computed averages, and add the within-practice standard deviation, minimum, and maximum of the non-binary patient-level covariates $V_{1,2,4}$. Since the formation of practices can change with time, the summaries of the patient-level covariates are yearly variables.

    Instead of hewing to the identifiability assumption provided by the competition rules, I assume the Structural Causal Model (SCM) \autocite{pearl2009} depicted in \autoref{fig:acicgraph}, left panel.
    \begin{figure}
        \begin{tikzpicture}
            % Nodes
            \node[latent] (Y1) {$\mathbf Y_1$} ;
            \node[latent, right=of Y1] (Y2) {$\mathbf Y_2$} ;
            \node[latent, right=of Y2] (Y3) {$\mathbf Y_3$} ;
            \node[latent, right=of Y3] (Y4) {$\mathbf Y_4$} ;
            \node[latent, above left=of Y1, yshift=1em] (V) {$\mathbf V$} ;
            \node[latent, above right=of Y2, xshift=-3ex, yshift=4em] (Z) {$Z$} ;
            \node[latent, above=of V] (X) {$X$} ;

            % Edges
            \edge {V} {Y1,Y2,Y3,Y4,Z};
            \edge {Y1,Y2} {Z};
            \edge {Z} {Y3,Y4};
            \edge {X} {Z,Y1,Y2,Y3,Y4};
            \draw[->] (Y1) -- (Y2);
            \draw[->] (Y2) -- (Y3);
            \draw[->] (Y3) -- (Y4);
            \path (Y1) edge [->,out=-20,in=-160] (Y3);
            \path (Y1) edge [->,out=-20,in=-160] (Y4);
            \path (Y2) edge [out=-20,in=-160] (Y4);

            % Plates
            \plate {plate} {(Z)(X)(Y1)(Y2)(Y3)(Y4)(V)} {practice $i$} ;
        \end{tikzpicture}
        \hfill
        \begin{tikzpicture}
            % Nodes
            \node[latent] (Y) {$\bar Y_{3,4}$} ;
            \node[latent, above left=of Y, xshift=2ex] (Z) {$Z$} ;
            \node[latent, left=of Y] (X) {$\tilde X$} ;

            % Edges
            \edge {Z} {Y};
            \edge {X} {Z,Y};

            % Plates
            \plate {plate} {(Z)(X)(Y)} {practice $i$} ;
        \end{tikzpicture}
        \caption{\label{fig:acicgraph} Causal models used to analyse the data, using plate notation to imply repetition of the nodes. Left panel: unaggregated model, presumed to be a mild assumption. Boldface variables are vectors over the patients in each practice. Right panel: aggregate model, obtained with approximations from the left one.}
    \end{figure}
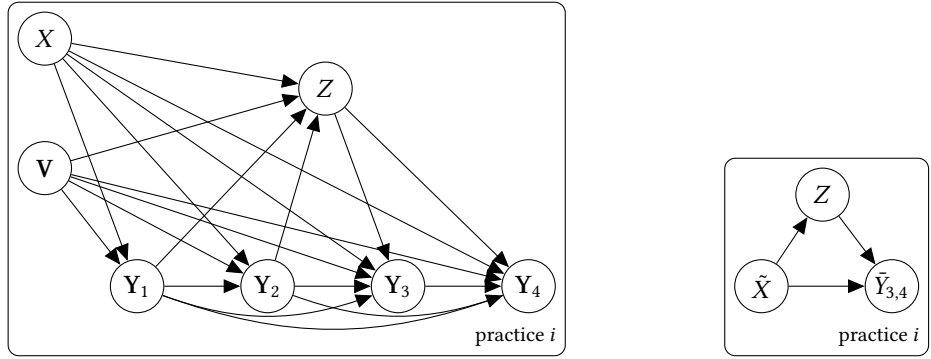
    It is a reasonable assumption because yielding the unconfoundedness property of \autoref{eq:did} without also satisfying that causal model would probably imply deliberate trickery on part of the authors, which does not seem to be in the style of the ACIC Data Challenge. This however is not a good justification because it tries to game the task; to refer to the actual use cases that the simulation is intended to imitate, that causal model implies assumptions practitioners\marginpar{Maybe ask Mealli about this.} would not in general be more or less willing to make respect to no-intervention trend unconfoundedness.

    The relevant conditional independence implied by the graph is
    \begin{equation}
        \{Y_{ijt}(z_i)\}_j \Perp Z_i \mid X_i, \{V_{ij}, Y_{ij1}, Y_{ij2}\}_j, \qquad t \in\{3,4\}.
        \label{eq:exactunconfoundedness}
    \end{equation}
    To make the aggregate analysis feasible, I hypothesize that, within some approximation, this also holds conditioning on the aggregates along patients:
    \begin{equation}
        \{Y_{ijt}(z_i)\}_j \Perp Z_i \mid
        X_i,
        \bar V_{i1}, \bar V_{i2},
        \bar Y_{i1}, \bar Y_{i2}, \bar Y_{i2} - \bar Y_{i1},
        n_{i1}, n_{i2},
        \qquad t \in\{3,4\},
        \label{eq:approxunconfoundedness}
    \end{equation}
    where $\bar V$ includes the additional summaries mentioned above in addition to averages, $n_{it}$ is the number of patients in practice $i$ during year $t$, and I added the pre-intervention outcome trend because it is explicitly hinted at as an important covariate in the rules. Instead, aggregation on the l.h.s.\ is an exact implication:
    \begin{equation}
        \bar Y_{it}(z_i) \Perp Z_i \mid X_i, \bar V_{i1}, \bar V_{i2}, \bar Y_{i1}, \bar Y_{i2}, \bar Y_{i2} - \bar Y_{i1}, n_{i1}, n_{i2}, \qquad t \in\{3,4\}.
    \end{equation}
    Defining $\tilde X_i = (X_i, \bar V_{i1}, \bar V_{i2}, \bar Y_{i1}, \bar Y_{i2}, \bar Y_{i2} - \bar Y_{i1}, n_{i1}, n_{i2})$, the final operative causal model is the basic one represented in \autoref{fig:acicgraph}, right panel, and the practice-level identifiability assumption is
    \begin{equation}
        \bar Y_{3,4}(z) \Perp Z \mid \tilde X.
    \end{equation}

    The target estimand (\autoref{eq:patientsatt}) is an average over patients, so it can be estimated working at the practice level:
    \begin{equation}
        \operatorname{SATT}(S,T) =
        \frac 1 {\sum_{i:X_i \in S, Z_i = 1} \sum_{t\in T} n_{it}}
        \sum_{i:X_i \in S, Z_i = 1} \sum_{t\in T} 
        n_{it} \big(\bar Y_{it}(1) - \bar Y_{it}(0)\big).
        \label{eq:sattaggr}
    \end{equation}
    Since the target of the inference is an average over the observed patients, I don't have to deal with the issue of move-in and drop-out. In a real setting, this would be relevant, because the intervention could have an effect on them: e.g., the policy could be lowering the expenses by ``convincing'' problematic patients to move to another hospital, rather than by lowering the expenses for each individual. However, in the context of the simulated competition, what matters is estimating the target quantity, whatever its meaning.

    \subsection{Specific model for inference}

    In the following, the binary covariates $X_{2,4}$ are one-hot encoded.

    I regress $Z$ on $\tilde X$ using BART with probit link, with the \texttt{bart} function of the R package \texttt{dbarts} \autocite{dorie2024}, with default settings but for \texttt{usequants = True, numcut = \emph{\textless large number\textgreater}, nchain = 4}, i.e., I use the midpoints between consecutive observed covariate values as grid of allowed tree splits, and run 4 Markov chains for 1250 samples each, discarding the first 1000. I extract the posterior mean as estimate of the propensity score: $\hat\pi_i = E[P(Z=1|\tilde X_i) \mid \text{data}]$.\marginpar{It would be nice to estimate $\hat\pi$ with GP-BART as well.}

    I then use a GP version of the Bayesian Causal Forests (BCF) model detailed in \autoref{sec:bcf} to regress $\bar Y_t$ on $(\tilde X, Z, \hat\pi, t)$. I sample 50 times a Laplace approximation of the hyperparameters posterior, then sample the Gaussian process 20 times for each hyperparameter sample. I compute the target estimands with \autoref{eq:sattaggr} on each sample, obtaining 1000 posterior samples of the SATT on all units and within various subsets specified by the competition rules. I take as estimate the posterior mean, and as \SI{90}\% interval the range from the \SI{5}\% to the \SI{95}\% posterior quantile.

    \subsection{GP version of BCF}
    \label{sec:bcf}

    I adapt the BCF model of \textcite{hahn2020} to use a GP with the BART kernel instead of BART. I also add a data transformation step with the Yeo-Johnson formula \autocite{yeo2000}, and an additional GP term that depends only on the propensity score to increase its a priori relevance amongst the covariates.

    Using the notations $\mathbf x_i = (\tilde x_i, t_i)$ and $y_i = \bar y_{t_i}$, the complete probabilistic model is:
    \begin{align}
        y_i &= g^{-1}(\eta_i; \lambda_g), \\
        g &= \text{Standardization, then Yeo-Johnson with parameter $\lambda_g$}, \\
        \eta_i &= m +
                \lambda_\mu \mu(\mathbf x_i, \hat\pi_i) +
                \lambda_\tau \tau(\mathbf x_i) (z_i - z_0) +
                \lambda_\pi f(\hat\pi_i) +
                \varepsilon_i,
    \end{align}
    \begin{align}
        \varepsilon_i &\sim \mathcal N(0, \sigma^2), &
        \log \sigma^2 &\sim \mathcal N(0, 2^2), \label{eq:bcferrorvar} \\
        m &\sim \mathcal N(0, 1), &
        z_0 &\sim U(0, 1), \\
        \lambda_\mu, \lambda_\pi &\sim \mathrm{HalfCauchy}(2), &
        \mu &\sim \mathcal{GP}(0, k_\mathrm{BART}(\cdot,\cdot;\alpha_\mu, \beta_\mu) ), \\
        \lambda_\tau &\sim \mathrm{HalfNormal}(1.48), &
        \tau &\sim \mathcal{GP}(0, k_\mathrm{BART}(\cdot,\cdot;\alpha_\tau, \beta_\tau) ), \\
        \lambda_g/2 &\sim \mathrm{Beta}(2,2), &
        f & \sim \mathrm{GP}(0, k_\text{Laplace}), \\
        \alpha_\mu, \alpha_\tau &\sim \mathrm{Beta}(2, 1), &
        \beta_\mu, \beta_\tau &\sim \mathrm{InvGamma}(1, 1).
    \end{align}
    I left the conditionals implicit for brevity. The parameters of the half-Normal and half-Cauchy distributions are set to yield median~1. The inverse gamma over $\beta_*$ forbids $\beta_* \to 0$ to avoid identification problems as $k_\mathrm{BART}$ tends to white noise. As BART kernel I use $k_\text{BART} = k^{2,5}_{0,1}[P_0 \mapsto 1]$; I set $P_0=1$ to remove the BART implicit random intercept term, as I already have the parameter $m$. The Laplace kernel is $k_\text{Laplace}(x,x') = e^{-|x-x'|}$.

    To make inference, I follow the usual practice of marginalizing the whole Gaussian process ($\mu$, $\tau$, $f$, $m$, $\boldsymbol\varepsilon$), and then maximizing the marginal posterior of the other parameters, considering them hyperparameters. Finally, I approximate the posterior of the hyperparameters as Normal, using as covariance matrix the BFGS\marginpar{BFGS has one article to cite per acronym letter, so I won't do it.} inverse Hessian estimate computed on the last 10 steps of the optimization. To improve the approximation, I reparametrize the hyperparameters such that their prior is a standard Normal.

    The parameter of the Yeo-Johnson transformation is included in the inference. This requires adding the jacobian factor $\prod_i g'(y_i;\lambda_g)$ to the marginal likelihood.

    \subsection{Justification of the weighting choice}

    I use unweighted errors (\autoref{eq:bcferrorvar}). The practices are inhomogeneous in size, so it seems natural that the error variance should instead be weighted with practice size, as $\sigma^2/n_i$. Indeed, formally, one could imagine deriving the model for aggregate data starting from a model over the unaggregated data, flat within each group, and homoscedastic. Then its average is a model with error variances weighted as expected. The aggregation of covariates reduces in general the predictive power, but the within-group variance was already allocated completely to the error term, so there is no inconsistence.

    This reasoning, however, misses that aggregating the covariates interferes with the unconfoundedness assumption. In going from \autoref{eq:exactunconfoundedness} to \autoref{eq:approxunconfoundedness}, it's only an approximation based on intuition that the potential outcomes should stay independent of the treatment given the group average of the covariates, rather than the full set of all individual covariates within each group.

    So I'd like the error term to also capture the bias from what the model estimates to what would be a valid setup to measure the causal effect. I can't simply add a set of free bias parameters, because it is not something that can be inferred from the data alone: it's not about predicting better the regression function given the aggregated covariates, it's about knowing that the prediction conditional on a certain set of covariates can be used to infer a causal relationship rather than a mere correlation. The unconfoundedness assumption is not implied by the data.

    An inflated error term does not automatically make the model consistent again w.r.t.\ the causal effect, because it is random: it does not know in which direction the correct causal effect lies, and the data can't tell. It is a finite-sample hack to increase the estimated uncertainty to a degree that should include the correct result.

    The causal error probably increases with the group size, since more covariate values have been aggregated, straying farther from the original unconfoundedness statement. I do not know exactly how, so to keep it simple, I decided to assume the two effects balance and to use a constant error variance in the aggregate model.

    \subsection{Results}

    Averaging over datasets, I obtain root mean square error (RMSE) 17(1) (top \SI{21}\% amongst listed competitors) and coverage 0.935(8) (top \SI7\% in distance from 0.90). The classification with the SATT over all units is reported in \autoref{fig:aciclist}, while the per-group metrics are shown in \autoref{fig:acicgroup}.

    \begin{figure}
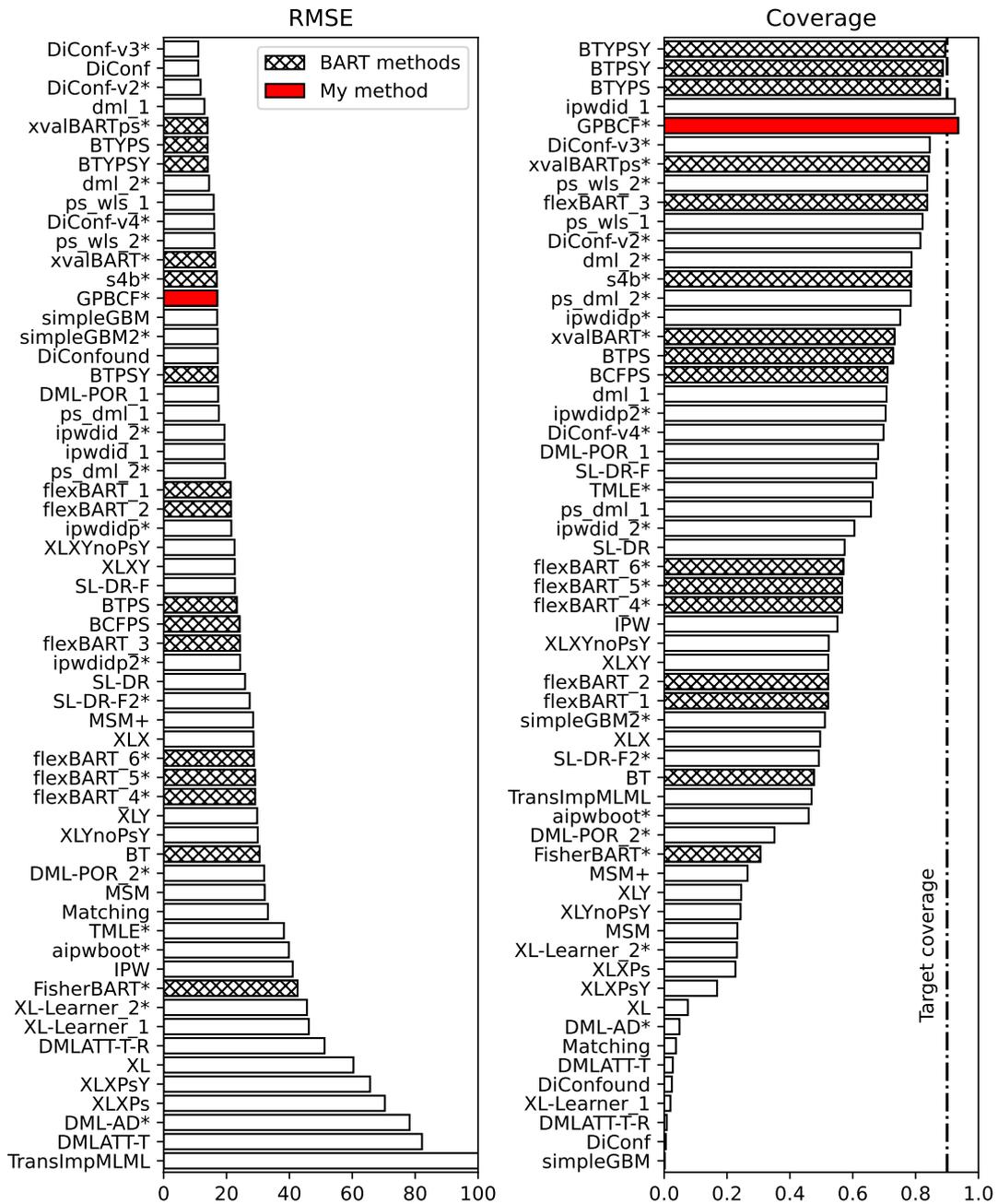

        \widecenter{\includempl{analysis_list}}
        \caption{\label{fig:aciclist} The official results of the ACIC 2022 Data Challenge (an hidden truth estimation competition with simulated data), plus my own method (\autoref{sec:simdata}) marked in solid red. Each label identifies a competitor; an asterisk after the label indicates that the entry is post-competion, submitted after the results were out and the dissemination of the hidden truth.}
    \end{figure}

    \begin{figure}
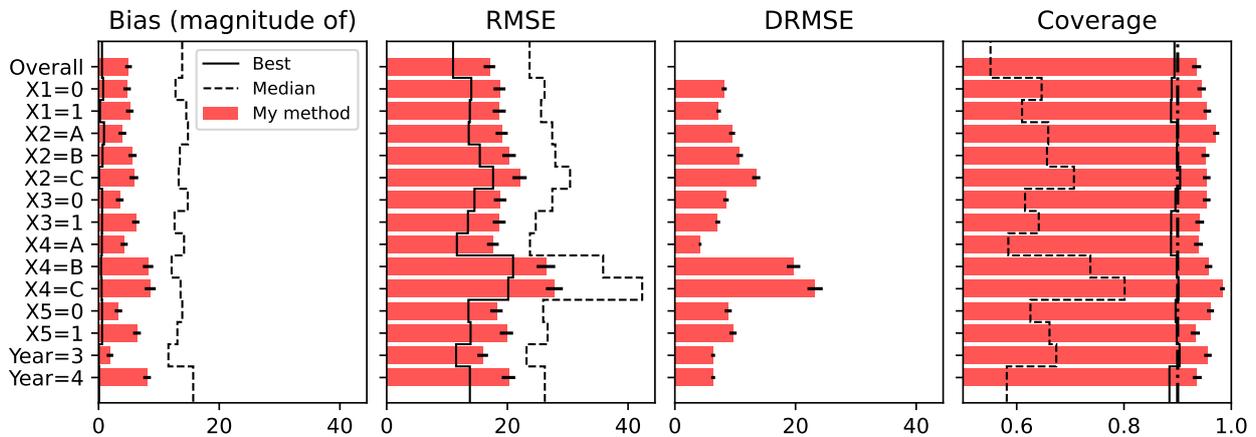

        \widecenter{\includempl{analysis}}
        \caption{\label{fig:acicgroup} Subgroup analysis results of the ACIC 2022 Data Challenge (an hidden truth estimation competition with simulated data), with my method (\autoref{sec:simdata}) pitted against the per-subgroup best and median competitor. The vertical axis labels indicate the subgroup. The DRMSE is the RMSE for the estimate of the difference between the effect in one subgroup and the grand effect; unfortunately the DRMSE results were not provided with the competition material. The small dashes set on the tip of each bar are standard errors.}
    \end{figure}

    To benchmark the ability to estimate the heterogeneity of the effect, the competition uses the DRMSE \autocite[15]{thal2023}, defined as the RMSE of the estimate of the difference between the effect in one subgroup and the overall effect:
    \begin{equation}
        \mathrm{DRMSE}(S) = \sqrt{E[((\hat\theta(S) - \hat\theta) - (\theta(S) - \theta))^2]}.
    \end{equation}
    The post-competition material does not include sufficient information to compute the DRMSE of the other competitors, so I visually read the results off \textcite[fig.~2, p.~16]{thal2023}:
    \begin{center}
        \begin{tabular}{llll}
            \toprule
            & \multicolumn{3}{c}{DRMSE} \\
            \cmidrule{2-4}
            Subgroup & Best & My method & BART second-worst \\
            \midrule
            Largest & 3.0 & 4.2(2) & 6.1 \\
            Smallest & 16 & 23(1) & 26 \\
            \bottomrule
        \end{tabular}
    \end{center}
    I consider the worst performer only amongst BART-based methods, which were overall the best at this, but ignore the BART absolute worst because it is an outlier, with the rest of the BART methods nicely bunched together.

    Overall, by RMSE my method is in the bunch of decent methods, by DRMSE it performs in line with the other BART-based methods (the best ones), and by coverage it's amongst the best and one of the only two methods overcovering rather than undercovering. The two reasons I can think of that got me the highest coverage are:
    \begin{itemize}
        
        \item I sampled the hyperparameters: I guess all other competitors with complex models kept hyperparameters fixed to default values, or optimized them with CV without keeping into account their uncertainty.

        \item I used homoscedastic errors instead of weighting by group size, to keep into account the unconfoundedness violation due to aggregation. Here I'm less sure about other competitors' choices, I suspect most of them have done the same.

    \end{itemize}

    \subsection{Competition fairness}

    My entry is post-competition. This makes it unfairly advantaged because I had already seen which methods were successful and knew more about the DGP. For example, maybe I would not have chosen the ACIC 2022 challenge as test bed, if I didn't already know that BART methods had been successful in it.

    Moreover, to check my procedure, I picked the first (and only the first) dataset, and visually compared its hidden truth with the results of the first version of my code. I noticed no mistakes then, but I ended up doing various adjustments afterwards anyway. I chose them based on general reasoning and other numerical experiments, rather than picking what minimized the error on that specific dataset. These changes were 1) sampling the hyperaparameters, 2) using homoscedastic errors, 3) adding a boosting term for the propensity score.

    Even though I ``know'' why I did them, I don't fully trust my brain to know not to have made different choices based on observing the hidden truth. What mostly keeps me unworried is: the randomness of a single dataset, there being 17 variations of the DGP, and the choices being binary (I did not try subversions of them, or properly compared combinations).

    All this is somewhat compensated by there being other post-competition entries in the results.

    These issues limit what can be inferred from my comparison. Overall, I think I can claim that my GP version of BART performs in line with the other BART methods, which is what I am mostly interested about here.

    \section{Comparison with the Laplace kernel}
    \label{sec:laplacekernel}

    \textcite[eq.~5.2]{linero2017} states that with some modifications to the BART prior, chosen to simplify calculations, the BART prior correlation function becomes the exponential (or Laplace) kernel
    \begin{equation}
        k(\mathbf x, \mathbf x') = \exp(-\eta \|\mathbf x-\mathbf x'\|_1). \label{eq:laplacekernel}
    \end{equation}
    In this section I compare the Laplace kernel with my exact derivation (\autoref{th:corr}) and with my computational approximation $k^{2,5}_{0,1}$ (\autoref{sec:compcorr}).

    \subsection{The Laplace kernel cannot approximate the BART kernel}

    In this section I provide a semi-formal argument that the Laplace kernel in \autoref{eq:laplacekernel} can not approximate the BART kernel well.

    Consider that BART, with default hyperparameters, gives low prior weight to interactions. This means that the covariance function can be written as a sum separate along covariate axes plus some small term. To make the exponential of the Laplace kernel approximately separable as a sum, its parameter $\eta$ has to be small. However, a small $\eta$ means that the minimum correlation will be close to~1, while it should stay fixed at $1 - \alpha$ according to property~\ref{it:lower}, \autoref{th:corrprop}.

    To make this argument more precise, I'll try to approximate the BART correlation function with the form
    \begin{equation}
        k(\mathbf x, \mathbf x')
        = \sigma_0^2 + \sigma^2 \exp\left(
            -\eta\alpha \frac{\|\mathbf x - \mathbf x'\|_1}p
        \right),
        \label{eq:altkernel}
    \end{equation}
    where $\sigma_0^2$ and $\sigma^2$ are free parameters, and I added a factor $\alpha/p$ such that the linear term of the exponential at $\eta=1$ matches the $\beta\to\infty$ limit of BART (property~\ref{it:nointer}, \autoref{th:corrprop}), assuming that the BART grid is evenly spaced in $[0, 1]$ along all axes. The parameter $\sigma_0^2$ represents an intercept, while $\sigma^2$ rescales the variance of the non-constant part of the kernel; I added these because they are the most basic extensions commonly used in GP regression, so comparing a ``pure'' Laplace kernel with the BART kernel would not be practically meaningful.

    I now impose that \autoref{eq:altkernel} match the extremal cases of the BART kernel:
    \begin{align}
        \begin{cases}
            k(\mathbf 0, \mathbf 0) = 1, \\
            k(\mathbf 0, \mathbf 1) = 1 - \alpha
        \end{cases}
        &&\implies &&
        \begin{cases}
            \sigma_0^2 + \sigma^2 = 1, \\
            \sigma_0^2 + \sigma^2 e^{-\eta\alpha} = 1 - \alpha.
        \end{cases}
    \end{align}
    Solving for $\sigma^2$ yields
    \begin{equation}
        \sigma^2 = \frac\alpha{1 - e^{-\eta\alpha}}.
    \end{equation}
    The first equation of the system implies $\sigma^2 \le 1$, so
    \begin{align}
        \frac\alpha{1 - e^{-\eta\alpha}} \le 1
        &&\implies &&
        \eta \ge \frac{-\log(1 - \alpha)}\alpha.
    \end{align}
    For values of $\alpha$ close to 1, this lower bound on $\eta$ becomes large. For the default $\alpha=0.95$, it is $\eta \ge 3.2$. As clearly visible in \autoref{fig:plotcov}, with the default $\beta=2$ the BART kernel (approximated to \SI{1}\%, see \autoref{sec:finalkernel}) is pretty close to its straight-shaped $\beta\to\infty$ limit, while the Laplace kernel with $\eta=3.2$ is already too curved to be approximated by its straight-shaped linear term: to approximate $\exp(-\eta\alpha\|\mathbf x-\mathbf x'\|_1/p) \approx 1-\eta\alpha\|\mathbf x-\mathbf x'\|_1/p$, $\eta$ should be at least less than~1.

    \subsection{Some Laplace-like kernel may approximate the BART kernel}

    Graphically, the problem with the Laplace kernel explained in the previous section is that if $\eta\to 0$, the shape of the kernel becomes straight, but it also tends to a constant function close to~1 over the domain $[0,1]^p\times [0,1]^p$. Instead, if $\eta$ is large, the kernel varies over the domain, but it's curved, instead of almost straight like the BART kernel. See \autoref{fig:bartvslaplace}.

    \begin{figure}
        \widecenter{\includegraphics[width=80ex]{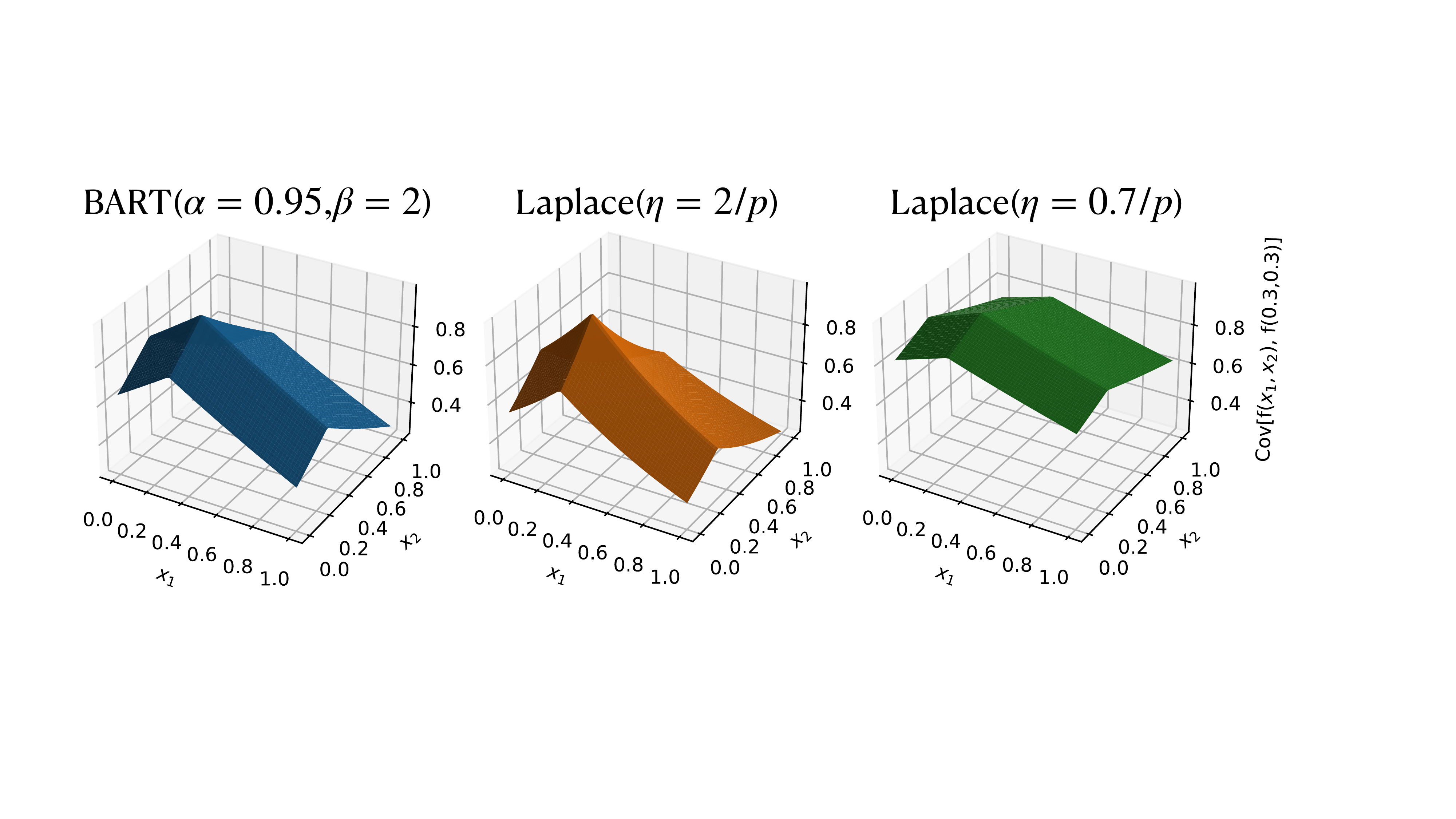}}
        \caption{\label{fig:bartvslaplace} Graphic comparison of the BART kernel (computed with $k^{2,5}_{0,1}$, \autoref{sec:compcorr}) with the Laplace kernel (\autoref{eq:laplacekernel}). Increasing $\eta$ makes the Laplace kernel reach the correct extreme values, but too curved; while decreasing $\eta$ makes the Laplace kernel straight, but almost constant.}
    \end{figure}

    \paragraph{Subtraction of the intercept from the Laplace kernel}

    Intuitively, it is was possible to keep $\eta$ small, subtract a constant from the Laplace kernel, and then rescale it, it would have the correct global shape. Subtracting positive constants from kernels is not a generally positivity preserving operation, but it turns out it is possible in limited form with the Laplace kernel:

    \begin{theorem}
        If the Laplace kernel $k_L(\mathbf x,\mathbf x')=\exp(-\eta\|\mathbf x-\mathbf x'\|_1/p)$ is restricted to the domain $[0, 1]^p\times [0, 1]^p$, the function $\tilde k_L(\mathbf x,\mathbf x') = k_L(\mathbf x,\mathbf x') - \min k_L$ is a valid kernel. \label{th:laplacesub}
    \end{theorem}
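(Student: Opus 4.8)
The plan is to write $\tilde k_L$ as a positive-definite kernel minus a rank-one term and then show the rank-one term is dominated in the RKHS norm. First I would identify the minimum: on $[0,1]^p$ one has $\|\mathbf x-\mathbf x'\|_1=\sum_i|x_i-x_i'|\le p$ with equality at opposite corners, so $\min k_L=e^{-\eta}$ and $\tilde k_L(\mathbf x,\mathbf x')=k_L(\mathbf x,\mathbf x')-e^{-\eta}$. Also $k_L$ factorizes over coordinates: $k_L(\mathbf x,\mathbf x')=\prod_{i=1}^p k_c(x_i,x_i')$ with $k_c(u,v)=e^{-c|u-v|}$ and $c=\eta/p$, and $\min k_c=e^{-c}$ on $[0,1]^2$.

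The heart of the argument is the one-dimensional statement: for $c>0$, the kernel $k_c(u,v)-e^{-c}$ is p.s.d.\ on $[0,1]^2$. I would prove this through the RKHS of $k_c$. Since $(-\partial_u^2+c^2)k_c(u,v)=2c\,\delta(u-v)$, an integration by parts (using $\partial_u k_c(u,v)\big|_{u=0}=c\,k_c(0,v)$ and $\partial_u k_c(u,v)\big|_{u=1}=-c\,k_c(1,v)$) shows that $\mathcal H_{k_c}=H^1([0,1])$ with inner product
\[
  \langle f,g\rangle_{\mathcal H_{k_c}}=\tfrac1{2c}\Bigl(\int_0^1(f'g'+c^2fg)\,du+c\bigl(f(0)g(0)+f(1)g(1)\bigr)\Bigr),
\]
which one checks reproduces $k_c$. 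In particular the constant function $\mathbf 1$ lies in $\mathcal H_{k_c}$ with $\|\mathbf 1\|_{\mathcal H_{k_c}}^2=\tfrac1{2c}(c^2+2c)=\tfrac c2+1$. Now invoke the standard domination criterion: $k-\psi\otimes\psi$ is p.s.d.\ iff $\psi\in\mathcal H_k$ and $\|\psi\|_{\mathcal H_k}\le1$ (Cauchy--Schwarz in the Mercer basis). Applied to $\psi=e^{-c/2}\mathbf 1$, it gives that $k_c-e^{-c}\mathbf 1\mathbf 1^{\!\top}$ is p.s.d.\ iff $e^{-c}(\tfrac c2+1)\le1$, i.e.\ iff $e^{c}\ge 1+\tfrac c2$, which holds for all $c\ge0$ since $e^c\ge1+c$. (Equivalently, and more elementary to present: realize $k_c$ as the covariance of a unit-variance process $X$ on $[0,1]$, let $L$ be the $L^2$-representer of the constant, and check that $X_u-aL$ has covariance $k_c(u,v)-e^{-c}$ where $a$ is a root of $a^2(\tfrac c2+1)-2a+e^{-c}=0$ — real precisely when $e^c\ge1+\tfrac c2$.)

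To go from one dimension to $p$, I would use the telescoping identity $\prod_{i=1}^p a_i-\prod_{i=1}^p b_i=\sum_{i=1}^p\bigl(\prod_{j<i}b_j\bigr)(a_i-b_i)\bigl(\prod_{j>i}a_j\bigr)$ with $a_i=k_c(x_i,x_i')$ and $b_i=e^{-c}$, giving
\[
  \tilde k_L(\mathbf x,\mathbf x')=\sum_{i=1}^p e^{-c(i-1)}\,\bigl(k_c(x_i,x_i')-e^{-c}\bigr)\,\prod_{j>i}k_c(x_j,x_j').
\]
In each summand $e^{-c(i-1)}>0$, the factor $k_c(x_i,x_i')-e^{-c}$ is p.s.d.\ on $[0,1]^p$ by the one-dimensional lemma, and $\prod_{j>i}k_c(x_j,x_j')$ is a product of Laplace kernels hence p.s.d.; by the Schur product theorem the product is p.s.d., and a sum of p.s.d.\ kernels is p.s.d. (Alternatively one can bypass the telescoping and work directly in the tensor-product RKHS $\mathcal H_{k_L}=\bigotimes_i\mathcal H_{k_c}$, in which $\mathbf 1=\bigotimes_i\mathbf 1$ has squared norm $(\tfrac c2+1)^p$, and the same criterion gives $e^{-\eta}(\tfrac c2+1)^p\le1\iff e^c\ge1+\tfrac c2$.) The only delicate point I anticipate is getting the RKHS inner product of $k_c$ exactly right, including the boundary terms; once that is in hand the rest is routine.
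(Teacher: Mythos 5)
Your proof is correct, and it takes a genuinely different route from the paper's. The paper's argument is two lines: the ``triangular'' kernel $k_0(\mathbf x,\mathbf x') = 1-\|\mathbf x-\mathbf x'\|_1/p$ is p.s.d.\ on $[0,1]^p$, and composing it with $T(x)=e^{\eta x}-1$, a power series with nonnegative coefficients, preserves positive semi-definiteness, giving $e^{-\eta}T(k_0)=k_L-e^{-\eta}$ directly. You instead reduce to one dimension via the product structure of $k_L$, prove the $1$d statement $k_c-e^{-c}\succeq 0$ by exhibiting the $H^1$ inner product of the Ornstein--Uhlenbeck RKHS and applying the domination criterion $\|\psi\|_{\mathcal H_k}\le 1 \Rightarrow k-\psi\otimes\psi\succeq 0$ (reducing to $e^c\ge 1+c/2$), and then reassemble the $p$-dimensional claim with a telescoping identity plus the Schur product theorem. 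I checked the details — the reproducing property of your inner product, the norm $\|\mathbf 1\|^2=c/2+1$, the telescoping decomposition with weights $e^{-c(i-1)}$ — and they all hold. Your route is longer but arguably on firmer ground: the paper's proof leans on the fact that a convergent power series with nonnegative coefficients preserves positive definiteness, for which its own citation only covers polynomials (a gap the paper acknowledges in a margin note), whereas every step of yours is elementary or explicitly verified. Your approach also yields slightly more: the domination criterion shows the subtractable constant in one dimension is $2/(2+c)>e^{-c}$, so the result is not tight, and the tensor-product variant generalizes immediately to anisotropic length scales. The paper's approach, in exchange, is far shorter and makes transparent why the specific combination ``Laplace kernel on a bounded $\ell_1$ ball minus its minimum'' works.
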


    \begin{proof}
        
        The function $k_0(\mathbf x, \mathbf x') = 1 - \|\mathbf x - \mathbf x'\|_1/p$ is pos.\ def.\ for $\mathbf x \in [0, 1]^p$ because it is the sum of the triangular covariance function \autocite[eq.~4.21, first line, p.~88]{rasmussen2006}\marginpar{I'd like a more direct reference than this one.} over each axis. The function $T(x) = e^{\eta x} - 1$ has a Taylor series with positive coefficients, which means that it preserves positivity if applied to the output of a covariance function \autocite[eq.~18.30, p.~682]{murphy2023}.\marginpar{Actually, that reference states it for polynomials. I haven't encountered a place with a proper proof for the series case.} So the function $e^{-\eta} T(k_0(\mathbf x, \mathbf x'))$ is pos.\ def.\, and equal to
        \begin{align}
            e^{-\eta} T(k_0(\mathbf x, \mathbf x')) &=
            e^{-\eta}\left(\exp\left(
                \eta \left(1 - \frac{\|\mathbf x - \mathbf x'\|_1} p\right)
            \right) - 1\right) &= \notag \\
            &= \exp\left(
                -\eta\frac{\|\mathbf x - \mathbf x'\|_1} p
            \right) - e^{-\eta} = \notag \\
            &= k_L(\mathbf x,\mathbf x') - \min_{[0, 1]^p\times [0, 1]^p} k_L.
        \end{align}

    \end{proof}

    I try again to match BART to the Laplace kernel like in the previous section, but this time shifting the Laplace kernel with \autoref{th:laplacesub}. I rescale and shift the kernel, and multiply $\eta$ by $\alpha$:
    \begin{equation}
        k(\mathbf x, \mathbf x')
        = \sigma_0^2 + \sigma^2 \left(\exp\left(
            -\eta\alpha \frac{\|\mathbf x - \mathbf x'\|_1}p
        \right) - e^{-\eta\alpha}\right),
        \label{eq:altkernelsub}
    \end{equation}
    and then impose that \autoref{eq:altkernelsub} match the extremal cases of the BART kernel:
    \begin{align}
        \begin{cases}
            k(\mathbf 0, \mathbf 0) = 1, \\
            k(\mathbf 0, \mathbf 1) = 1 - \alpha
        \end{cases}
        && \implies &&
        \begin{cases}
            \sigma_0^2 + \sigma^2(1 - e^{-\eta\alpha}) = 1, \\
            \sigma_0^2 = 1 - \alpha
        \end{cases}
        && \implies &&
        \begin{cases}
            \sigma^2 = \alpha/(1 - e^{-\eta\alpha}), \\
            \sigma_0^2 = 1 - \alpha.
        \end{cases}
    \end{align}
    Substituting, the approximant kernel turns out
    \begin{equation}
        k(\mathbf x, \mathbf x') =
        1 - \alpha +
        \frac{\alpha}{1 - e^{-\eta\alpha}}
        \left(
            \exp\left( -\eta\alpha\frac{\|\mathbf x-\mathbf x'\|_1}p \right)
            - e^{-\eta\alpha}
        \right). \label{eq:altkernelfinal}
    \end{equation}
    Expanding to first order in $\eta$, this becomes
    \begin{equation}
        k(\mathbf x, \mathbf x') = 1 - \alpha\frac{\|\mathbf x-\mathbf x'\|_1}{p} + o(\eta),
    \end{equation}
    which matches the $\beta\to\infty$ limit of BART. Since BART at default hyperparameters is close to that limit, this kernel might be a good approximation in practice to the BART kernel. I haven't conducted empirical experiments to study this further.

    \paragraph{Other potential approximations to the BART kernel}

    It is probably possible find many other simple kernels that can be adapted to approximate the BART kernel. The first alternative I came up with is
    \begin{align}
        k(\mathbf x, \mathbf x') &= 1 - \alpha + \alpha
        \left( 1 - \frac{\|\mathbf x - \mathbf x'\|_1}{p} \right)^q.
        \label{eq:extq}
    \end{align}
    I was not able to formally prove it is p.s.d.\ for non-integer $q$, but I checked numerically that it seems to be p.s.d.\ for any real $q \ge 1$. Setting $q$ to something slightly above~1 makes the kernel slightly curved like the BART kernel, so it might work as an approximation.

    More in general, it might be fruitful to search for a kernel in the form
    \begin{equation}
        k(\mathbf x, \mathbf x') = 1 - \alpha + \alpha
        f_\theta\left( 1 - \frac{\|\mathbf x - \mathbf x'\|_1}{p} \right),
        \label{eq:generalizedlaplace}
    \end{equation}
    with $f_\theta:[0,1]\to[0,1]$ a family of functions parametrized by $\theta$ such that $f_\theta(0)=0$, $f_\theta(1)=1$, $f_\theta$ yields the identity for some $\theta$, is convex, and preserves positive definiteness. The latter two conditions could be fulfilled by searching amongst functions with nonnegative coefficients in their power series around~0; the simplest example would be polynomials with nonnegative coefficients, scaled to the desired range.

    \section{Extrapolation of the results to GP regression in general}
    \label{sec:extrapolation}

    In this section I speculate about what all the results presented in this work say about how GP regression with arbitrary kernels compares to BART and indirectly to other sophisticate methods.

    \subsection{Bounding the complexity of GP regression}

    Since in practice GP regression almost always includes adapting some free hyperparameters of the kernel, a fair comparison between GP regression and BART requires doing so. However, if arbitrary kernel families are allowed, BART itself counts as a GP regression \autocite[footnote~5, \S5.2, p.~1052]{hahn2020}: since the leaves are a priori Normal, and the sum of Normals is Normal, the BART prior is a GP at fixed tree structures, and so a mixture of GPs with the full prior specification, like any in-practice ``GP'' prior.

    Then, to make a meaningful GPs vs.\ BART comparison, I have to limit what I allow to count as ``GP''. I don't know how to write down a precise mathematical definition that captures the mixtures of GPs allowed, and I don't think it would make sense either; the choice of allowed GP mixtures should reflect the practice and computational methods commonly called ``GP regression'': so there should be a focus on solving linear systems defined by a prior covariance matrix, and optimization or sampling of the hyperparameters of this matrix with explicit (marginal) likelihood evaluations.

    For the purpose of this discussion, I lay down some concrete rules on the kernel that try to capture the above requirement:
    \begin{itemize}
        \item the kernel formula is simple enough to be written in computable form on a few lines at most
        \item the evaluation of the kernel takes at most $O(p)$ time and memory, where $p$ is the number of predictors
        \item at most $O(10)$ ``general'' hyperparameters with a loose prior
        \item at most $O(p)$ ``feature-learning'' hyperparameters with an informative prior
        \item all the hyperparameters have a real interval as domain, and the kernel depends smoothly on them
        \item computing the gradient of the kernel w.r.t.\ the hyparameters is convenient
    \end{itemize}
    This list tries to represent the most typical GP regression application; more special cases would require some adaptation, but to keep things simple I avoid handling them.

    By these rules, BART does not count as GP regression, while the infinite trees limit of BART as I present it does.

    \subsection{Generalization of the BART vs.\ GP-BART case}

    In this work I compared BART with its own infinite trees GP limit, which in this section I'll refer to as ``GP-BART'' \autocite[I avoid doing so elsewhere because the name has already been used with different meaning in][]{maia2022}.

    I showed that in practice GP-BART is overall almost as good as BART. I start from this fact to make an extrapolation about what would happen if one tried to systematically search for the ``best'' GP methodology using BART as reference comparison: by this I mean that the GP method would be scored in terms of how better it does than BART on each dataset, and the goal would be trying to improve some average score over some distribution of datasets representative of the applied usage of BART.

    The argument is as follows. GP-BART is a GP regression method produced by the precise specification of taking the infinite trees limit of BART, without adding pieces to the model or hyperparameters. When trying to maximize a score over some set of options, enlarging the set of options can only increase the maximum score achievable. So considering arbitrary GP methods instead of GP-BART alone should allow to increase the quality of the method. It would be surprising if it was possible to almost match BART with a very specific choice of GP, but then it was \emph{impossible} to close the gap by allowing arbitrary adjustments to the GP method.

    A key step in the above argument is going from ``the score \emph{cannot decrease} if I enlarge the set of methods considered'' to ``the score \emph{probably increases} enough if I enlarge the set of methods''. I think this holds because GP-BART was not chosen specifically to maximize the score, but rather to study the infinite trees limit of BART, which was already suspected not to be the best configuration of BART. So GP-BART is unlikely to already be close to the best achievable, on the grounds that devising good methods is found empirically difficult by researchers and not something that comes for free.

    I realize this argument would be more compelling with an empirical test showing such a GP method, but I decided to cut this work short to dedicate my time to other pursuits. I think searching for a GP kernel and mixture specification yielding a regression method uniformly superior-or-equal to BART is the most promising direction of research indicated by this work. A starting point could be playing with the variants of the Laplace kernel in \autoref{sec:laplacekernel}, Equations~\ref{eq:altkernelfinal}, \ref{eq:extq} and~\ref{eq:generalizedlaplace}.

    \paragraph{Empirical evidence that GP-BART could be improved upon}

    In the analysis of the benchmarks on real data, I show that GP-BART is slightly better on difficult (noisy) datasets, while quite worse on easy (high SNR, predictable) ones (\autoref{sec:difficulty}, \autoref{fig:explore}). Intuitively, improving the performance on the easy datasets should be an easier task. The higher RMSE on low-noise data may be due to the GP prior being too rigid to fit the specific, precisely visible shape of the data; if that's the case, my intuition is that this would be addressed by adding some free hyperparameters.

    A potential counterargument is that the features of low-noise datasets that may be giving a hard time to GP regression are local, like occasional jumps. This kind of feature is more difficult to handle with a simple kernel, even if there are many hyperparameters. I speculate that if this is an issue, it could be addressed by adding what I called ``feature-learning'' hyperparameters that control how predictors are transformed and used in the kernel.

    \subsection{About the intrinsic theoretical limits of GP regression}
    \label{sec:gpbad}

    There are general theorems stating that GP regression has intrinsic limits compared to the class of nonparametric regression methods in general \autocite[see, e.g.,][]{donoho1998,giordano2022,agapiou2024,abraham2023}. These results support the expectation that the infinite trees limit of BART would be, ceteris paribus, a worse method, as confirmed empirically in \autoref{sec:benchmarkmain}. However, to the extent of my knowledge, all these results concern GPs with no free hyperparameters, or possibly with very limited and specific degrees of freedom. Although I recall encountering once a concrete research work with a GP without free hyperparameters, the norm is to have free hyperparameters because it greatly improves the method. Thus, as detailed above, the practical question I care about is how \emph{mixtures} of GPs, with a parametrization of the kernel amenable to the most usual and straightforward computational techniques for GP regression, compare to other methods.

    \section{Code and data}

    The results of this article are reproducible with \textcite{petrillo2024d}. The GP versions of BART and BCF are ready to use in the Python package \texttt{lsqfitgp} \autocite{petrillo2024c}.
        
\end{document}